\crefname{section}{Sec.}{Secs.}
\Crefname{section}{Section}{Sections}
\Crefname{table}{Table}{Tables}
\crefname{table}{Tab.}{Tabs.}
\definecolor{darkgreen}{RGB}{204,102,0}
\newcommand{\rev}[1]{{\color{purple}{#1}}}
\newcommand{\oset}[3][0ex]{%
  \mathrel{\mathop{#3}\limits^{
    \vbox to#1{\kern-2\ex@
    \hbox{$\scriptstyle#2$}\vss}}}}
\newcommand{\optimal}[1]{\oset{\scalebox{.5}{$\star$}}{#1}}
\newcommand{\cA}{\mathcal{A}}
 \newcommand{\cL}{\mathcal{L}}
 \newcommand{\cN}{\mathcal{N}}
 \newcommand{\cY}{\mathcal{Y}}
 \newcommand{\cX}{\mathcal{X}}
\newcommand{\EE}{\mathbb{E}} \newcommand{\RR}{\mathbb{R}}
\newcommand{\optclf}{\optimal{\theta}}
\newcommand{\fairclf}{\theta_{\mathrm{f}}}
\newcommand{\robclf}{\theta^{(\epsilon)}_{\mathrm{r}}}
\newcommand{\naterr}[1]{\cL^{\mathrm{nat}}_{#1}}
\newcommand{\bdyerr}[1]{\cL^{\mathrm{bdy}}_{#1}\left(\epsilon\right)}
\newcommand{\roberr}[1]{\cL^{\mathrm{rob}}_{#1}\left(\epsilon\right)}
\newcommand{\pnDist}{\mu_+-\mu_-}
\newcommand{\Kbound}{B_K}
\newcommand{\newKbound}{\bar{B}_K}
\newcommand{\dist}[2]{\Delta\left(#1,f_{#2}\right)}
\newcommand{\expect}[1]{\EE\left[#1\right]}
\newtheorem{theorem}{Theorem}[section]
\newtheorem{proposition}[theorem]{Proposition}
\newtheorem{corollary}[theorem]{Corollary}
\theoremstyle{definition}
\theoremstyle{remark}
\DeclareMathOperator*{\argmin}{argmin}
\newcommand{\pr}[1]{\operatorname{Pr}\left(#1\right)}
\begin{document}
\title{Fairness Increases Adversarial Vulnerability}

\author{
Cuong Tran\\
Syracuse University\\
{\tt\small cutran@syr.edy}
\and
Keyu Zhu\\
Georgia Tech\\
{\tt\small kzhu67@gatech.edu}
\and
Ferdinando Fioretto\\
Syracuse University\\
{\tt\small ffiorett@syr.edy}
\and
Pascal Van Hentenryck\\
Georgia Tech\\
{\tt\small pvh@isye.gatech.edu}
}

\maketitle

\begin{abstract}
The remarkable performance of deep learning models and their applications in consequential 
domains (e.g., facial recognition) introduces important challenges at the 
intersection of equity and security. Fairness and robustness are two desired notions
often required in learning models. Fairness ensures that models do not disproportionately 
harm (or benefit) some groups over others, while robustness measures the models' 
resilience against small input perturbations. 

This paper shows the existence of a dichotomy between fairness and robustness, and analyzes 
when achieving fairness decreases the model robustness to adversarial samples. 
The reported analysis sheds light on the factors causing such contrasting behavior, 
suggesting that distance to the decision boundary across groups as a key explainer for 
this behavior. Extensive experiments on non-linear models and 
different architectures validate the theoretical findings in multiple vision domains. 
Finally, the paper proposes a simple, yet effective, solution to construct models 
achieving good tradeoffs between fairness and robustness.
\end{abstract}

\section{Introduction}

Data-driven learning systems have become instrumental for decision-making in a variety of consequential 
contests. They include assistance in legal decisions  \cite{jayatilake2021involvement}, lending  \cite{stevens2020explainability}, 
hiring  \cite{schumann2020we}, performing personalized ads targeting  \cite{choi2020identifying}, and providing personalized recommendations \cite{burke2003hybrid}.
As a result, fairness has become a crucial requirement for their successful use and adoption. Various notions of 
fairness drawing from legal and philosophical doctrine have been proposed to ensure that the models errors do not disproportionately affect the decisions of some groups over others \cite{mehrabi2021survey}. In general, fair models attempt at constraining their hypothesis space so that the errors of the reported outcomes are 
distributed uniformly across different protected groups. 

When these fairness constraints are enforced in learning systems, a commonly observed behavior is an overall
degradation of the model accuracy. Thus, a growing body of research has been focusing on striking the right balance between fairness and accuracy \cite{rodolfa2021empirical}. 
This paper shows that fairness may have another important consequence on the deployed models: \emph{a reduction of the model robustness}. This aspect is important as the vulnerability of deep learning models to adversarial 
examples hinders their application in many security-sensitive domains. However, these behaviors are currently
not fully understood and have not received the attention they deserve given the significant equity and security 
consequences they have on the final decisions. 

This paper addresses this important gap and shows that enforcing fairness may negatively affect the robustness of a model. In particular, the paper makes the following  contributions: (1) it analyzes when and why fairness and robustness may be misaligned in their objectives, (2) it provides an analysis on the relationship between fair, robust, and "natural" (e.g., non-fair non-robust) models, and (3) it identifies \emph{the distance to the decision boundary} as a key aspect linking fairness and robustness. Moreover, (4) the paper shows how the distance to the decision boundary  can explain the increase of adversarial vulnerability of fair models, providing extensive experiments and validation over a variety of vision tasks and architectures, and verifying the presence of the fairness/robustness dichotomy for multiple techniques aimed at achieving fairness and measuring robustness. Finally, (5) building from the reported theoretical observations, the paper also proposes a simple, yet effective, strategy to find a good tradeoff between accuracy, fairness, and robustness. 

To the best of the authors' knowledge, this is is the first work showing that enforcing fairness may negatively affect robustness. The results show that, without careful considerations, inducing a desired equity property on a learning task may create significant security challenges. These results should not be read as an endorsement to avoid constructing fairer or safer models; rather it should be understood as a call for additional research at the intersection of fairness and robustness to achieve appropriate tradeoffs.

\section{Related work}
\label{sec:related_work}

\noindent\textbf{Fairness.}
Models that learns from rich datasets have been shown to carry over 
bias which may induce to disproportionally harm some groups of individuals (often identified by their race or gender) over others \cite{buolamwini2018gender,cuong_neurips22,krishnan2020understanding}.
These observations have resulted in a whole new research area that has focused on defining, analyzing, and mitigating unfairness \cite{friedler2019comparative,xu2021consistent}.
The source of the observed unfairness has been often attributed to data properties \cite{subramanian2021fairness,caton2020fairness,mehrabi2021survey} or different aspects of the model's properties \cite{sukthanker2022importance,NEURIPS2021_e7e8f8e5}. For example, imbalance in groups' size is commonly argued to create disparities in the task's performance \cite{mehrabi2021survey}. It has also been shown that 
% whenmodel's training is performed to enforce
constraining the model's hypothesis space to satisfy privacy \cite{bagdasaryan2019differential}, sparsity \cite{hooker2019selective,hooker2020characterising}, or robustness \cite{xu2021robust,nanda2021fairness} can result in disparate outcomes. 

\noindent\textbf{Robustness.}
Deep Neural Networks (DNNs) have been shown to be susceptible to carefully crafted adversarial perturbations which—imperceptible to a human—result in a misclassification by the model \cite{szegedy2013intriguing}.
The literature on the topic attributes the reason for such behavior to arise to three key factors: data properties, network architecture, and model training. 
For example, \cite{goodfellow2014explaining,shamir2019simple}  observed that the input dimension plays a decisive role in the robustness of a model, with larger inputs yielding more brittle models. Likewise, the ability of some architectures to capture high frequency spectrum of image data (that are almost imperceptible to a human) relates to the success of devising adversarial examples in the underlying models \cite{wang2020high}. %It has also been argued that the vulnerability to adversarial examples is due to the existence of predictive features that are not robust \cite{ilyas2019adversarial}. 
With respect to the network architecture, it has been observed that some network architectures may render the underlying model more or less brittle to adversarial inputs. For example, batch-normalization is cited to the models' robustness \cite{galloway2019batch}, some shift-invariant architectures, such as CNNs, are more vulnerable to adversarial inputs compared to other architectures, e.g., ~fully connected networks. %Schmidt et al.~\cite{schmidt2018adversarially}  also shows that the sample complexity needed to train a robust model may be higher than that needed to train a standard counterpart. 
Finally, Yao et al.~\cite{yao2018hessian} empirically observed that model trained with large batch-sizes can be more easily fooled by adversarial inputs when compared to models trained with smaller batch sizes. 

\noindent\textbf{Robustness and fairness.}
This work lies in the intersection of fairness and robustness. 
Within this context, 
recently Xu et al~\cite{xu2021robust} has shown that adversarially robust models exhibit remarkable disparity of natural accuracy and robust accuracy metrics among different classes, compared to those exhibited by their standard counterpart. 
Khani and Liang \cite{khani2020feature} analyze why noise in features can cause disparity in error rates when learning a regression.

We believe this is the first work to show that enforcing fairness may negatively affect a model's robustness and hope that this result will lead to further strengthening the interconnection of these two important machine learning areas.

\begin{figure*}[!t]
\centering
\hspace{16pt}\includegraphics[width=16cm]{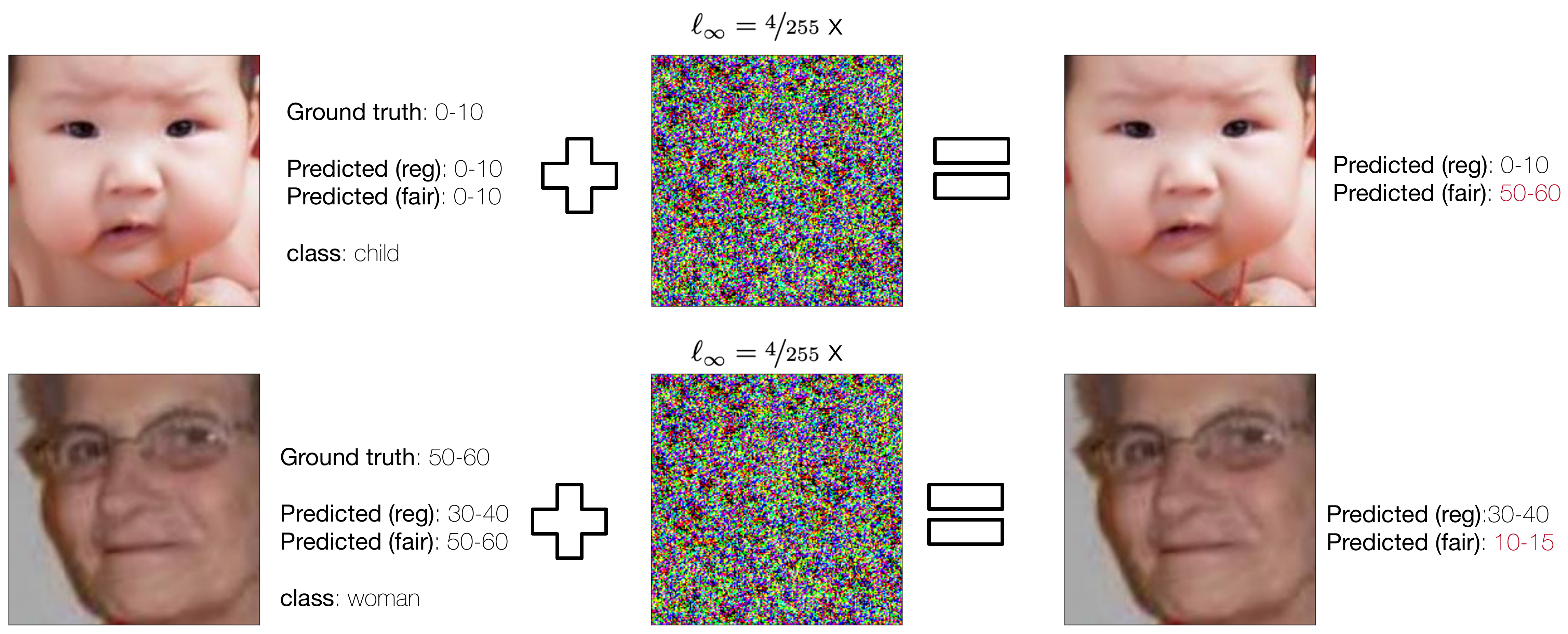}
\caption{An example of robustness loss in the UTKFace dataset. 
A regular (reg) and a fair models are trained to predict age group from faces and exposed to adversarial examples generated under an RFGSM \cite{tramer2017ensemble} attack. 
The predictions of the regular model do not change under adversarial examples (regardless of their original correctness), while the fair models decision change in the presence of adversarial noise.
\label{fig:motiv}}
\end{figure*}

%%%%%%%%%%%%%%%%%%%%%%%%%%%%%%%%%%%%%%%%%%%%%%%%%%%%%%%%%%%%%%%%%%%%%%
\section{Problem Settings}
\label{sec:settings}
%%%%%%%%%%%%%%%%%%%%%%%%%%%%%%%%%%%%%%%%%%%%%%%%%%%%%%%%%%%%%%%%%%%%%%

The paper considers a typical multi-class classification problem, 
whose input is a dataset $D$ consisting of $n$ data points $(X_i, A_i, Y_i)$, each of which drawn i.i.d.~from an unknown distribution $\Pi$ and where
$X_i \in \cX$ is a feature vector, $A_i \in \cA$ is a protected attribute, and 
$Y_i \in \cY = [C]$ is a label, with $C$ being the number of possible class labels.
For example, consider the case of a classifier to predict the age range
of an individual. The features $X_i$ may describe the pixels 
associated with the individual headshot and their demographics, the protected 
attribute $A_i$ may describe the individual gender or ethnicity, and $Y_i$ represents the age range. 
The goal is to learn a classifier $f_\theta: \cX \to \cY$, where $\theta$ is a vector of 
real-valued parameters. The model quality is assessed in terms of a non-negative loss function
$\ell: \cY \times \cY \to \mathbb{R}_+$, and the training aims at minimizing the empirical
risk function:
\begin{equation}\label{eq:erm}
    \optclf\, = \underset{\theta}{\argmin}~\cL_{\theta}(D)  
    \left( =\frac{1}{n} \sum_{i=1}^n \ell(f_{\theta}(X_i), Y_i)\right) 
\end{equation}

\noindent
For a group $a \in \cA$, notation $D_a$ is used to denote the subset of $D$ containing
exclusively samples $i$ with $A_i=a$. 
Importantly, the paper assumes that the attribute $A$ is not part of the model input during inference.
The paper focuses on learning classifiers that satisfy group fairness (to be defined shortly) and on analyzing the robustness impact of fairness. 

%%%%%%%%%%%%%%%%%%%%%%%%%%%%%%%%%%%%%%%%%%%%%%%%%%%%%%%%%%%%%%%%%%%%%%
\section{Preliminaries}
\label{sec:preliminaries}
%%%%%%%%%%%%%%%%%%%%%%%%%%%%%%%%%%%%%%%%%%%%%%%%%%%%%%%%%%%%%%%%%%%%%%
% This section reviews the fairness and robustness notions adopted in this work. 

%%%%%%%%%%%%%%%%%%%%%%%%%%%%%%%%%%%%%%%%%%%%%%%%%%%%%%%%%%%%%%%%%%%%%%
\subsection{Fairness and fair learning}
\label{sec:fair_models}
This paper considers a classifier $f$ satisfying accuracy parity \cite{zhao2019inherent}, a group fairness 
notion commonly adopted in machine learning requiring model misclassification 
rates to be conditionally independent of the protected attribute. That is, 
$\forall  (X,Y,A) \sim \Pi$ and $ \forall a \in \cA$,
\begin{equation}
\label{eq:acc_parity1}
    \left| 
    \Pr\left( f_\theta(X) \neq Y \;\vert\; A = a \right) - 
    \Pr \left( f_\theta(X) \neq Y \right) \right| \leq \alpha,
\end{equation}
\noindent
where $\alpha$ denotes the allowed \emph{fairness violation}. In practice, the above 
is expressed as a difference of empirical expectations of the group and population 
misclassification rates. That is, $\forall a \in \cA$:
\begin{equation*}
%\label{eq:acc_parity2}
    \left|
    \nicefrac{1}{|D_a|} \!\!\!\!\!\!\!\!\! \sum_{(X,A,Y) \in D_a} \!\!\!\!\!\!\!  \mathbbm{1}\{f_\theta(X) \neq Y\}  - 
    \nicefrac{1}{n} \!\! \!\!\!\!\!\!\! \sum_{(X,A,Y) \in D} \!\!\!\!\!\!\!  \mathbbm{1}\{f_\theta(X) \neq Y\}  
    \right| \leq \alpha.
\end{equation*}

% \begin{multline*}
%     \xi_{\theta}(D) = \\\max_{c \in [C]}~\left\vert \mathbb{E}\left[ \mathbbm{1}\{ f_{\theta}(X) = Y\} \mid Y=C\right] - 
%     \mathbb{E}\left[ \mathbbm{1}\{ f_{\theta}(X) = Y\}\right]\right\vert
% \end{multline*}

\noindent
Several approaches have been proposed in the literature to encourage the satisfaction of accuracy 
parity. They can be summarized in methods that use penalty terms into the empirical risk loss function 
to capture the fairness violations, and those which minimize the maximum group loss. 
{The core of the paper focuses on the first set of methods; the analysis for the second set is presented in Appendix \ref{app:fairness_defs}}.
% (1) Fair-Reg which adds a fairness constraint into the classification loss and (2) q-learn which minimizes the maximum class loss. 

\noindent\textbf{Penalty-based methods.}
In this category, the model loss function (Equation  \eqref{eq:erm}) is augmented with 
penalty fairness constraint terms \cite{agarwal2018reductions,cuong_aaai21} as follows:
\begin{align}
    \fairclf(\lambda) &=\underset{\theta}{\argmin}~\cL_{\theta}(D) + 
                          \lambda \left( \sum_{a \in \cA } \!\!\vert\cL_{\theta}(D_a) -  \cL_{\theta}(D) \vert
                          \right)
    \label{eqn:fair_reg}
\end{align}
where $\cL_{\theta}(D_a) = \nicefrac{1}{|D_a|} \sum_{(X,A,Y) \in D_a} \ell(f_{\theta}(X), Y)$  is the 
empirical risk loss associated with protected group $a \in \cA$. In addition, $\lambda > 0$ is the fairness penalty parameter that enforces a tradeoff between fairness and accuracy.

\iffalse
\noindent\textbf{Group-loss focused methods.} 
\textcolor{red}{We should considering to move this to Appendix.}

\textcolor{blue}{If you move this to the appendix, then there are sentences in the Experiments that you need to remove. I prefer not to remove it (assuming you have experiments on this) as it showcase generality, but I see that the theoretical analysis is all done on the penalty-based method. }

Methods in the second category, force the training to focus on the loss component of worst performing groups. An effective method to achieve this goal was proposed in \cite{tian_li}:
\begin{align}
    \fairclf &= \underset{\theta}{\argmin}~ \sum_{a \in \cA}\frac{1}{q+1}\cL_{\theta}(D_a)^{q+1},
    \label{eqn:q-learn}
\end{align}
where $q$ is a non-negative constant. Obviously, the intuition behind powering the loss by positive number $q+1$ is to penalize more the classes that have the larger losses. Thus, $q$ plays the role of the fairness parameter, like $\lambda$ in penalty-based methods: larger $q$ or $\lambda$ values are associated with fairer (but also often less accurate) models. 
\fi

%%%%%%%%%%%%%%%%%%%%%%%%%%%%%%%%%%%%%%%%%%%%%%%%%%%%%%%%%%%%%%%%%%%%%%%%%%%%%%%%%%%
\subsection{Robustness and Robust Learning}
\label{subsec:rob_train}

This paper analyzes the effect of enforcing fairness on adversarial robustness, a key property of trustworthy machine-learning systems. In this work, and following robust learning conventions, the robustness of a model $f$ is measured in terms of the \emph{robust error}:
\begin{equation}
\label{eq:rob_error}
    \roberr{\theta} = \pr{\exists \tau, ~\| \tau\|_p \leq \epsilon, f_{\theta}(X +\tau  ) \neq Y},
\end{equation}
which measures the sensitivity of the model errors to small input perturbations 
$\| \tau\|_p \leq \epsilon$ in $\ell_p$ norms, with $p$ often considered in $\{0, 1, 2, \infty\}$.
The robust error can be decomposed into two components \cite{zhang2019theoretically}:
\begin{equation}
\label{eq:robloss}
\roberr{\theta} =   \naterr{\theta}  + \bdyerr{\theta},
\end{equation}
where the first denotes the \emph{natural error} and the second the \emph{boundary error}.
The natural error measures the standard model performance when exposed to \emph{unperturbed} samples $(X, A, Y)$:
\begin{equation}
\label{eq:nat_error}
    \naterr{\theta} = \pr{ f_{\theta}(X) \neq Y},
\end{equation}
whose empirical version is defined in Equation \eqref{eq:erm} {with a 0/1 loss function}.
The boundary error measures the probability that the model predictions change on \emph{perturbed} samples ($X+\|\tau\|_p, A, Y)$:
\begin{align}
    \bdyerr{\theta} =
    \operatorname{Pr}& \big(\exists \|\tau \|_p \leq \epsilon, ~f_{\theta}(X +\tau) \neq f_{\theta}
    (X), \notag\\
    &f_{\theta}(X)=Y \big).
\end{align}
\noindent
The boundary error implicitly introduces a notion of {\em decision boundary} and a {\em distance between an input sample and this decision boundary}. For instance, in linear classifiers, the decision boundary is represented by an hyperplane. The distance of a sample $X$ to the decision boundary for a classifier $f_\theta$ can be formalized as 
\[
\Delta(X,f_{\theta}) = \max \epsilon \ \mbox{s.t.~} f_{\theta}(X+\tau) = f_{\theta}(X) ,\ \forall \| \tau\| \leq \epsilon.
\]
Samples close to the decision boundary will be less tolerant to noise than those lying far from it. The analysis in this paper regarding the impact of fairness on robustness is based on this concept. In particular, the results  show that imposing fairness constraints may reduce the distance to the decision boundary of the samples $(X,A,Y) \sim \Pi$.

%%%%%%%%%%%%%%%%%%%%%%%%%%%%%%%%%%%%%%%%%%%%%%%%%%%%%%%%%%%%
\section{Real-World Implications }
%%%%%%%%%%%%%%%%%%%%%%%%%%%%%%%%%%%%%%%%%%%%%%%%%%%%%%%%%%%%
Prior diving into the analysis, the paper provides an example showing how robustness errors can be exacerbated when a image classifier is trained to satisfy fairness. Deep neural networks have been used in many real-world applications, including image facial recognition and object detection. When perturbations (either due to noise or by malicious adversaries) are introduced in the model inputs, they may cause harmful effects as they lead the classifier to misclassify targeted inputs. 

Figure \ref{fig:motiv} shows an example of inputs from the UTKFace  dataset where a classifier is trained to minimize the regular empirical risk loss of equation \eqref{eq:erm} (top) or the fair empirical risk loss of equation \eqref{eqn:fair_reg} (bottom). Both inputs are perturbed with the same amount of $\ell_\infty$ noise, but the fair network is much more brittle than its regular counterpart, inducing errors in the classifier outputs. It is important to note that this paper uses datasets such as UTKFace (described in detail in Appendix \ref{app:datasets}) only to demonstrate the effects of fairness to robustness. As noted in previous works, the very task of predicting gender, race, or other characteristics from a person face is flawed and raises deep ethical concerns \cite{raji2020saving}.

 \begin{figure}[t]
 \centering
\includegraphics[width=0.7\linewidth,]{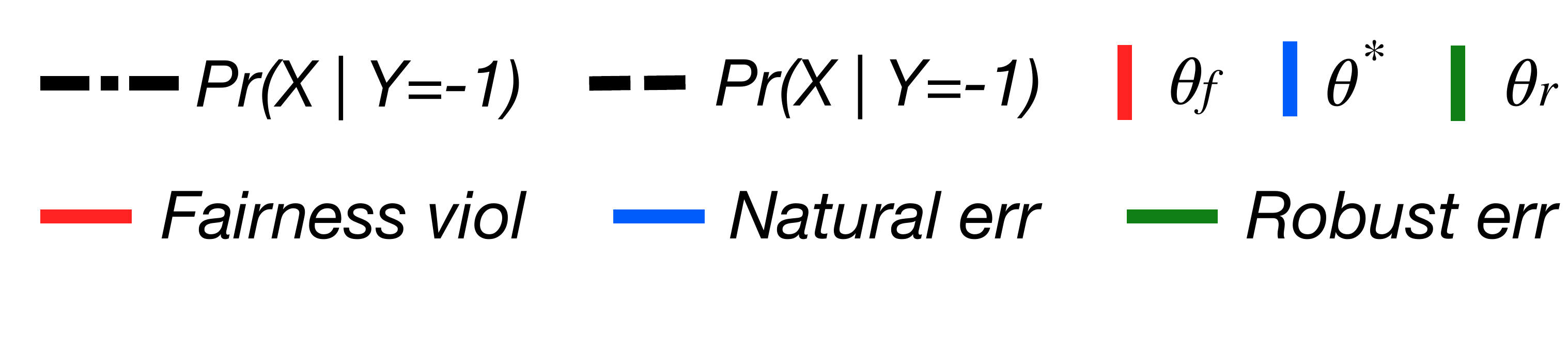}\\[-10pt]
\includegraphics[width=0.48\linewidth,height=130pt,valign=M]{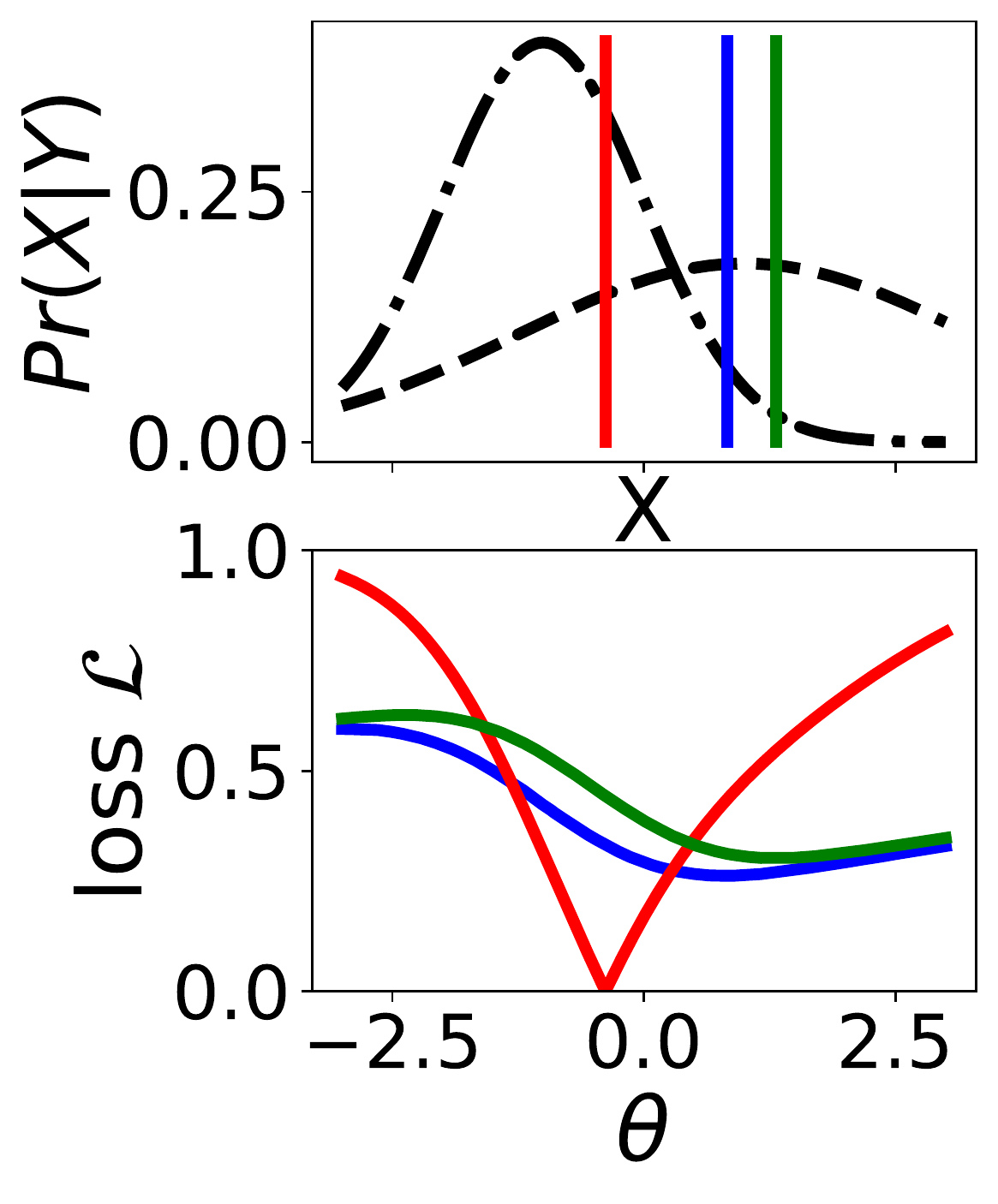}
\includegraphics[width=0.48\linewidth,height=130pt,valign=M]{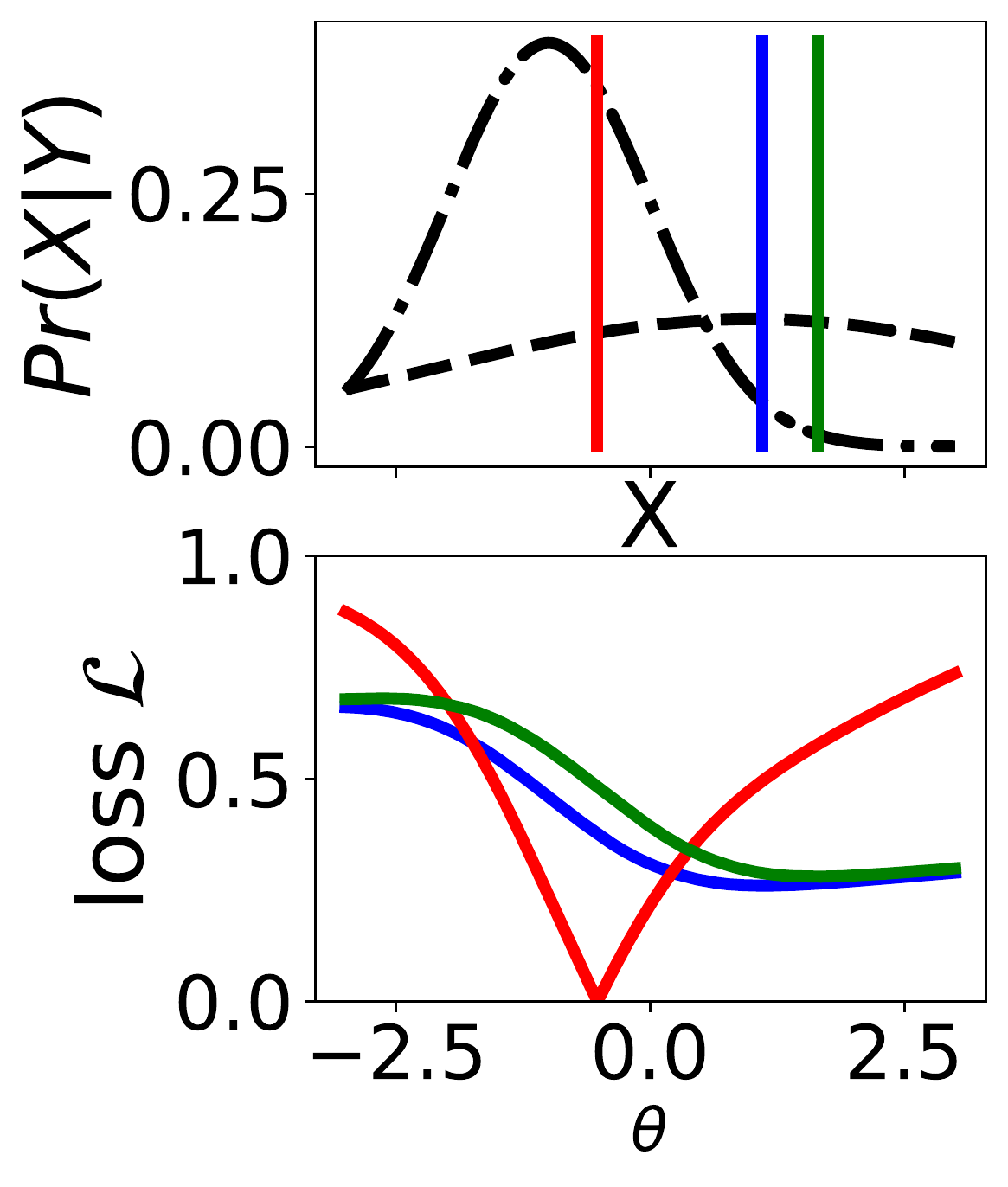}
\caption{Illustration of optimal natural ${\theta^*}$, fair $\theta_f$, and robust $\theta_{r}$ classifiers for $K=5$ (left) and $K=10$ (right) with $\mu_-=-1$ and $\mu_+=1$.
}
\label{fig:example}
\end{figure}

%%%%%%%%%%%%%%%%%%%%%%%%%%%%%%%%%%%%%%%%%%%%%%%%%%%%%%%%%%%%%%%%%%%%%%%%%%%%%%%%%%%%%%%%%%%%
\section{Why Fairness Weakens Robustness?}
\label{sec:theory}
%%%%%%%%%%%%%%%%%%%%%%%%%%%%%%%%%%%%%%%%%%%%%%%%%%%%%%%%%%%%%%%%%%%%%%%%%%%%%%%%%%%%%%%%%%%%
This section presents the main results of the paper. It will show that fairness affects model robustness because the learned decision boundary is {\em pulled in opposite directions} by fair and robust models. \textbf{} 
To render the analysis tractable, the theoretical discussion focuses on linear classifiers, and more specifically on learning a mixture of Gaussians with a linear classifiers. In addition, Section \ref{sec:experiments} will show that a similar phenomenon occurs in large non-linear models. 
%%%%%%%%%% Duplicate Definition %%%%%%%%%%%%%%%
% For linear models of the form $f_{\theta}(X) =\theta^\top X + b$ with $b \in \RR$, the \emph{distance to the 
% decision boundary} is implemented as:
% \begin{equation}\label{eq:dist}
%     % \dist{X}{\theta} = \frac{|\theta^\top X + b | }{\| \theta \|}.
%     \begin{aligned}
%        \dist{X}{\theta} =\max~&\epsilon\\
%        \text{s.t.}~& f_{\theta}(X+\tau) = f_{\theta}(X)\,, & \forall~\lVert\tau\rVert\leq \epsilon\,.
%     \end{aligned}
% \end{equation}
\noindent
This section assumes that $\cA = \cY$, i.e., the protected attribute is also the output of the classifier, again to simplify exposition. All proofs are given in Appendix \ref{app:proofs}. 

%% Building theo. intutions to explain the factors that control fairness and robustness. We are using mix Gaus but the results will extend empirically on general classifiers. 
%% Fig. 1 here and exaplain it.

\subsubsection*{Optimal Models for Mixtures of Gaussians}
Consider a binary classification setting  (i.e., $\cY = \{ -1, 1\}$) with data drawn from a 
mixture of Gaussian distributions, so that $\pr{X\mid Y=-1} \!\propto\! \cN(\mu_-, 1)$ and 
$\pr{X\mid Y=1} \!\propto\! \cN(\mu_+,K^2)$, with $\mu_-<\mu_+$ and different variances ($K>1$).  
The analysis can be easily extended to higher-dimensional cases, but these non-restrictive assumptions help simplifying and clarifying exposition. 
An illustration of this setting is reported in Figure \ref{fig:example} (top) where the data distributions are highlighted with black dashed curves. 

%\keyu{It might be worth mentioning that the analysis here can be easily extended or generalized to the $n$-dimensional case.}
% \keyu{In the theoretical studies I have done, I assume that $X\mid Y=1$
% follows a normal distribution $\cN(1,K^2)$ rather than $\cN(1,K)$. If the assumption I made
% is taken, Figure \ref{fig:example} should be modified accordingly.
% For the moment, I will stick with $\cN(\mu_+, K^2)$.}

The following analysis poses no restrictions on the relative subgroup sizes 
$|D_{1}|$ and $|D_{-1}|$ and focuses on the less-restrictive \emph{balanced} data setting, in which data samples from different protected groups are equally likely. 
% In addition, this section studies the balanced setting where 
% both classes are equally likely to show up, i.e., $\pr{Y=-1} = \pr{Y=1} = 1/2$.
%In this scenario where feature vectors are one-dimensional, this work 

The paper studies a family of parametric classifiers $\{f_{\theta}\}_{\theta}$ with $\theta \in[\mu_-,\mu_+] \subseteq \mathbb{R}$, where $f_{\theta}(X) = \mathbbm{1}\{ X > \theta \}$ denotes the classification output of the classifier. 
The optimal models with respect to the natural, fair, and robust losses can be specified as follows:
\\
$\bullet$ \textbf{Optimal natural model $\left(f_{\optclf}\right)$.} It is the Bayes classifier which minimizes the natural classification error as defined in Equation \eqref{eq:erm}.
In Figure \ref{fig:example} (top), this classifier is represented by vertical blue lines.
\\    
$\bullet$ \textbf{Optimal fair model $\left(f_{\fairclf}\right)$.} Intuitively, this classifier is $\fairclf(\infty)$ as defined in Equation \eqref{eqn:fair_reg}. Formally speaking, this classifier minimizes a lexicographic function whose first component is $\left\vert \sum_{a \in \cA } \!\!\left(\cL_{\theta}(D_a) -  \cL_{\theta}(D) \right) 
                          \right\vert$ and second component is $\cL_{\theta}(D)$. 
In Figure \ref{fig:example} (top), this classifier is represented by vertical red lines.
\\
$\bullet$ \textbf{Optimal robust model $(f_{\robclf})$.} This classifier minimizes the robust classification 
error in Equation \eqref{eq:robloss}, for a given $\epsilon$. 
In Figure \ref{fig:example} (top), it is depicted by vertical green lines.
    
\subsubsection*{Relationships Between the Optimal Models}
The next result characterizes the positional relationship among the three optimal models  mentioned above, which can be observed in Figure \ref{fig:example}.

\begin{theorem}\label{prop:three_clfs}
    For any $\epsilon\in\left[0,\frac{\pnDist}{2}\right]$
    and $K\in\left(1,\Kbound\right]$, where $\Kbound= 
        \min\left\{\exp\left(\frac{(\pnDist-2\epsilon)^2}{2}\right),\frac{\pnDist}{\epsilon}-1\right\},
    $
    \begin{equation}\label{eq:hierarchy}
    \mu_-+\epsilon\leq\fairclf \leq\; \optclf \;\leq \robclf\leq \mu_+-\epsilon\,.
    \end{equation}
    Besides, $\robclf$ is an increasing function of $\epsilon$
    over $\left[0,\frac{\pnDist}{2}\right]$.
\end{theorem}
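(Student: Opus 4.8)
The plan is to obtain explicit (or implicitly characterized) forms for all three thresholds and then compare them directly. Writing $\phi$ and $\Phi$ for the standard normal density and CDF and using the balanced-class assumption, the per-group $0/1$ errors are $\cL_{\theta}(D_{-1})=\Phi(\mu_--\theta)$ and $\cL_{\theta}(D_{+1})=\Phi\!\left(\tfrac{\theta-\mu_+}{K}\right)$, so $\naterr{\theta}=\tfrac12\bigl[\Phi(\mu_--\theta)+\Phi(\tfrac{\theta-\mu_+}{K})\bigr]$. The first observation I would record is that a perturbation of size $\epsilon$ can only push a sample toward the boundary, so a short computation gives $\roberr{\theta}=\tfrac12\bigl[\Phi(\mu_-+\epsilon-\theta)+\Phi(\tfrac{\theta+\epsilon-\mu_+}{K})\bigr]$; that is, the robust loss is exactly the natural loss of the \emph{shrunk} instance with means $\mu_-+\epsilon$ and $\mu_+-\epsilon$. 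This explains why the budget enters symmetrically and lets me treat $\optclf$ as the $\epsilon=0$ case of $\robclf$.

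Next I would pin down $\fairclf$. With two groups the penalty collapses to $\lvert\cL_{\theta}(D_{-1})-\cL_{\theta}(D_{+1})\rvert$, whose unique zero is found by equating the group errors; strict monotonicity of $\Phi$ gives the closed form $\fairclf=\frac{K\mu_-+\mu_+}{K+1}$. The two outer inequalities of \eqref{eq:hierarchy} then correspond to the two branches of $\Kbound$. Requiring $\fairclf\geq\mu_-+\epsilon$ rearranges exactly to $K\leq\frac{\pnDist}{\epsilon}-1$. For $\robclf\leq\mu_+-\epsilon$ I would evaluate the sign of $\frac{d}{d\theta}\roberr{\theta}$ at $\theta=\mu_+-\epsilon$: its first-order term is $\phi(\mu_-+2\epsilon-\mu_+)-\tfrac1K\phi(0)$, which is nonpositive precisely when $(\pnDist-2\epsilon)^2\geq 2\ln K$, i.e.\ under the branch $K\leq\exp\!\bigl(\tfrac{(\pnDist-2\epsilon)^2}{2}\bigr)$, forcing the minimizer to lie to the left of $\mu_+-\epsilon$.

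The two interior inequalities I would get from unimodality plus one sign check. Define $g(\theta)=\phi(\mu_--\theta)-\tfrac1K\phi(\tfrac{\theta-\mu_+}{K})$, so that $\frac{d}{d\theta}\naterr{\theta}=-\tfrac12 g(\theta)$; on $[\mu_-,\mu_+]$ each term of $g$ is monotone, hence $g$ is strictly decreasing, $\naterr{\theta}$ is unimodal, and its unique root $\optclf$ is the minimizer (and lies in $[\mu_-,\mu_+]$ thanks again to the $K$-bound). Evaluating at the fair threshold, both arguments of $g$ collapse to $-t$ with $t=\frac{\pnDist}{K+1}$, giving $g(\fairclf)=\phi(t)\tfrac{K-1}{K}>0$; monotonicity of $g$ then yields $\fairclf\leq\optclf$. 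The inequality $\optclf\leq\robclf$ I would fold into the monotonicity claim, since $\robclf$ at $\epsilon=0$ equals $\optclf$.

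Finally, for monotonicity of $\robclf$ I would apply the implicit function theorem to the robust first-order condition $\phi(a)=\tfrac1K\phi(b)$, with $a=\mu_-+\epsilon-\robclf\leq0$ and $b=\tfrac{\robclf+\epsilon-\mu_+}{K}\leq0$ on the established interval. Differentiating and substituting $\phi(b)=K\phi(a)$ from the condition itself collapses the ratio to $\frac{d\robclf}{d\epsilon}=\frac{\lvert a\rvert-\lvert b\rvert/K}{\lvert a\rvert+\lvert b\rvert/K}$. The crux is that taking logarithms in $\phi(a)=\tfrac1K\phi(b)$ gives $a^2-b^2=2\ln K>0$, so $\lvert a\rvert>\lvert b\rvert\geq\lvert b\rvert/K$ and the numerator is strictly positive. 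I expect this last step --- recognizing that the messy IFT ratio is governed entirely by the identity $a^2-b^2=2\ln K$ --- to be the main obstacle; the remainder is bookkeeping with monotone functions and the two explicit $K$-thresholds.
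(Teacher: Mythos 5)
Your proof is correct, and it takes a genuinely different route from the paper's, which is almost entirely computational. The paper solves the first-order conditions in closed form for all three thresholds (the quadratic equations for $\optclf$ and $\robclf$, and the error-equalization equation for $\fairclf$), obtains $\fairclf\leq\optclf$ by algebraically comparing the resulting formulas, gets $\robclf\leq\mu_+-\epsilon$ by manipulating the explicit expression for $\robclf$, and proves monotonicity in $\epsilon$ by differentiating that expression and lower-bounding the derivative by $\frac{K-1}{K+1}>0$. You avoid the closed forms for $\optclf$ and $\robclf$ altogether: your reduction of $\roberr{\theta}$ to the natural loss of the ``shrunk'' instance with means $\mu_-+\epsilon$ and $\mu_+-\epsilon$ transfers the unimodality argument wholesale; the collapse of both arguments of $g$ to $-\frac{\pnDist}{K+1}$ at $\fairclf$ gives $g(\fairclf)=\phi\!\left(\frac{\pnDist}{K+1}\right)\frac{K-1}{K}>0$ and hence $\fairclf\leq\optclf$ by single-crossing; and the implicit-function-theorem computation, combined with the identity $a^2-b^2=2\ln K$ extracted from the first-order condition, yields monotonicity in $\epsilon$. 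Notably, both arguments invoke the two branches of $\Kbound$ in exactly the same places (the $\exp$ branch for $\robclf\leq\mu_+-\epsilon$, the $\frac{\pnDist}{\epsilon}-1$ branch for $\mu_-+\epsilon\leq\fairclf$). What the paper's route buys is the explicit formulas themselves, which are stated in the main text and reused later (e.g., in the proof of Theorem~\ref{thm:6.5}); what your route buys is lighter algebra and a more transparent reason for each inequality. Two small points to tighten. First, your IFT sign argument requires $a\leq 0$, i.e.\ $\robclf\geq\mu_-+\epsilon$, which is not one of the two ``outer'' inequalities you established; it does follow from your shrunk-instance reduction (the derivative of the robust loss at $\mu_-+\epsilon$ is $\frac12\bigl(-\phi(0)+\frac1K\phi(\cdot)\bigr)<0$ for $K>1$), but it should be said. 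Second, both you and the paper implicitly use unimodality of the robust loss over all of $[\mu_-,\mu_+]$, while your reduction only delivers it on $[\mu_-+\epsilon,\mu_+-\epsilon]$; ruling out a minimizer in $[\mu_-,\mu_-+\epsilon)$ amounts to checking that the robust-loss derivative is negative at $\mu_-$, which holds because $K\leq\frac{\pnDist}{\epsilon}-1$ gives $\mu_+-\mu_--\epsilon\geq K\epsilon$ and hence $\epsilon^2-\frac{(\pnDist-\epsilon)^2}{K^2}\leq 0<2\ln K$. The paper glosses over the same point, so this is a shared, easily repaired omission rather than a defect of your approach.
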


\noindent
The result follows from the observation that
the optimal natural model $f_{\optclf}$ can be expressed as
\begin{multline*}
    \optclf=\mu_--\frac{\pnDist}{K^2-1}+\\
        \frac{K}{K^2-1}\sqrt{2(K^2-1)\ln(K)+(\pnDist)^2}\,;
\end{multline*}
the fair classifier $f_{\fairclf}$ as:
\begin{equation*}
        \fairclf =\mu_-+ \frac{\pnDist}{K+1}\,
\end{equation*}
and the robust classifier $f_{\robclf}$ as
\begin{multline*}
    \robclf=\mu_--\frac{\pnDist-(K^2+1)\epsilon}{K^2-1}+\\
    \frac{K}{K^2-1}\sqrt{2(K^2-1)\ln(K)+
    (\pnDist-2\epsilon)^2}\,.
\end{multline*}

\noindent
From the result above, it follows that 
{\bf (1)} {\em the fair classifier achieves the largest robust error while the robust classifier results in the least error}, 
and
{\bf (2)} {\em the fair classifier achieves the largest boundary error while the robust classifier results in the smallest boundary error}, as expressed by the following Corollaries.

\begin{corollary}
For any $\epsilon\in\left[0,\frac{\pnDist}{2}\right]$ and $K\in\left(1, \Kbound\right]$, 
\begin{equation*}
    \roberr{\fairclf}\geq \roberr{\optclf}\geq\roberr{\robclf}.
\end{equation*}
\end{corollary}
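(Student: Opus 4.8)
The plan is to reduce the corollary to a single structural fact: the robust error $\roberr{\theta}$, viewed as a function of the scalar threshold $\theta$, is monotonically decreasing on the interval $\left[\mu_-+\epsilon,\robclf\right]$. Combined with the positional ordering $\fairclf\leq\optclf\leq\robclf$ supplied by Theorem~\ref{prop:three_clfs}, this immediately yields $\roberr{\fairclf}\geq\roberr{\optclf}\geq\roberr{\robclf}$. The second inequality is in fact trivial, since $\robclf$ is by definition the minimizer of $\roberr{\cdot}$; the content lies in showing that the fair threshold, sitting farthest from $\robclf$ on the decreasing branch, incurs the largest robust error.

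First I would write $\roberr{\theta}$ in closed form. For the threshold rule $f_\theta(X)=\mathbbm{1}\{X>\theta\}$, a positive sample ($Y=1$, $X\sim\cN(\mu_+,K^2)$) admits an adversarial flip iff $X\leq\theta+\epsilon$, while a negative sample ($Y=-1$, $X\sim\cN(\mu_-,1)$) admits a flip iff $X>\theta-\epsilon$. Under the balanced setting this gives
\begin{equation*}
\roberr{\theta}=\tfrac12\,\Phi\!\left(\tfrac{\theta+\epsilon-\mu_+}{K}\right)+\tfrac12\,\Phi\!\left(\mu_--\theta+\epsilon\right),
\end{equation*}
where $\Phi$ is the standard normal CDF (the two terms are precisely $\naterr{\theta}+\bdyerr{\theta}$ after the decomposition of Equation~\eqref{eq:robloss}).

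Next I would establish that this function is convex on $\left[\mu_-+\epsilon,\mu_+-\epsilon\right]$, the interval containing all three thresholds by Theorem~\ref{prop:three_clfs}. Differentiating gives $\tfrac{d}{d\theta}\roberr{\theta}=\tfrac{1}{2K}\phi\!\left(\tfrac{\theta+\epsilon-\mu_+}{K}\right)-\tfrac12\phi\!\left(\mu_--\theta+\epsilon\right)$. On the stated interval both arguments are nonpositive, so --- since $\phi$ is increasing on $(-\infty,0]$ --- the first (added) term increases in $\theta$ while the second (subtracted) term decreases in $\theta$; hence the derivative is increasing, the robust error is convex, and its derivative changes sign exactly once. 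The unique stationary point coincides with the closed-form $\robclf$ reported after Theorem~\ref{prop:three_clfs}, confirming it is the global minimizer on this interval.

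With convexity in hand the conclusion is immediate: $\roberr{\theta}$ is nonincreasing on $\left[\mu_-+\epsilon,\robclf\right]$, so evaluating at $\fairclf\leq\optclf\leq\robclf$ yields the claimed chain. The main obstacle is the convexity / single-crossing step: it hinges on restricting to $\left[\mu_-+\epsilon,\mu_+-\epsilon\right]$, where the two Gaussian densities of differing variance are each evaluated on the monotone arm of $\phi$; outside this interval $\Phi$ is S-shaped and the clean monotonicity of the derivative fails. Verifying that Theorem~\ref{prop:three_clfs} indeed confines every threshold to this interval (which is guaranteed by the range conditions $\epsilon\in\left[0,\tfrac{\pnDist}{2}\right]$ and $K\in\left(1,\Kbound\right]$) is therefore the crucial enabling fact.
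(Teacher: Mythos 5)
Your proposal is correct and takes essentially the same approach as the paper: the paper's own proof of this corollary likewise combines the ordering $\fairclf\leq\;\optclf\;\leq\robclf$ from Theorem~\ref{prop:three_clfs} with the fact that $\roberr{\theta}$ is decreasing to the left of $\robclf$ (its Equation~\eqref{eq:monotonicity}, established inside the proof of that theorem), and then concludes by evaluating at the three thresholds. The only difference is that where the paper simply cites that monotonicity, you re-derive it via a convexity/single-crossing argument on $\left[\mu_-+\epsilon,\mu_+-\epsilon\right]$, which is a clean, self-contained justification of a step the paper asserts rather tersely.
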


\begin{corollary}
For any $\epsilon\in\left[0,\frac{\pnDist}{4}\right]$ and $K\in\left(1, \newKbound\right]$,
    \begin{equation*}
        \bdyerr{\fairclf}\geq\bdyerr{\optclf}\geq
        \bdyerr{\robclf}\,,
    \end{equation*}
where $\newKbound
\!=\! \min\left\{\exp
\left(\!\frac{(\pnDist-2\epsilon)^2}{2}\!\right)\!,
\phi^{-1}\!\!\left(\!\frac{\pnDist}{\epsilon}-2\right)\!\right\}
$
and $\phi^{-1}$ is the inverse function associated with $\phi:[1,+\infty)\mapsto[2,+\infty)$ such that $\phi(x)=x+1/x$.
\end{corollary}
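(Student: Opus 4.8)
The plan is to express $\bdyerr{\theta}$ in closed form and then split the claim into its two inequalities, handling the second by a short optimality argument and the first by a monotonicity argument in which the tightened constants are exactly what is needed. For the threshold rule $f_{\theta}(X)=\mathbbm{1}\{X>\theta\}$, a correctly classified point is flippable by a perturbation of norm at most $\epsilon$ precisely when it lies within distance $\epsilon$ of $\theta$ on its correct side; hence, writing $\Phi$ for the standard normal CDF and using the balanced two-Gaussian model,
\[
\bdyerr{\theta}=\tfrac12\big[\Phi(\theta-\mu_-)-\Phi(\theta-\mu_--\epsilon)\big]+\tfrac12\Big[\Phi\big(\tfrac{\theta+\epsilon-\mu_+}{K}\big)-\Phi\big(\tfrac{\theta-\mu_+}{K}\big)\Big],
\]
the mass of the sharp negative Gaussian in $[\theta-\epsilon,\theta]$ plus that of the flat positive Gaussian in $[\theta,\theta+\epsilon]$. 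Equivalently, by Equation~\eqref{eq:robloss}, $\bdyerr{\theta}=\roberr{\theta}-\naterr{\theta}$.

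I would dispose of the second inequality, $\bdyerr{\optclf}\ge\bdyerr{\robclf}$, without using the constants at all. Using the decomposition,
\[
\bdyerr{\optclf}-\bdyerr{\robclf}=\big(\roberr{\optclf}-\roberr{\robclf}\big)+\big(\naterr{\robclf}-\naterr{\optclf}\big),
\]
and both terms are nonnegative: the first because $\robclf$ minimizes the robust error, and the second because $\optclf$ (the Bayes rule) minimizes the natural error. Note that the fair threshold enjoys no such optimality, so this shortcut does not apply to the first inequality, which is where the real work lies.

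For $\bdyerr{\fairclf}\ge\bdyerr{\optclf}$ I would instead show that $\bdyerr{\theta}$ is nonincreasing on $[\fairclf,\optclf]$ and then invoke the hierarchy $\fairclf\le\optclf$ from Theorem~\ref{prop:three_clfs} (whose hypotheses are implied here, since $\newKbound\le\Kbound$ and $\tfrac{\pnDist}{4}\le\tfrac{\pnDist}{2}$). Differentiating and letting $\varphi$ denote the standard normal density, $\tfrac{d}{d\theta}(2\bdyerr{\theta})=A+B$ with $A=\varphi(\theta-\mu_-)-\varphi(\theta-\mu_--\epsilon)$ and $B=\tfrac1K\big[\varphi(\tfrac{\theta+\epsilon-\mu_+}{K})-\varphi(\tfrac{\theta-\mu_+}{K})\big]$. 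On this range both $\theta-\mu_--\epsilon$ and $\tfrac{\mu_+-\theta-\epsilon}{K}$ are nonnegative, so the arguments of $A$ sit on the decreasing branch of $\varphi$ (giving $A\le0$) and those of $B$ on the increasing branch (giving $B\ge0$); the claim thus reduces to the uniform domination $-A\ge B$, i.e.\ the boundary mass lost by the narrow Gaussian outpaces the mass gained by the wide one.

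Establishing $-A\ge B$ throughout $[\fairclf,\optclf]$ is the main obstacle, and it is precisely here that $\epsilon\le\tfrac{\pnDist}{4}$ and $K\le\newKbound$ enter (the former is also what makes $\newKbound=\min\{\cdots,\phi^{-1}(\tfrac{\pnDist}{\epsilon}-2)\}$ well defined, since $\phi$ has range $[2,+\infty)$). Using the ratio identity $\varphi(t)/\varphi(t-\epsilon)=\exp(-\epsilon(t-\epsilon/2))$, I would factor $-A$ and $B$ each into a Gaussian prefactor times an exponential gap and bound the two pieces separately; the condition $K\le\newKbound$, equivalent via $\phi(K)=K+1/K$ to $\epsilon(K+1)^2\le(\pnDist)K$, is the exact threshold at which the $1/K$-weighted wide-Gaussian term $B$ loses this competition. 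The domination is comfortable at $\fairclf$—there the two groups' densities at the threshold coincide and the $1/K$ factor alone yields $-A\ge B$—and tightens toward the upper endpoint, so the constants are calibrated to keep $\tfrac{d}{d\theta}\bdyerr{\theta}\le0$ up to $\optclf$. Combining the two inequalities gives $\bdyerr{\fairclf}\ge\bdyerr{\optclf}\ge\bdyerr{\robclf}$, as claimed.
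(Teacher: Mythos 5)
Your setup is sound and coincides with the paper's: the closed form for $\bdyerr{\theta}$ is exactly the one in the paper's proof, and your derivative $A+B$ is, up to the factor $2\sqrt{2\pi}$, the paper's expression $g(\epsilon;\theta,K)-g(0;\theta,K)$. Your handling of the second inequality is also a genuinely different and valid route: the paper obtains both inequalities at once by proving that $\bdyerr{\theta}$ is decreasing on all of $\left[\fairclf,\robclf\right]$, whereas your decomposition $\bdyerr{\theta}=\roberr{\theta}-\naterr{\theta}$ combined with the two optimality facts (that $\optclf$ minimizes the natural error and $\robclf$ the robust error over the threshold family, both established in the paper's proof of Theorem 6.1) yields $\bdyerr{\optclf}\geq\bdyerr{\robclf}$ with no computation and no use of the constants. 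That shortcut is correct and arguably cleaner for that half.

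The gap is in the first inequality, and it sits at the exact point where the paper does essentially all of its work. You correctly reduce $\bdyerr{\fairclf}\geq\bdyerr{\optclf}$ to the uniform domination $-A\geq B$ on $\left[\fairclf,\optclf\right]$, and your check at the left endpoint is right (at $\fairclf$ one has $\mu_+-\fairclf=K(\fairclf-\mu_-)$, so the standardized densities agree and the claim follows from $\varphi(d-\epsilon)\geq\varphi(d-\epsilon/K)$ together with the $1/K$ factor). But nothing in your argument propagates this single-point check across the interval: the sentence that the constants ``are calibrated to keep $\frac{d}{d\theta}\bdyerr{\theta}\leq 0$ up to $\optclf$'' restates the goal rather than proving it. The paper establishes the domination by showing $\frac{\partial}{\partial\epsilon}g(\epsilon;\theta,K)\leq 0$, which after taking logarithms reduces to the inequality $p(\theta;\epsilon,K)\geq q(\theta;\epsilon,K)-4\ln K$; this in turn is proved by combining (a) monotonicity of $p$ and $q$ in $\theta$, (b) the identity $q\left(\robclf;\epsilon,K\right)=2\ln K$, which comes from the first-order optimality condition of the robust classifier, and (c) the bound $p\left(\fairclf;\epsilon,K\right)\geq -2\ln K$, which is precisely where $K\leq\newKbound$ enters. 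Your plan (``factor into a Gaussian prefactor times an exponential gap and bound separately'') invokes none of these ingredients — in particular, no quantitative information at the right endpoint such as a first-order condition — so there is no mechanism by which your endpoint verification extends over $\left[\fairclf,\optclf\right]$. If you wish to keep your two-inequality split, a natural repair would be to exploit the Bayes condition $g(0;\optclf,K)=0$ at the right endpoint of your (smaller) interval in place of the paper's robust condition, but some analogue of the paper's monotonicity chain would still have to be supplied; as written, the proof is incomplete at its central step.
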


\noindent These results highlight the impossibility of achieving fairness and robustness simultaneously in this classification task. Fairness and robustness are pulling the classifier in opposite directions.

\subsubsection*{The Role of the Decision Boundary}
Building on the previous results, this section provides the key theoretical intuitions to explain why fairness increases adversarial vulnerability. It identifies the average distance to the decision boundary as the central aspect linking fairness and robustness, which is formalized in Theorem \ref{thm:avg_dist}. 

\begin{theorem}\label{thm:avg_dist}
For any $\epsilon\in\left[0,\frac{\pnDist}{2}\right]$ and $K\in\left(1, \Kbound\right]$,
\begin{equation*}
        \expect{\dist{X}{\robclf}}\geq\expect{\dist{X}{\optclf}}\geq\expect{\dist{X}{\fairclf}}. 
    \end{equation*}
In addition, the fair model minimizes the average distance to its decision boundary over all valid classifiers, i.e.,
    \begin{equation*}
        \fairclf=\argmin_{\theta\in[\mu_-,\mu_+]}~\expect{\dist{X}{\theta}}\,.
    \end{equation*}
\end{theorem}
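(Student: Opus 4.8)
\section*{Proof Proposal}

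The plan is to reduce the whole statement to a one-dimensional convex-analysis fact about the mean absolute deviation. First I would observe that for the threshold family $f_\theta(X)=\mathbbm{1}\{X>\theta\}$ the decision boundary is the single point $\theta$, so the distance functional collapses to $\dist{X}{\theta}=|X-\theta|$, and hence $g(\theta):=\expect{\dist{X}{\theta}}=\EE\left[|X-\theta|\right]$, where in the balanced regime $X$ follows the mixture $\tfrac12\,\cN(\mu_-,1)+\tfrac12\,\cN(\mu_+,K^2)$. Both displays in the theorem then become statements about the shape of the single function $g$.

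Next I would show that $g$ is convex with its unique minimizer at the median of $X$. Differentiating under the expectation gives $g'(\theta)=\EE\left[\operatorname{sign}(\theta-X)\right]=2F_X(\theta)-1$, where $F_X$ is the (continuous) mixture CDF; since $F_X$ is nondecreasing, $g'$ is nondecreasing and $g$ is convex, with $g'(\theta)=0$ precisely at the median $\theta_m$ solving $F_X(\theta_m)=\tfrac12$.

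Then I would identify $\theta_m$ with $\fairclf$. Writing out $F_X(\theta_m)=\tfrac12$ as $\Phi(\theta_m-\mu_-)+\Phi\!\left(\tfrac{\theta_m-\mu_+}{K}\right)=1$, applying $\Phi(-z)=1-\Phi(z)$ and the strict monotonicity of $\Phi$ reduces it to $\theta_m-\mu_-=\tfrac{\mu_+-\theta_m}{K}$, whose solution is $\theta_m=\mu_-+\tfrac{\pnDist}{K+1}$---exactly the closed form for $\fairclf$ recorded after Theorem~\ref{prop:three_clfs}. The conceptual crux, which I would emphasize, is that this median equation is identical to the accuracy-parity condition equating the two groups' misclassification rates, $\,1-\Phi(\theta-\mu_-)=\Phi\!\left(\tfrac{\theta-\mu_+}{K}\right)$; since the false-positive rate $1-\Phi(\theta-\mu_-)$ decreases and the false-negative rate $\Phi\!\left(\tfrac{\theta-\mu_+}{K}\right)$ increases in $\theta$, they cross exactly once in $[\mu_-,\mu_+]$, so $\fairclf$ is precisely the median. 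This establishes the second display $\fairclf=\argmin_{\theta\in[\mu_-,\mu_+]}\expect{\dist{X}{\theta}}$.

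Finally, the inequality chain follows from convexity together with the ordering already proved. Because $g$ attains its minimum at $\fairclf$ and is convex, it is nondecreasing on $[\fairclf,\mu_+]$; combining this with $\fairclf\le\optclf\le\robclf$ from Theorem~\ref{prop:three_clfs} yields $g(\fairclf)\le g(\optclf)\le g(\robclf)$, which is exactly the asserted ordering. I expect the only delicate step to be pinning down $\fairclf$ as the median---i.e.\ verifying that the fairness-violation minimizer coincides with the unique sign change of the gap between the group error rates inside $[\mu_-,\mu_+]$---after which the convexity argument closes both parts with no further computation.
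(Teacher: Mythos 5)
Your proposal is correct and follows essentially the same route as the paper's proof: both reduce $\dist{X}{\theta}$ to the mean absolute deviation $\expect{\vert X-\theta\vert}$ of the balanced mixture, differentiate to obtain (a multiple of) $2F_X(\theta)-1=\Phi(\theta-\mu_-)-\Phi\left(\frac{\mu_+-\theta}{K}\right)$, identify its unique sign change with $\fairclf=\mu_-+\frac{\pnDist}{K+1}$, and then deduce the inequality chain from the ordering $\fairclf\leq\optclf\leq\robclf$ of Theorem~\ref{prop:three_clfs}. Your median/equal-error-rate framing is a nice conceptual gloss on the same computation (the accuracy-parity equation is precisely the median equation of the mixture), not a different argument.
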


\noindent
This result indicates that, among the three considered optimal models, the fair model has the smallest average distance to the decision boundary. while the robust model has the largest distance. The result above is exemplified in Figure \ref{fig:example}.  The bottom plots show the losses associated with the optimal natural, fair, and robust models for two choices of $K$ (left and right) while the top plots show the optimal decision boundaries associated with each of the three models -- notice that they correspond to the minima of their relative losses. 

Observe how class class $Y=1$ has a higher classification error than class $Y=-1$ under the natural (and thus unfair) classifier $f_{\optclf}$. This is intuitive since 
the conditional distribution $\pr{X\mid Y=1}$ has much higher variance than $\pr{X\mid Y=-1}$. Hence, to balance the classification errors, the fair classifier pushes the decision boundary towards the mean of class $Y=-1$. This increases the error of class $Y=-1$ while decreasing the error of class $Y=1$. In contrast, the robust classifier pushes the decision boundary far away from the dense input region, i.e., the mean of the data associated with class $Y=-1$. 
%Thus robustness constraint pushes the decision boundary far away from samples of class $Y=-1$. 

There are a few points worth emphasizing. First, \emph{robustness and fairness pull the decision boundary into two opposite directions}. Second, the fair model  $f_{\fairclf}$ results in predictions with higher robust errors, when compared to the optimal natural model $f_{\optclf}$, and it also increases adversarial vulnerability as the variance $K$ increases. The variance $K$ regulates the difference in the standard deviation of the underlying distributions associated with the protected groups and thus controls the overall distance to the decision boundary.
 %We provide a theoretical justification to explain why fair models are more vulnerable than unfair models in this section. 
{\em In summary, fairness can reduce the average distance of the training samples to the decision boundary which, in turn, makes the model less tolerant to adversarial noise}. 

This section concludes with another important result. The previous relationships continue to hold even when the optimality conditions of the fair classifier are relaxed, i.e., when $\lambda$ is taking values different from $\infty$. Moreover, the fairness constraints always reduce the distance to the decision boundary among protected groups and this reduction is proportional to the strength of the fairness constraints (or the tightness of the required fairness bound $\alpha$). 

% We follow the same settings in previous example by assuming $X\mid Y=-1 \sim \mathcal{N}(\mu_{-}, \sigma^2 I)$ and $X\mid Y=1 \sim \mathcal{N}(\mu_{+}, K^2 \sigma^2 I)$. Here  we consider the balanced settings when $\pr{Y=1} = \pr{Y=-1} = \frac{1}{2}$
\begin{theorem}
\label{thm:6.5}
Consider the fair classifier $f_{\fairclf (\lambda)}$ that optimizes Equation \eqref{eqn:fair_reg}. 
It follows that, 
for any $\lambda \in \left(\frac{K-1}{K+1}, +\infty\right)$, 
\[
\fairclf(\lambda)  =\fairclf\]
while for any $\lambda \in\left[0, \frac{K-1}{K+1}\right]$,
%\begin{align*}
\(
    \fairclf(\lambda)=\mu_--\frac{\pnDist}{K^2-1}+
        \frac{K}{K^2-1}\sqrt{2(K^2-1)\ln\left(\frac{1-\lambda}{1+\lambda}\cdot 
 K\right)+(\pnDist)^2}
\).
%\end{align*}
 Moreover, the parameter $\theta$ associated with the fair classifier and 
 the average distance to its decision boundary $\expect{\dist{X}{\fairclf(\lambda)}}$
 are both decreasing as $\lambda$ increases.
\end{theorem}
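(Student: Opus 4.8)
The plan is to reduce the penalized program \eqref{eqn:fair_reg} to a one–dimensional optimization in $\theta$ and to exploit the single kink created by the absolute value. Write $g(\theta)=\cL_\theta(D_{-1})=\Phi(\mu_--\theta)$ and $h(\theta)=\cL_\theta(D_{1})=\Phi\!\big(\tfrac{\theta-\mu_+}{K}\big)$ for the two group ($0/1$) errors, where $\Phi$ is the standard normal CDF. In the balanced case $\cL_\theta(D)=\tfrac12\big(g+h\big)$, so each of the two summands $|\cL_\theta(D_a)-\cL_\theta(D)|$ is a fixed positive multiple of $|g-h|$ and the fairness penalty collapses to a positive multiple of $|g-h|$. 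Since $g$ is strictly decreasing and $h$ strictly increasing, $g-h$ vanishes exactly at $\fairclf$, and (as in the proof of Theorem \ref{prop:three_clfs}) $g\le h$ precisely for $\theta\ge\fairclf$. The objective is therefore smooth on each side of the lone kink at $\fairclf$; on the branch $\theta\ge\fairclf$ it is, up to the penalty normalization, a combination of $g$ and $h$ with weights proportional to $1-\lambda$ and $1+\lambda$. Because $\fairclf(0)=\optclf\ge\fairclf$ and increasing the penalty can only move the minimizer toward $\fairclf$, I only need to optimize on this right branch.

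On that branch I set $J'(\theta)=0$, which balances the two Gaussian densities: $(1+\lambda)\tfrac1K\varphi\!\big(\tfrac{\theta-\mu_+}{K}\big)=(1-\lambda)\varphi(\mu_--\theta)$, with $\varphi(z)=\tfrac1{\sqrt{2\pi}}e^{-z^2/2}$. Taking logarithms converts this into a quadratic equation in $\theta$, namely $\tfrac12(\theta-\mu_-)^2-\tfrac1{2K^2}(\theta-\mu_+)^2=\ln\!\big(\tfrac{1-\lambda}{1+\lambda}K\big)$; the root lying in $[\fairclf,\mu_+]$ is exactly the claimed closed form, with the argument $\tfrac{1-\lambda}{1+\lambda}K$ inside the logarithm. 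Substituting $\lambda=0$ recovers the $\optclf$ formula stated right after Theorem \ref{prop:three_clfs}, a convenient consistency check.

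Next I decide when this interior root is the genuine minimizer and when the minimizer is pinned at the kink. Since $J$ is not differentiable at $\fairclf$, I compare the one–sided derivatives there; using $g=h$ at $\fairclf$ the two arguments $\mu_--\fairclf$ and $\tfrac{\fairclf-\mu_+}{K}$ coincide, so the right derivative reduces to a single density times an affine expression in $\lambda$. The condition $J'(\fairclf^{+})\ge 0$ then yields precisely $\lambda\ge\tfrac{K-1}{K+1}$: above the threshold the objective is increasing on all of $[\fairclf,\mu_+]$ so $\fairclf(\lambda)=\fairclf$, while below it the interior root governs. I expect the main obstacle to be the global optimality guarantee, i.e. verifying that on $[\fairclf,\mu_+]$ the stationary point is unique and is a minimum (this should follow from convexity of $\cL_\theta(D)$ on $[\mu_-,\mu_+]$ together with the monotonicity of the density ratio $\varphi(\tfrac{\theta-\mu_+}{K})/\varphi(\mu_--\theta)$, giving unimodality) and that no competing minimizer appears on $\theta<\fairclf$.

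Finally, for the monotonicity statements I argue in two stages. For $\lambda\ge\tfrac{K-1}{K+1}$ the value $\fairclf(\lambda)=\fairclf$ is constant; for $\lambda\in[0,\tfrac{K-1}{K+1}]$ the map $\lambda\mapsto\tfrac{1-\lambda}{1+\lambda}$ is strictly decreasing, so the logarithm, the radicand, and hence the whole closed form decrease, and $\fairclf(\lambda)$ is nonincreasing and never drops below $\fairclf$. For the average distance, note that for a threshold rule $\dist{X}{\theta}=|X-\theta|$, so $D(\theta)\defeq\expect{\dist{X}{\theta}}=\expect{|X-\theta|}$ is convex in $\theta$ and, by Theorem \ref{thm:avg_dist}, globally minimized at $\fairclf$; therefore $D$ is increasing on $[\fairclf,\mu_+]$. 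Composing the increasing function $D$ (on this range) with the nonincreasing, $\fairclf$–bounded map $\lambda\mapsto\fairclf(\lambda)$ shows that $\expect{\dist{X}{\fairclf(\lambda)}}$ is nonincreasing in $\lambda$, completing the last claim.
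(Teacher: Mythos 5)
Your proposal follows essentially the same route as the paper's proof: split the penalized objective at the single kink $\fairclf$, write each branch as a $(1\pm\lambda)$-weighted combination of the two group errors, solve the stationarity equation on the right branch to get the closed form with $\frac{1-\lambda}{1+\lambda}K$ inside the logarithm, extract the threshold $\frac{K-1}{K+1}$ from the sign of the (right) derivative at $\fairclf$ (using that the two standardized arguments coincide there), and obtain both monotonicity claims from the decreasing map $\lambda\mapsto\frac{1-\lambda}{1+\lambda}$ together with the fact that $\expect{\dist{X}{\theta}}$ is increasing on $[\fairclf,\mu_+]$ (Theorem \ref{thm:avg_dist}). All of these steps are correct and match the paper; your unimodality argument via the monotone log-density-ratio $\theta\mapsto\frac{(\theta-\mu_-)^2}{2}-\frac{(\theta-\mu_+)^2}{2K^2}$ does close the right-branch uniqueness question you flag, and is equivalent in substance to the paper's observation that the branch derivative is increasing in $\theta$.

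The one genuine gap is your dismissal of the left branch $\theta<\fairclf$. You justify it by asserting that ``increasing the penalty can only move the minimizer toward $\fairclf$,'' but that assertion is essentially the monotonicity conclusion the theorem is asking you to prove, so using it here is circular; and the tools you name afterwards (convexity of $\cL_{\theta}(D)$, density-ratio monotonicity on $[\fairclf,\robclf]$) are aimed at the right branch, not at excluding a competing minimizer on the left. The fix is short and is exactly where the paper does its remaining work: on $[\mu_-,\fairclf]$ the branch objective is (proportional to) $(1-\lambda)h+(1+\lambda)g$, whose derivative
\begin{equation*}
\frac{1-\lambda}{2\sqrt{2\pi}K}\exp\left(-\frac{(\theta-\mu_+)^2}{2K^2}\right)-\frac{1+\lambda}{2\sqrt{2\pi}}\exp\left(-\frac{(\theta-\mu_-)^2}{2}\right)
\end{equation*}
is non-positive for every $\lambda\geq 0$ (for $\lambda\le 1$ it is increasing in $\theta$ and still $\le 0$ at $\fairclf$ because the two exponentials agree there and $\frac{1-\lambda}{K}<1+\lambda$; for $\lambda>1$ both terms are non-positive). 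Equivalently, and even more simply in your notation: on $[\mu_-,\fairclf)$ both the natural error (which decreases up to $\optclf\geq\fairclf$) and the penalty $g-h$ are strictly decreasing, so the penalized objective strictly decreases there and no minimizer can lie on the left branch. With that one paragraph added, your argument is complete and coincides with the paper's.
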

\noindent
Informally speaking, Theorem \ref{thm:6.5} states that applying fairness constraint with large enough penalty $\lambda$ will push the decision boundary towards the negative class (group with smallest variance). As a result, the average distance to the decision boundary of all samples will be reduced. 

While the analysis above applies to the linear setting considered in this section, the results were empirically 
validated on large non-linear models. For example, Figure \ref{fig:fair_UTK} compares the performance of a 
penalty based fair CNN model (bottom plots) with $\lambda=1.0$ against a natural (non-fair) CNN classifier (top plots). 
The left plots report the task accuracy by each subgroup (denoting {races}) and average distance to decision boundary (right) of each subgroup. Note how the fair classifier reduces the disparities in task accuracy experienced by the various subgroups. This effect, however, also reduces the \emph{overall} average distance to the decision boundary. As a consequence, 
fair models will be more vulnerable to adversarial perturbations. 

The next sections focus on assessing these theoretical intuitions onto general non-linear classifiers in a variety of settings and on devising a possible mitigation strategy to balance a good tradeoff between fairness and robustness.

% Note that, while for simplicity this section assumes the data follow a mixture of Gaussians, the paper empirically 
% validates the impact of fairness constraints in reducing the average distance to the decision boundary in 
% nonrestrictive settings using large non-linear models and differentiable loss. For example, we report in Figure \ref{fig:fair_UTK} the impact of Fair-Reg with $\lambda=1.0$ (bottom sub figures) towards the group accuracy and avg distance to the decision boundary compared to unfair models (top sub figures). We can see that while applying fairness constraint in Fair-Reg can reduce the accuracy gap it has side effects in reducing the overall average distance to the decision boundary of samples. As a result, a small perturbation added to  testing samples can easily make fair classifier mis-classify these samples. 

 \begin{figure}[t]
 \centering
\includegraphics[width=\linewidth,height=100pt]{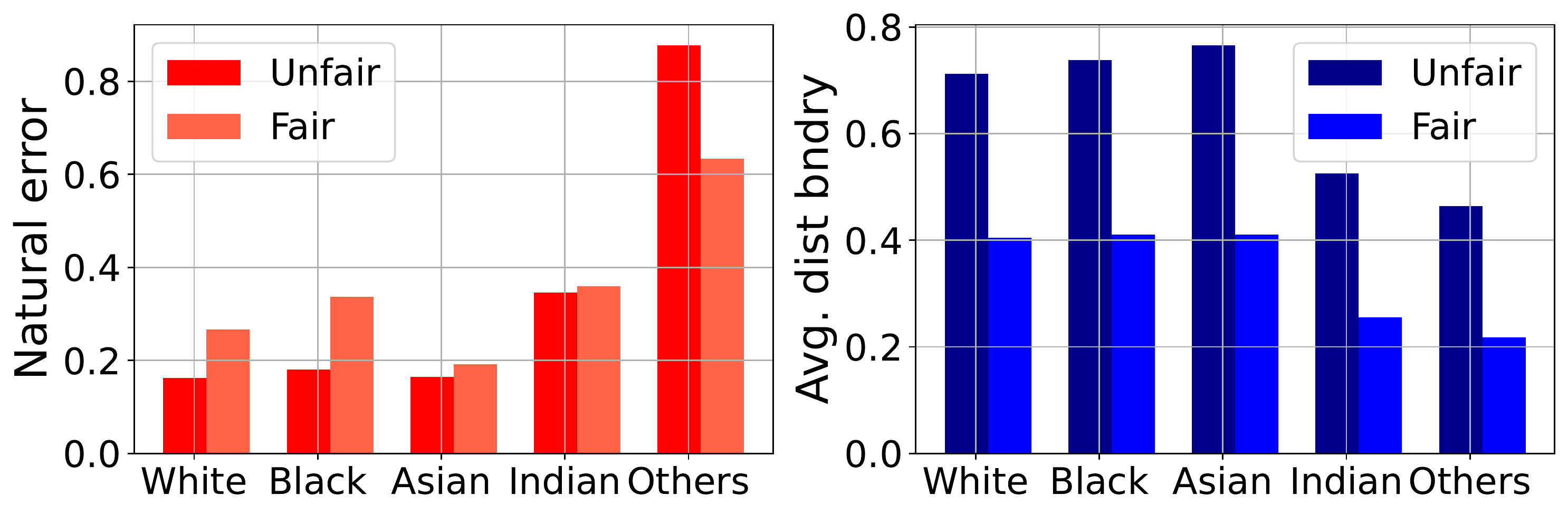}
\caption{Comparison between group's natural accuracy (left) and group's average distance to the decision boundary (right) between unfair and fair models on the UTK-Face dataset.
}
\label{fig:fair_UTK}
\end{figure}

%%%%%%%%%%%%%%%%%%%%%%%%%%%%%%%%%%%%%%%%%%%%%%%%%%%%%%%%%%%%%%%%%%%%%%%%%%%%%%%%%%%%%%%%%%%%%
\section{Beyond the Linear Case}
\label{sec:experiments}
%%%%%%%%%%%%%%%%%%%%%%%%%%%%%%%%%%%%%%%%%%%%%%%%%%%%%%%%%%%%%%%%%%%%%%%%%%%%%%%%%%%%%%%%%%%%%

This section validates  the theoretical intuitions presented above on much more complex architectures, datasets, and loss functions. 
The experiments focus on highlighting fairness, robustness, errors, and their relation to the distance to the decision boundary. 
When $f_\theta$ is a non-linear model, computing the distance to the decision boundary becomes a computational challenge. Thus, this section uses a commonly adopted proxy metric that measures the difference between the first two order statistics of the softmax outputs in the model \cite{wang2022enhancing,lecuyer2019certified}.
% \begin{equation}
% \label{eq:dist_nl}
%     \mbox{dist}(X, \theta) = p^{\max}_{\theta}(X) -  p^{\text{2nd\_max}}_{\theta}(X),
% \end{equation}
% where $p_{\theta}(X) = \mbox{softmax}(f_{\theta}(X))$.

\noindent\textbf{Datasets.}
The experiments of this section focus on three vision datasets: \textit{UTK-Face} \cite{zhifei2017cvpr}, \textit{FMNIST} \cite{xiao2017fashion} and 
\textit{CIFAR-10} \cite{krizhevsky2009learning}. 
The adopted protected groups and labels in the UTK-Face datasets are {\sl ethnicity} (White/Black/Indian/Asian/Others) or {\sl age} (nine age bins), resulting in two distinct tasks. 
For FMNIST and CIFAR, the experiments use their standard labels and assume that labels are also protected groups, mirroring the setting of  previous work  \cite{xu2021robust,cuong_neurips22,robust_fair}. 
A complete description of the dataset and settings is found in Appendix \ref{app:experiments}.

\noindent\textbf{Settings.} The experiments consider several deep neural network architectures, including CNN \cite{o2015introduction}, ResNet 50 \cite{he2016deep} and VGG-13 \cite{simonyan2014very}. The former uses 3 convolutional layers followed by 3 fully connected layers. 
Models trained on the UTK-Face data use a learning rate of $1e^{-3}$ and 70 epochs. Those trained on FMNIST and CIFAR, use a learning rate of $1e^{-1}$ and 200 epochs, as suggested in previous work \cite{xu2021robust}. For all datasets and models, unless otherwise specified,  a batch size of 32 is used. 
The experiments analyze penalty-based fairness method, RFGSM attacks \cite{tramer2017ensemble}, and the VGG-13 network, unless specified otherwise. Additional experiments using group-loss focused method (see Appendix \ref{app:fairness_defs}), additional network architectures, and adversarial attacks are reported in Appendix \ref{app:experiments}.

\iffalse
The experiments vary the fairness parameters \rev{$\lambda \in [0, 10] $ and $q \in [3]$} to measure the impact of the fairness constraints strength towards fairness and robustness. 
{\sl Robust models} are obtained by implementing a project gradient descent scheme (PGD) \cite{madry2017towards}:
\begin{equation}
   \theta_{\mathrm{r}}^{\left(\epsilon^*\right)}= \underset{\theta}{\argmin}~\frac{1}{n}\sum^n_{i=1} \max_{\|\tau\|_p \leq \epsilon^* }\ell(f_{\theta}(X_i+\tau), Y_i) 
\end{equation}
where $\epsilon^* >0 $ denotes the desired robustness level.
\fi

% \textcolor{blue}{@Cuong: Check above. Is this what you really do for robust classifiers? Also what is value $p$ you choose? 
% My guess is that you did NOT use the above. => We do not implement any  solely robust classifier. In Section below, see equation (13) we implement a fair robust model that achieves both fairness and robustness at the same time. Where the norm $p \in \{0,2, \infty\}$. Since we use PGD so $p = \infty$.}

\begin{figure}[t]
 \centering
\includegraphics[width=\linewidth]{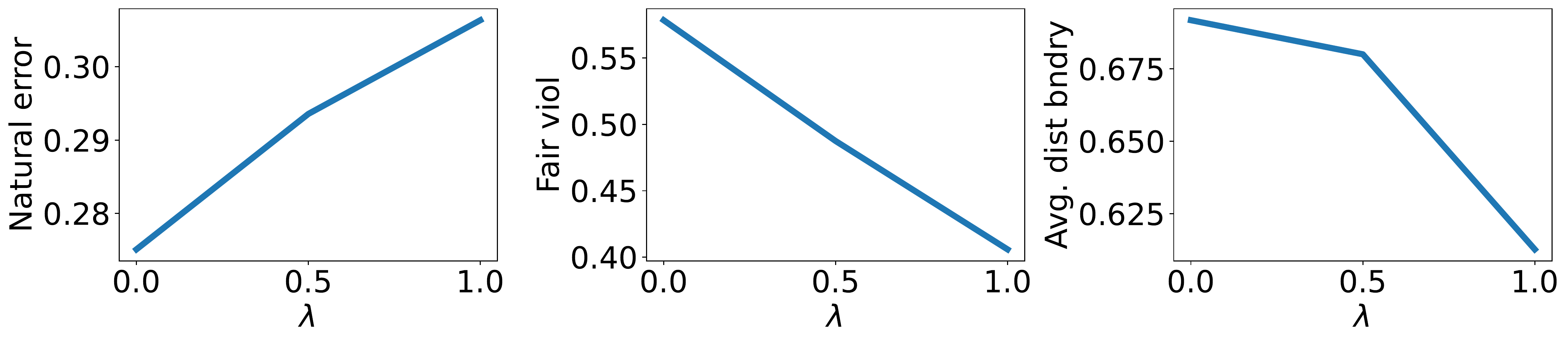}
\includegraphics[width=\linewidth]{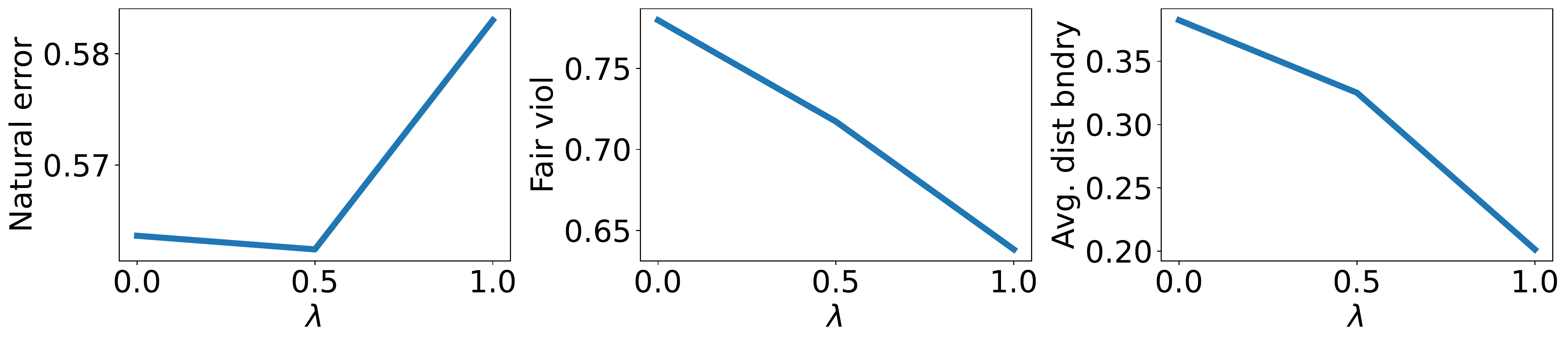}
\includegraphics[width=\linewidth]{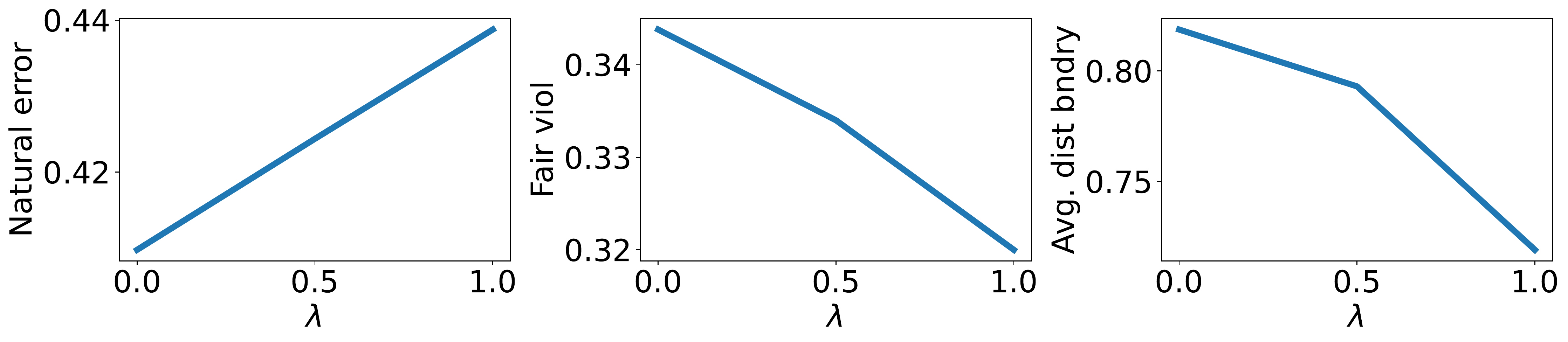}
\caption{Natural errors, fairness violations, and average distance to the decision boundary for the UTK-Face {\sl ethnicity} (top), 
UTK-Face {\sl age bins} (middle) and CIFAR (bottom) datasets when varying the fairness parameter $\lambda$ on a CNN model. 
}
\label{fig:avg_dist}
\end{figure}

\begin{figure*}[t]
\centering
\includegraphics[width=0.9\linewidth]{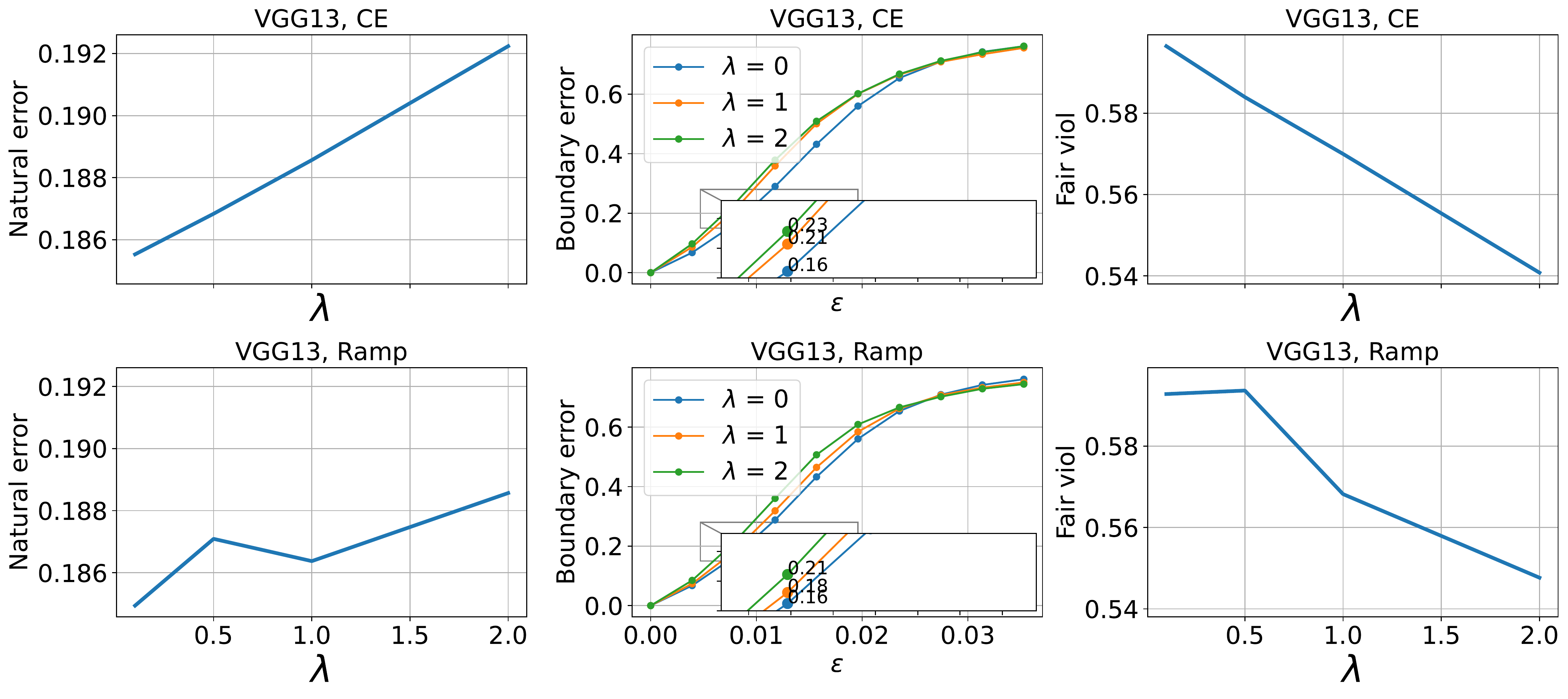}
\caption{{\bf Top}: Natural errors (left) and fairness violations (right) on the UTKFace {\sl ethnicity} task at varying of the fairness parameters $\lambda$. The middle plots compares the robustness of fair ($\lambda > 0)$ vs.~natural ($\lambda=0$) classifiers to different RFGSM attack levels. 
{\bf Bottom}: Mitigating solution using the bounded Ramp loss.}
\label{fig:bnd_err}
\end{figure*}

%%%%%%%%%%%%%%%%%%%%%%%%%%%%%%%%%%%%%%%%%%%%%%%%%%%%%%
\subsubsection*{Fairness impacts on the decision boundary}
%%%%%%%%%%%%%%%%%%%%%%%%%%%%%%%%%%%%%%%%%%%%%%%%%%%%%%

As shown by Theorem \ref{thm:6.5}, fairness reduces the average distance of the testing samples to the decision boundary. This section illustrates how this result carries over to larger non-linear models.
Figure \ref{fig:avg_dist} reports results obtained by executing the penalty-based fair models 
on the UTK-Face datasets for ethnicity (top) and age (middle) classification and on CIFAR (bottom). 
A clear trend emerges: As more fairness is enforced (larger $\lambda$ values), the natural errors (left plots) increase, while the fairness violations (center plots) decrease. Importantly, and in agreement with the theoretical results, the experiments report a sharp reduction to the average distance to the  decision boundary (right plots). This behavior renders fair models more vulnerable to adversarial attacks, as will be highlighted shortly. Similar results are also observed for the group-loss based models and other architectures. %(see Appendix \ref{app:experiments}).

%  \begin{figure*}[t]
%  \centering
% \includegraphics[width=0.9\linewidth,height=120pt]{fair_clf_SVHN_bs_512.pdf}
% \caption{Accuracy, fairness violation, and avg. distance to the decision boundary on SVHN dataset under  different fairness parameters $\lambda$ of Fair-Reg
% }
% \label{fig:avg_dist_svhn}
% \end{figure*}

%%%%%%%%%%%%%%%%%%%%%%%%%%%%%%%%%%%%%%%%%%%%%%%%%%%%%%
\subsubsection*{Boundary errors increase as fairness decreases}
%%%%%%%%%%%%%%%%%%%%%%%%%%%%%%%%%%%%%%%%%%%%%%%%%%%%%%

\begin{figure}[!t]
 \centering
\includegraphics[width=\linewidth]{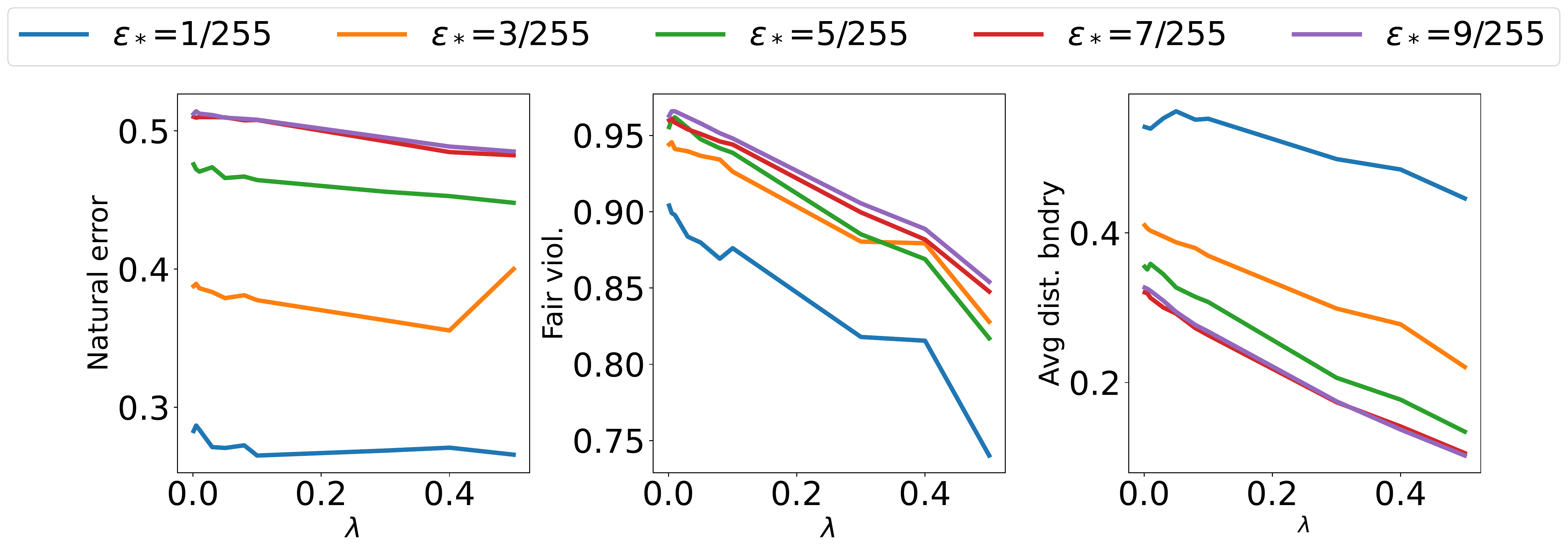}
\caption{ Natural error (left), fairness violation (middle) and average distance to the decision boundary (right) at varying of the margin perturbation $\epsilon_*$ and fairness parameters $\lambda$.
}
\label{fig:robust_fair_gen}
\end{figure}

This section highlights the key consequence of the sharp reduction to the average distance to the  decision boundary: \emph{the increase of the vulnerability to adversarial attacks}.  Figure \ref{fig:bnd_err} (top) reports the natural errors (left),  boundary errors (middle), and fairness violations (right)  for a VGG-13 model trained on UTKFace dataset on the {\sl ethnicity} task using a standard cross-entropy (CE) loss. 
Once again, other architectures\footnote{With the caveat that VGG-13 could not be used for FMNIST since the 28x28 pixel resolution of FMNIST  is smaller than that required by some VGG filters.} and datasets are reported in the appendix and the results follow the same trends as those reported here.

The natural errors and fairness violations are reported for \emph{fair} classifiers, at varying of the fairness violation parameter $\lambda$. The boundary errors (middle) are reported for classifiers satisfying various fairness levels (i.e., using different $\lambda$ values) and at varying of the strength $\epsilon$ of the desired robustness level (see Equation \eqref{eq:rob_error}). 

Notice how, compared to the natural models, the fair models incur much higher natural and boundary errors. In particular, the relative increase in boundary errors are significant: The fairness models have boundary errors that are up to  9\% larger than their natural counterparts. These observations match the theoretical analysis and highlight a significant increase in vulnerability to adversarial examples by the fair models, even for moderate selections of the fairness violation parameters $\lambda$.

 \begin{figure*}[!t]
 \centering
\includegraphics[width=\linewidth]{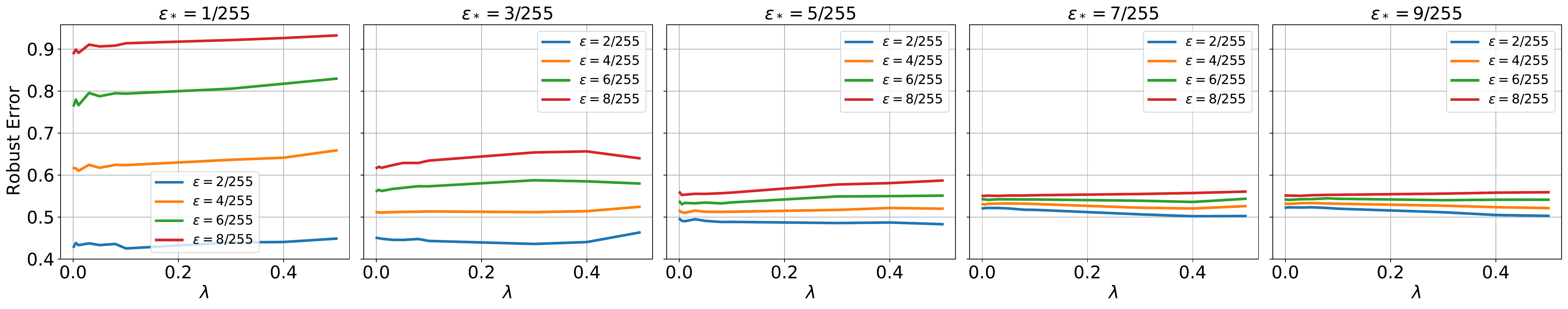}
\caption{Robust errors for different attack levels $\epsilon$ of a robust and fair classifier at varying of the margin perturbation $\epsilon_*$ and fairness parameters $\lambda$.
}
\label{fig:robust_fair_robust_err}
\end{figure*}

% \subsection{Analysis on minimum distance to generate adversarial samples}

\subsubsection*{Enforcing Fairness and Robustness Simultaneously}

This section considers an additional experiment to show how fairness may negatively impact robustness. 
It reports results of a classifier trying to enforce both fairness and robustness, similarly to the proposal of Xu et al.~\cite{xu2021robust}. The resulting model aims at solving the following regularized ERM problem:
\begin{align}
    &\min_{\theta} \frac{1}{n} \sum_{i=1}^n \max_{\| \tau \|_{p} \leq \epsilon_*} \!\!\ell(f_{\theta}(X_i + \tau), Y_i) \notag\\ 
    &+\lambda \left\vert\nicefrac{1}{|D_c|}
    \!\!\!\!\!\!\!\!\sum_{(X,A,Y) \in D_c}\!\!\!\!\!\!\!\! \ell(f_{\theta}(X), Y) - \frac{1}{n}\sum_{i=1 }^n \ell(f_{\theta}(X_i), Y_i)  \right\vert
        \label{eq:fair_robust_opt}
\end{align}
using stochastic gradient descent. The first component aims at increasing the robustness of the classifier under a margin perturbation $\epsilon_*$, following the PGD training \cite{madry2017towards} with perturbation norm $p=\infty$.
It works by first generating adversarial samples $X_i +\tau$, where $\|\tau\|_{\infty} \leq \epsilon_*$, and then the learning progress aims at minimizing the loss between the model prediction for that adversarial samples and the ground-truth $\ell(f_{\theta}(X_i + \tau), Y_i) $. 
The larger the margin perturbation $\epsilon_*$, the more robust the resulting classifier. The second component implements a penalty-based fairness strategy \cite{agarwal2018reductions}, which promotes fairness by penalizing the difference among each groups' average loss and the overall's average loss. 

The experiments vary the margin perturbation $\epsilon_*$ (robustness) and the penalty value $\lambda$ (fairness). 
Figure \ref{fig:robust_fair_gen} reports the (natural) error (left), fairness violations (middle) and average distance to the decision boundary (right) for different levels of the margin perturbation $\epsilon_*$ on the UTK-Face (ethnicity) dataset. 
As expected, enforcing larger margin perturbations $\epsilon_*$ increases the average distance to the decision boundary (thus improving robustness), but at the cost of significantly increasing the natural errors. Increasing the fairness parameter $\lambda$ decreases the average distance to the decision boundary). 

Figure \ref{fig:robust_fair_robust_err} reports the robust errors under different levels of adversarial attacks, which are specified by the level of perturbation $\epsilon$. Notice how the level of defense $\epsilon_*$ correlates with higher robustness (and smaller average distances to the decision boundary) for all fairness parameters $\lambda$ tested. These results show the challenge to achieving simultaneously robustness, fairness, and accuracy. 

{\em Overall, the results show that, without a careful consideration, inducing a desired equity property on a learning task may create significant security challenges.} This should not be read as an endorsement to satisfy a single property, but as a call for additional research at the intersection of fairness and robustness in order to design appropriate tradeoffs. 
% \textcolor{red}{Our results also emphasize that enforcing both fairness and robustness as proposed in \cite{xu2021robust} might not be an effective solution to balance the trade-off among accuracy, robustness and fairness as they claimed.}

\section{A Mitigating Solution with Bounded Losses}

\begin{figure}[t]
 \centering
\includegraphics[width=0.8\linewidth]{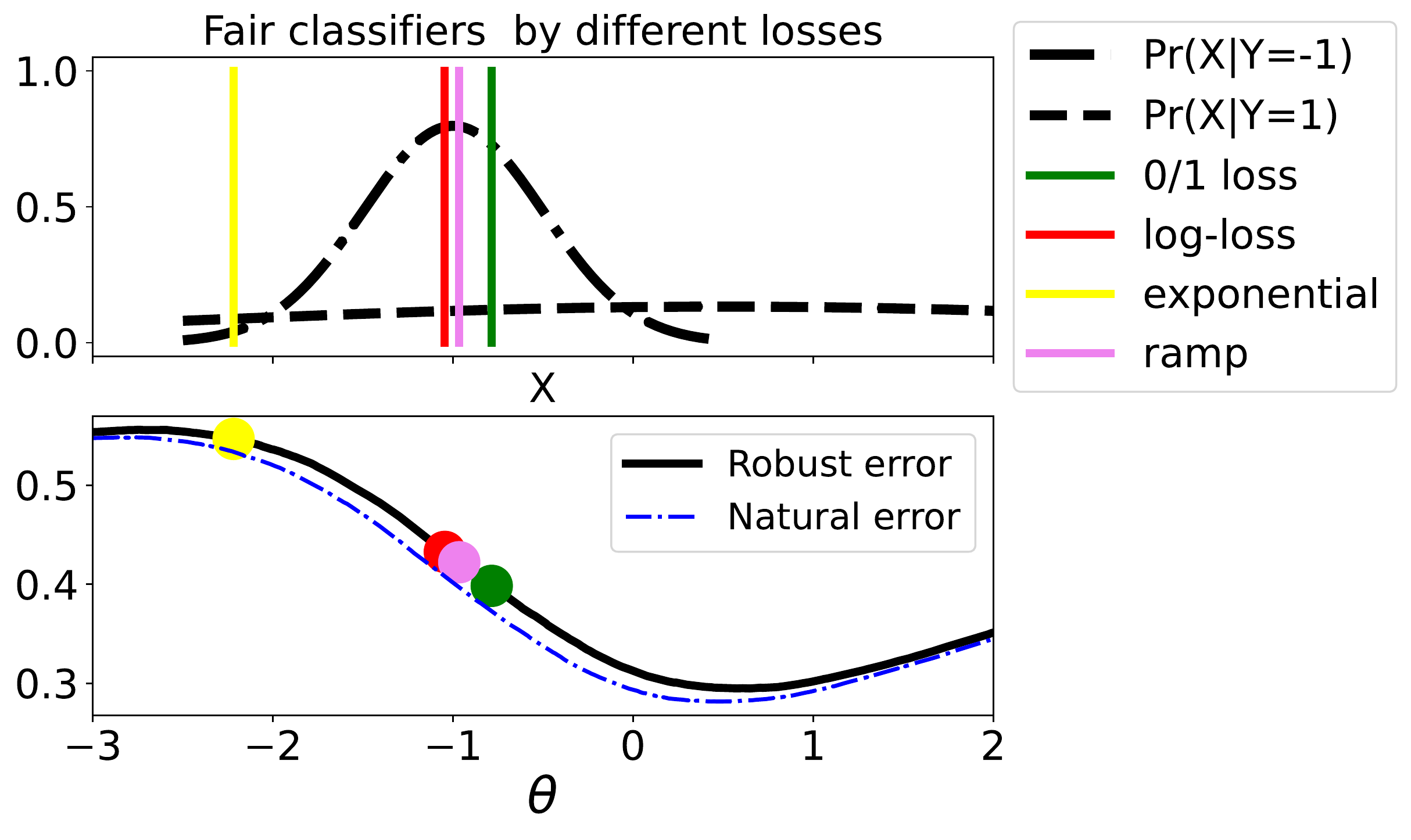}
\caption{Classifiers obtained using different loss functions (top) and the associated natural and robust error obtained by such losses.
}
\label{fig:impact_loss}
\end{figure}

While the previous sections have shown that the conflict between fairness and robustness is unavoidable, this section proposes a theoretically motivated solution attempting to attenuate this tension. The proposed solution relies on the observation that, using standard (unbounded) loss functions, misclassified samples lying far away from the decision boundary are associated to much larger losses than those which are closer to it. 
Recall also that the decision boundary was found as the predominant factor linking fairness and robustness. 
This key observation suggests the use of a bounded loss function, defined as \cite{goh2016satisfying,collobert2006trading}: 
\begin{equation*}
\ell_{Ramp}(f_{\theta}(X),Y)=  \min(1, \max(0, 1- Yf_{\theta}(X))).
\label{eqn:ramp}
\end{equation*}
and referred to as \emph{Ramp loss}, with domain $(0,1]$. The proposed strategy simply applies this loss function to a fair classifier (Equation \ref{eqn:fair_reg}). Its benefits can be appreciated in Figure \ref{fig:impact_loss}, which reports the results for the same setting used in the previous section and compares a fair classifier trained using the ramp loss with one trained using a 0/1-loss (which is also bounded but not differentiable), a log-loss, and an exponential loss (both unbounded) (top). 
The results show that the fair classifier trained using a ramp-loss is the least impacted by misclassified samples, resulting in lower robust errors compared to unbounded losses. It can be observed in the bottom subplot, where its associated loss is the closest, among all differentiable losses, to the local minima. 
The observed benefits of the ramp loss also carry over high-dimensional data and non-linear models, as shown in Figure \ref{fig:bnd_err} bottom, and further reported in Appendix \ref{app:experiments}. 

%Notice how the fair classifier$\theta_f$ aligns $\expect{\ell(f_{\fairclf}(X),Y)\mid Y=1)} = \expect{\ell(f_{\fairclf}(X),Y)\mid Y=-1)}$ for different choice of $\ell(.)$. Note that only 0/1 loss and ramp loss are bounded losses in this example but 0/1 loss is not differentiable to use in any model training. 

% . Figures \ref{fig:utk_age_bins_vgg} and \ref{fig:avg_dist_utk_races} compare the boundary error differences between fair and unfair classifiers on different datasets when using cross entropy loss (CE) and Ramp loss. The figures report boundary errors for UTK-age bins and UTK-races in Similar results are given for CIFAR-10 in Figure \ref{fig:cifar_vgg} and FMNIST in Figure \ref{fig:fmnist_vgg}. It is clear that, by using the bounded loss such as the Ramp loss, the boundary error differences between fair and unfair classifiers are reduced significantly.

\section{Conclusions}
This paper was motivated by two key challenges brought by the the adoption of  modern machine learning systems in consequential domains: \emph{fairness} and \emph{robustness}. 
The paper observed and analyzed the relationship between these two important machine-learning properties and showed that fairness increases vulnerability to adversarial examples. 
Through a theoretical analysis on linear models, this work provided a new understanding of why such tension arises and identified the distance to the decision boundary as a key explanation factor linking fairness and robustness. 
These theoretical findings were validated on non-linear models through extensive experiments on a variety of vision tasks. Finally, building from this new understanding, the paper proposed a simple, yet effective, strategy to find a better balance between accuracy, fairness and robustness. We hope these results could stimulate a needed discussion and research at the intersection of fairness and robustness to achieve appropriate tradeoffs.

\section*{Acknowledgments}
This research is partially supported by grants NSF-2112533, NSF-2133169 and NSF-2143706. 
F.~Fioretto is also supported by a Google Research Scholar Award and an 
Amazon Research Award. Its views and conclusions are those of the authors only.

%%%%%%%%% REFERENCES
{\small
\bibliographystyle{ieee_fullname}
\bibliography{lib}
}

\newpage
\appendix

\section{Missing Proofs}
\label{app:proofs}

\textbf{Proof of Theorem 6.1}

\begin{proof}
    % Let $\Phi$ denote the cumulative distribution function
    % associated with the standard normal distribution.
    \begin{enumerate}[i)]
        \item 
        Notice that the natural classification error and its derivative can be expressed
        as 
        \begin{align*}
            \naterr{\theta}=~&\pr{f_{\theta}(X)\neq Y}\\
            =~&\frac{1}{2}\pr{f_{\theta}(X)\neq 1\mid Y=1}+
        \\
        &\frac{1}{2}\pr{f_{\theta}(X)\neq -1\mid Y=-1}\\
        =~&\frac{1}{2}\int_{-\infty}^{\theta}\frac{1}{\sqrt{2\pi}K}\exp\left(-\frac{(x-\mu_+)^2}{2K^2}\right)dx+\\
        &\frac{1}{2}\int_{\theta}^{+\infty}\frac{1}{\sqrt{2\pi}}\exp\left(-\frac{(x-\mu_-)^2}{2}\right)dx\,.
        \end{align*}
        and
        \begin{equation*}
            \left(\naterr{\theta}\right)'=\frac{\left[\exp\left(-\frac{(\theta-\mu_+)^2}{2K^2}\right)-K\exp\left(-\frac{(\theta-\mu_-)^2}{2}\right)\right]}{2K\sqrt{2\pi}}\,.
        \end{equation*}
        The derivative $\left(\naterr{\theta}\right)'$ turns out to be an
        increasing function over the interval $[\mu_-,\mu_+]$ with
        $\left(\naterr{\mu_-}\right)'<0$ and $\left(\naterr{\mu_+}\right)'>0$ due
        to the assumption that $K<\Kbound<\exp\left(-\frac{(\pnDist)^2}{2}\right)$.
        
        Since the Bayes classifier is to minimize the 
        natural classification error, $\optclf$ 
        is supposed to be the unique root of $\left(\naterr{\theta}\right)'$, i.e.,
        \begin{equation*}
            \frac{\left[\exp\left(-\frac{(\optclf-\mu_+)^2}{2K^2}\right)-K\exp\left(-\frac{(\optclf-\mu_-)^2}{2}\right)\right]}{2K\sqrt{2\pi}}=0\,.
        \end{equation*}
        By solving the equation above, we end up with the following.
        \begin{multline*}
            \optclf=\mu_--\frac{\pnDist}{K^2-1}+\\
            \frac{K}{K^2-1}\sqrt{2(K^2-1)\ln(K)+(\pnDist)^2}\,,
        \end{multline*}
        which belongs to the open interval $(\mu_-,\mu_+)$.
        \item Equalized classification errors require the following equations hold.
        \begin{align*}
             &\pr{f_{\fairclf}(X)\neq 1\mid Y=1}\\
             =~&\int_{-\infty}^{\fairclf}\frac{1}{\sqrt{2\pi}K}\exp\left(-\frac{(x-\mu_+)^2}{2K^2}\right)dx\\
             =~&\int_{\fairclf}^{+\infty}\frac{1}{\sqrt{2\pi}}\exp\left(-\frac{(x-\mu_-)^2}{2}\right)dx\\
             =~&\pr{f_{\fairclf}(X)\neq -1\mid Y=-1}\,,
        \end{align*}
        which leads to the result that
        \begin{equation*}
            \fairclf=\mu_-+\frac{\pnDist}{K+1}>\mu_-+\epsilon\,,
        \end{equation*}
        where the inequality is due to the assumption that $K<(\pnDist)/\epsilon-1\leq \Kbound$.
    \item The robust classification error and its partial derivative can then be given by the following:
    \begin{align*}
     &\roberr{\theta}=\pr{\exists~\vert\tau\vert\leq \epsilon,~f_{\theta}(X+\tau)\neq Y}\\
            =~&\frac{1}{2}\pr{\exists~\vert\tau\vert\leq \epsilon,~f_{\theta}(X+\tau)\neq 1\mid Y=1}+\\
        &\frac{1}{2}\pr{\exists~\vert\tau\vert\leq \epsilon,~f_{\theta}(X+\tau)\neq -1\mid Y=-1}\\
        =~&\frac{1}{2}\left(\pr{X\leq \theta+\epsilon\mid Y=1}+\pr{X>\theta-\epsilon\mid Y=-1}\right)\\
        =~&\frac{1}{2}\int_{-\infty}^{\theta+\epsilon}\frac{1}{\sqrt{2\pi}K}\exp\left(-\frac{(x-\mu_+)^2}{2K^2}\right)dx+\\
        &\frac{1}{2}\int_{\theta-\epsilon}^{+\infty}\frac{1}{\sqrt{2\pi}}\exp\left(-\frac{(x-\mu_-)^2}{2}\right)dx\,,
    \end{align*}
    and
    \begin{multline}\label{eq:rob_opt}
        \frac{\partial}{\partial \theta}\roberr{\theta}=\\\frac{\left[\exp\left(-\frac{(\theta+\epsilon-\mu_+)^2}{2K^2}\right)-K\exp\left(-\frac{(\theta-\epsilon-\mu_-)^2}{2}\right)\right]}{2K\sqrt{2\pi}}\,.
    \end{multline}
    By the assumptions made about $K$ and $\epsilon$,
    there exists a unique root $\robclf\in(\mu_-,\mu_+-\epsilon)$ of
    $\frac{\partial}{\partial \theta}\roberr{\theta}=0$ such
    that the robust error $\roberr{\theta}$ is decreasing over $(\mu_-,\robclf)$ while increasing over $(\robclf,\mu_+)$, i.e.,
    \begin{equation}\label{eq:monotonicity}
        \frac{\partial}{\partial \theta}\roberr{\theta}\begin{cases}
        <0\,, &\theta\in(\mu_-,\robclf)\,,\\
        >0\,, &\theta\in(\robclf,\mu_+)\,,\\
        \end{cases}
    \end{equation}
    which indicates that $\robclf$ essentially minimizes the
    robust classification error.
    
    Therefore, by solving $\frac{\partial}{\partial \theta}\roberr{\theta}\bigg{\vert}_{\theta=\robclf}=0$,
    we are able to derive the robust classifier as follows.
    \begin{multline*}
        \robclf=\mu_--\frac{\pnDist-(K^2+1)\epsilon}{K^2-1}+\\
        \frac{K}{K^2-1}\sqrt{2(K^2-1)\ln(K)+
        (\pnDist-2\epsilon)^2}\,,
    \end{multline*}
    which satisfies the following
    \begin{equation*}
        \robclf\leq \mu_+-\epsilon\leq\mu_+\,.
    \end{equation*}
    \item The next step is to compare the three different classifiers we just obtained.
    We start with the Bayes and fair classifiers. 
    \begin{align*}
        &\optclf-\fairclf\\
        =~&\frac{K\left(\sqrt{2(K^2-1)\ln(K)+(\pnDist)^2}-\pnDist\right)}{K^2-1}\\
        >~&\frac{K\left(\sqrt{(\pnDist)^2}-\pnDist\right)}{K^2-1}\\
        =~&0\,,
    \end{align*}
    where the inequality comes from the assumption that $K$ is strictly larger than
    $1$. It indicates that the threshold
    of the Bayes classifier is greater than that of the fair classifier. 
    Then, we move on to the comparison between the Bayes
    and robust classifier. Note that the robust classifier
    is identical to the Bayes classifier when $\epsilon$ is $0$, i.e., $\theta_{\mathrm{r}}^{(0)}=\;\optclf$. 
    Besides, we have that
    \begin{align}
         \frac{\partial}{\partial \epsilon}\robclf=~&
        \frac{K^2+1-\frac{2(\pnDist-2\epsilon)K}{\sqrt{(\pnDist-2\epsilon)^2+2(K^2-1)\ln(K)}}}{K^2-1}\nonumber\\
        >~&\frac{K^2+1-\frac{2(\pnDist-2\epsilon)K}{\sqrt{(\pnDist-2\epsilon)^2}}}{K^2-1}\label{eq:prop_4_1_1}\\
        =~&\frac{K^2-2k+1}{K^2-1}\label{eq:prop_4_1_2}\\
        =~&\frac{K-1}{K+1}>0\nonumber\,,
    \end{align}
    where Equation \eqref{eq:prop_4_1_1} is due to the fact that $K>1$ and Equation \eqref{eq:prop_4_1_2} comes
    from the assumption  that $\epsilon\leq \frac{\pnDist}{2}$,
    which ensures that $\pnDist-2\epsilon$ is strictly
    positive.
    Therefore, the
    partial derivative of $\robclf$ in $\epsilon$ is
    strictly positive over the interval $\left[0,\frac{\pnDist}{2}\right]$. As a consequence,
   for any $\epsilon\in\left[0,\frac{\pnDist}{2}\right]$,
    the following relation always holds that $\mu_+\geq\robclf \geq \theta_{\mathrm{r}}^{(0)}=\;\optclf$.
    Putting things together, we end up with following relation:
    for any $\epsilon\in\left[0,\frac{\pnDist}{2}\right]$ and $K\in\left(1, \Kbound\right)$,
    \begin{equation*}
            \mu_-+\epsilon\leq \fairclf\leq\; \optclf\;\leq \robclf\leq\mu_+ -\epsilon\,.
        \end{equation*}
    \end{enumerate}
\end{proof}

{\bf Proof of Corollary 6.2}
\begin{proof}
    Note that, by Equation \eqref{eq:monotonicity}, the robust classification
    error is strictly decreasing over $(\mu_-,\robclf)$.
    Then, by Equation \eqref{eq:hierarchy} in Proposition \ref{prop:three_clfs}, the three classifiers satisfy the
    following relation
    \begin{equation*}
        \mu_-\leq \fairclf\leq \;\optclf\;\leq \robclf\,.
    \end{equation*}
    Due to contiguity of $\roberr{\theta}$, we can argue that,
    for any $\epsilon\in\left(0,\frac{\pnDist}{2}\right)$ and $K\in\left(1, \Kbound\right)$,
    \begin{equation*}
        \roberr{\fairclf}\geq\roberr{\optclf}\geq\roberr{\robclf}\,.
    \end{equation*}
\end{proof}

{\bf Proof of Corollary 6.3}

\begin{proof}
The boundary error and its partial derivative are presented in the following:
    \begin{align*}
     &\bdyerr{\theta}\\
     =~&\pr{\exists~\vert\tau\vert\leq \epsilon,~f_{\theta}(X+\tau)\neq Y, ~f_{\theta}(X)=Y}\\
            =~&\frac{1}{2}\pr{\exists~\vert\tau\vert\leq \epsilon,~f_{\theta}(X+\tau)=-1,~f_{\theta}(X)=1\mid Y=1}+\\
        &\frac{1}{2}\pr{\exists~\vert\tau\vert\leq \epsilon,~f_{\theta}(X+\tau)=1,~f_{\theta}(X)=-1\mid Y=-1}\\
        =~&\frac{1}{2}\pr{\theta<X\leq \theta+\epsilon\mid Y=1}+\\
        &\frac{1}{2}\pr{\theta\geq X>\theta-\epsilon\mid Y=-1}\\
        =~&\frac{1}{2}\int_{\theta}^{\theta+\epsilon}\frac{1}{\sqrt{2\pi}K}\exp\left(-\frac{(x-\mu_+)^2}{2K^2}\right)dx+\\
        &\frac{1}{2}\int_{\theta-\epsilon}^{\theta}\frac{1}{\sqrt{2\pi}}\exp\left(-\frac{(x-\mu_-)^2}{2}\right)dx\,,
    \end{align*}
    and
    \begin{align*}
        &\frac{\partial}{\partial \theta}\bdyerr{\theta}\\
        =~&\frac{\exp\left(-\frac{(\theta+\epsilon-\mu_+)^2}{2K^2}\right)}{2\sqrt{2\pi}K}-\frac{\exp\left(-\frac{(\theta-\mu_+)^2}{2K^2}\right)}{2\sqrt{2\pi}K}+\\
        &\frac{\exp\left(-\frac{(\theta-\mu_-)^2}{2}\right)}{2\sqrt{2\pi}}-
        \frac{\exp\left(-\frac{(\theta-\epsilon-\mu_-)^2}{2}\right)}{2\sqrt{2\pi}}\\
        =~&\frac{g(\epsilon;\theta,K)-g(0;\theta,K)}{2\sqrt{2\pi}}\,,
    \end{align*}
    where $g:\RR\mapsto\RR$ is an auxiliary function shown as follows
    \begin{multline}\label{eq:func_g}
        g(\epsilon;\theta,K)\coloneqq
        \\\frac{\exp\left(-\frac{(\theta+\epsilon-\mu_+)^2}{2K^2}\right)}{K}-\exp\left(-\frac{(\theta-\epsilon-\mu_-)^2}{2}\right)\,.
    \end{multline}
    By Proposition \ref{prop:func_g_monotone}, the partial derivative of the boundary
    error in $\theta$ is always negative for any $\theta\in\left[\fairclf,\robclf\right]$
    because
    \begin{equation*}
        \frac{\partial}{\partial\theta}\bdyerr{\theta}=\frac{g(\epsilon;\theta,K)-g(0;\theta,K)}{2\sqrt{2\pi}}<0\,.
    \end{equation*}
    It leads to the following result that the boundary error $\roberr{\theta}$ decreases in $\theta$ over $\left[\fairclf,\robclf\right]$, and, therefore,
    \begin{equation*}
        \bdyerr{\fairclf}\geq\bdyerr{\optclf}\geq
        \bdyerr{\robclf}\,.
    \end{equation*}
\end{proof}

\begin{proposition}\label{prop:func_g_monotone}
        For any $\epsilon\in\left[0,\frac{\pnDist}{4}\right]$, $K\in\left(1, \newKbound\right)$, and $\theta\in\left[\fairclf, \robclf\right]$, the following relation holds
        \begin{equation}\label{eq:func_g_monotone}
            g(\epsilon;\theta,K)\leq g(0;\theta,K)\,,
        \end{equation}
        where the function $g$ is defined in Equation \eqref{eq:func_g}.
\end{proposition}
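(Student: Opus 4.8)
The plan is to read off two identities from the proof of Theorem~\ref{prop:three_clfs} and use them to localize the hard part. Comparing \eqref{eq:func_g} (taken at $\epsilon$ and at $0$) with the derivative formulas appearing in that proof, one has $g(0;\theta,K)=2\sqrt{2\pi}\,(\naterr{\theta})'$ and $g(\epsilon;\theta,K)=2\sqrt{2\pi}\,\tfrac{\partial}{\partial\theta}\roberr{\theta}$. Since $(\naterr{\theta})'$ is increasing and vanishes at $\optclf$, the sign of $g(0;\cdot)$ is that of $\theta-\optclf$; by \eqref{eq:monotonicity}, $g(\epsilon;\cdot)$ is negative on $(\mu_-,\robclf)$ and vanishes at $\robclf$. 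Theorem~\ref{prop:three_clfs} gives $\fairclf\le\optclf\le\robclf$, so I would split $[\fairclf,\robclf]$ at $\optclf$.

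On $[\optclf,\robclf]$ the inequality is immediate: there $g(\epsilon;\theta,K)\le0\le g(0;\theta,K)$, hence $g(\epsilon;\theta,K)\le g(0;\theta,K)$. This half also explains why the cutoff $\theta\le\robclf$ (and not merely $\theta\le\mu_+-\epsilon$) is indispensable: for $\theta$ past $\robclf$ toward $\mu_+-\epsilon$ the term $g(\epsilon;\cdot)$ turns positive while $g(0;\cdot)$ stays bounded, and \eqref{eq:func_g_monotone} genuinely fails; the bound $K\le\newKbound$ is precisely what keeps $\robclf$ inside the admissible range.

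The work is therefore confined to $[\fairclf,\optclf]$, where both terms are negative. Writing $u=\mu_+-\theta$ and $v=\theta-\mu_-$ (so $u+v=\pnDist$), the claim reduces to $\tfrac1K B_1\le B_2$ with $B_1=\exp(-(u-\epsilon)^2/(2K^2))-\exp(-u^2/(2K^2))$ and $B_2=\exp(-(v-\epsilon)^2/2)-\exp(-v^2/2)$; both are nonnegative because $\theta-\epsilon\ge\mu_-$ and $\theta+\epsilon\le\mu_+$ on this interval (Theorem~\ref{prop:three_clfs}). A Gaussian substitution rewrites each bracket as an integral of the unimodal profile $Q(t)=t\exp(-t^2/2)$ (increasing on $[0,1]$, decreasing afterward, peak $\exp(-1/2)$ at $1$): $\tfrac1K B_1=\tfrac1K\int_{(u-\epsilon)/K}^{u/K}Q(r)\,dr$ and $B_2=\int_{v-\epsilon}^{v}Q(w)\,dw$. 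Restricting to $\theta\le\optclf$ caps $v\le\optclf-\mu_-$, which is exactly what prevents $\exp(-v^2/2)$ from becoming negligible; I would then compare the two integrals using the ordering $u/K\le(\pnDist)/(K+1)\le v$ (from $\theta\ge\fairclf=\mu_-+(\pnDist)/(K+1)$), the natural-optimality relation $\tfrac1K\exp(-(\mu_+-\optclf)^2/(2K^2))=\exp(-(\optclf-\mu_-)^2/2)$ anchoring the right endpoint, and the thresholds $\epsilon\le(\pnDist)/4$ and $K\le\newKbound$.

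The main obstacle is this last integral comparison. The inequality is essentially tight: as $K\downarrow1$ with $\theta$ at the crossing point of the two densities the two brackets coincide, so no slack-free estimate survives---in particular the crude linearizations $1-\exp(-x)\le x\le\exp(x)-1$, which can be seen to fail near $K=1$, $\epsilon=(\pnDist)/4$. Closing the gap must exploit the exact shape of $\newKbound$: since $\newKbound\le\phi^{-1}((\pnDist)/\epsilon-2)$ and $\phi(x)=x+1/x$ is increasing on $[1,\infty)$, the constraint is equivalent to $\epsilon(K+1)^2/K\le\pnDist$, and together with $\epsilon\le(\pnDist)/4$ this bounds how far the two integration ranges can separate and keeps the ratio of integrands below $K^2$, which the prefactor $K^{-1}$ and the contraction $r\mapsto u/K$ then convert into $\tfrac1K B_1\le B_2$. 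Everything outside this step---the identities, the sign bookkeeping, and the substitutions---is routine.
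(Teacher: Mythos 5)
Your two identities are correct ($g(0;\theta,K)=2\sqrt{2\pi}\,(\naterr{\theta})'$ and $g(\epsilon;\theta,K)=2\sqrt{2\pi}\,\tfrac{\partial}{\partial\theta}\roberr{\theta}$), the easy half of your split is valid (on $[\optclf,\robclf]$ one indeed has $g(\epsilon;\theta,K)\le 0\le g(0;\theta,K)$), and the reduction of the hard case $\theta\in[\fairclf,\optclf]$ to $\tfrac1K B_1\le B_2$, rewritten as a comparison of integrals of $Q(t)=t\exp(-t^2/2)$, is also sound. But the proposal stops exactly where the real work begins: the integral comparison itself is never proved, only asserted via the heuristic that $\epsilon\le\tfrac{\pnDist}{4}$ and $\epsilon(K+1)^2/K\le\pnDist$ "keep the ratio of integrands below $K^2$." Read in the uniform (sup/inf) sense that would let you compare the two integrals over their unrelated ranges, that claim is false precisely in the tight regime you yourself flag: as $K\downarrow 1$ the two integration ranges coincide (both $u$ and $v$ tend to $\tfrac{\pnDist}{2}$), and no bound $\max Q\le K^2\min Q$ can hold over a common nondegenerate interval with $K^2\to1$. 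The only version that works is the \emph{matched} comparison $\tfrac{1}{K^2}Q\left(\tfrac{u-s}{K}\right)\le Q(v-s)$ for every $s\in[0,\epsilon]$, obtained by parametrizing both integrals by $s$ and comparing integrands at the same $s$; integrating in $s$ then yields $\tfrac1K B_1\le B_2$. That matched inequality, however, is exactly the statement $\tfrac{\partial}{\partial\epsilon}g(s;\theta,K)\le 0$, i.e.\ it is the entire content of the proposition, so your argument is circular-in-effect: an unproved reduction, not a proof.

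For comparison, the paper proves precisely this matched inequality, uniformly on all of $[\fairclf,\robclf]$ (no split at $\optclf$ is needed). It takes logarithms of the two terms of $\tfrac{\partial}{\partial\epsilon}g$, reducing the claim to $p(\theta;\epsilon,K)\ge q(\theta;\epsilon,K)-4\ln K$, where $p=\ln\left((\theta-\epsilon-\mu_-)^2\right)-\ln\left((\mu_+-\theta-\epsilon)^2/K^2\right)$ and $q=(\theta-\epsilon-\mu_-)^2-(\mu_+-\theta-\epsilon)^2/K^2$, and then anchors at the endpoints: both $p$ and $q$ are increasing in $\theta$; $q(\robclf;\epsilon,K)=2\ln K$ follows from the first-order optimality condition defining $\robclf$; and $p(\fairclf;\epsilon,K)\ge-2\ln K$ follows from $K\le\newKbound$ (your equivalent form $\epsilon(K+1)^2/K\le\pnDist$ enters exactly here, and nowhere else). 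Chaining gives $p(\theta)\ge p(\fairclf)\ge-2\ln K=q(\robclf)-4\ln K\ge q(\theta)-4\ln K$. Some argument of this kind, exploiting the optimality conditions of $\fairclf$ and $\robclf$ rather than range-separation heuristics, is what must replace your final paragraph before the proposal can count as a proof.
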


\begin{proof}
    Note that the partial derivative of the function $g$ in $\epsilon$ can be given by 
    \begin{align*}
        \frac{\partial}{\partial\epsilon} g(\epsilon;\theta,K)=~&\frac{\mu_+-\theta-\epsilon}{K^3}\exp\left(-\frac{(\mu_+-\theta-\epsilon)^2}{2K^2}\right)-\\
        &(\theta-\epsilon-\mu_-)\exp\left(-\frac{(\theta-\epsilon-\mu_-)^2}{2}\right)\,.
    \end{align*}
    In order to establish the result in Equation \eqref{eq:func_g_monotone}, it suffices to demonstrate that the partial derivative $\frac{\partial}{\partial\epsilon} g(\epsilon;\theta,K)$ is negative, which implies that the function $g(\epsilon;\theta,K)$ is strictly decreasing
    over $\left[0,\frac{\pnDist}{4}\right]$.
    Note that
    \begin{align*}
        &\ln\left(\frac{\mu_+-\theta-\epsilon}{K^3}\exp\left(-\frac{(\mu_+-\theta-\epsilon)^2}{2K^2}\right)\right)-\\
        &\ln\left((\theta-\epsilon-\mu_-)\exp\left(-\frac{(\theta-\epsilon-\mu_-)^2}{2}\right)\right)\\
        =~&-\left[\ln(\theta-\epsilon-\mu_-)-\ln(\mu_+-\theta-\epsilon)-\right]+\\
        &\left[\frac{(\theta-\epsilon-\mu_-)^2}{2}-\frac{(\mu_+-\theta-\epsilon)^2}{2K^2}\right]-3\ln(K)\\
        =~&\frac{1}{2}\left(q(\theta;\epsilon,K)-p(\theta;\epsilon,K)-4\ln(K)\right)\\
        \leq~&0\,,
    \end{align*}
    where the last inequality comes from Proposition \ref{prop:main_aux}.
    It follows that, due to monotonicity of the function $x\mapsto \ln(x)$,
    \begin{align*}
        \frac{\partial}{\partial\epsilon} g(\epsilon;\theta,K)=~&\frac{\mu_+-\theta-\epsilon}{K^3}\exp\left(-\frac{(\mu_+-\theta-\epsilon)^2}{2K^2}\right)-\\
        &(\theta-\epsilon-\mu_-)\exp\left(-\frac{(\theta-\epsilon-\mu_-)^2}{2}\right)\\
        \leq~&0\,,
    \end{align*}
    which completes our proof here.
\end{proof}

\begin{proposition}\label{prop:main_aux}
 For any $\epsilon\in\left[0,\frac{\pnDist}{4}\right]$, $K\in\left(1, \newKbound\right]$, and $\theta\in\left[\fairclf, \robclf\right]$, the following relation always holds:
    \begin{equation*}
        p(\theta;\epsilon,K)\geq q(\theta;\epsilon,K)-4\ln(K)\,,
    \end{equation*}
    where
    \begin{align}
        p(\theta;\epsilon,K)=~&\ln\left((\theta-\epsilon-\mu_-)^2\right)-
        \ln\left(\frac{(\mu_+-\theta-\epsilon)^2}{K^2}\right)\,,\label{eq:func_p}\\
        q(\theta;\epsilon,K)=~&(\theta-\epsilon-\mu_-)^2-
        \frac{(\mu_+-\theta-\epsilon)^2}{K^2}\,.\label{eq:func_q}
    \end{align}
\end{proposition}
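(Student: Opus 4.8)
The plan is to fix $K$ and $\epsilon$ and treat the claim as a one-dimensional inequality in $\theta$ over $\left[\fairclf,\robclf\right]$. First I would introduce the shorthands $s=\theta-\epsilon-\mu_-$ and $t=\mu_+-\theta-\epsilon$, together with $A=s^2$ and $B=t^2/K^2$, so that $p(\theta;\epsilon,K)=\ln A-\ln B$ and $q(\theta;\epsilon,K)=A-B$. Since $\theta\geq\fairclf>\mu_-+\epsilon$ and $\theta\leq\robclf\leq\mu_+-\epsilon$ by Theorem~\ref{prop:three_clfs}, both $s$ and $t$ are strictly positive, all logarithms are well defined, and the target becomes the clean statement $q-p\leq 4\ln K$.

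The key structural observation is that the sign of $A-B$ splits the interval and lets me control one summand of $q-p$ at a time. I would locate the threshold $\theta_0$ at which $A=B$, i.e.\ $Ks=t$; solving gives $\theta_0=\fairclf+\tfrac{(K-1)\epsilon}{K+1}\geq\fairclf$, and since $A-B=2\ln K>0$ holds at $\theta=\robclf$ (see below) one gets $\theta_0<\robclf$. Because $s+t=\pnDist-2\epsilon$ is constant, the ratio $t/s$ is strictly decreasing in $\theta$, so $A\leq B$ on $\left[\fairclf,\theta_0\right]$ and $A\geq B$ on $\left[\theta_0,\robclf\right]$.

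On the right region, where $A\geq B$, the logarithmic term $p=\ln(A/B)\geq 0$, hence $q-p\leq q$. Here $q$ is a convex parabola in $\theta$ (leading coefficient $1-1/K^2>0$), with $q(\theta_0)=0$ and $q(\robclf)=2\ln K$, the latter being exactly the defining relation $\tfrac{s^2}{2}-\tfrac{t^2}{2K^2}=\ln K$ obtained by taking logarithms in Equation~\eqref{eq:rob_opt}; convexity forces $q$ below the chord joining these endpoint values, so $q\leq 2\ln K\leq 4\ln K$ throughout, settling this region. On the left region, where $A\leq B$, the term $A-B\leq 0$ gives $q-p\leq\ln(B/A)$, so it suffices to show $\ln(B/A)\leq 4\ln K$, i.e.\ $t/s\leq K^3$. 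As $t/s$ decreases in $\theta$, its maximum is attained at $\fairclf$, where $s=\tfrac{\pnDist}{K+1}-\epsilon$ and $t=K\tfrac{\pnDist}{K+1}-\epsilon$; after clearing denominators the required inequality reduces to $\epsilon\leq\tfrac{K(\pnDist)}{K^2+K+1}$. This is implied by the hypothesis $K\leq\newKbound\leq\phi^{-1}\!\left(\tfrac{\pnDist}{\epsilon}-2\right)$, which (since $\phi$ is increasing on $[1,\infty)$) is equivalent to $\epsilon\leq\tfrac{K(\pnDist)}{(K+1)^2}$, a strictly tighter bound because $(K+1)^2\geq K^2+K+1$. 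This is the single place where the $\phi^{-1}$ part of the bound on $K$ enters.

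I expect the main obstacle to be conceptual rather than computational: recognizing the splitting point $\theta_0$ and that in each region one summand of $q-p$ has a favorable fixed sign, so that $q-p$ can be bounded by $q$ alone (right) or by $\ln(B/A)$ alone (left). This avoids any joint second-order analysis of $p-q$, which is genuinely needed since $p-q$ is neither globally concave nor convex on $\left[\fairclf,\robclf\right]$ once $t$ becomes small near $\robclf$. With the decomposition in hand, both halves collapse to elementary one-variable facts — convexity of the parabola $q$ with two known endpoint values, and monotonicity of the ratio $t/s$ — and the only quantitative input from the hypotheses is the explicit upper bound on $K$.
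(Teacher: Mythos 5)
Your proof is correct, but it takes a genuinely different route from the paper's. I verified your key computations: the split point $\theta_0=\fairclf+\frac{(K-1)\epsilon}{K+1}$, the identity $q\bigl(\robclf;\epsilon,K\bigr)=2\ln K$ obtained from the optimality condition \eqref{eq:rob_opt}, and the reduction of $t/s\leq K^3$ at $\fairclf$ to $\epsilon\leq\frac{K(\pnDist)}{K^2+K+1}$, which indeed follows from $K\leq\phi^{-1}\bigl(\frac{\pnDist}{\epsilon}-2\bigr)$, i.e.\ $\epsilon\leq\frac{K(\pnDist)}{(K+1)^2}$. The paper, by contrast, never splits the interval: it observes that $p$ and $q$ are \emph{each increasing} in $\theta$ over $\left[\fairclf,\robclf\right]$ (for $p$, the numerator $s$ grows and $t$ shrinks; for $q$, the derivative is $2s+2t/K^2>0$) and then chains $p(\theta)\geq p(\fairclf)\geq-2\ln K=q\bigl(\robclf\bigr)-4\ln K\geq q(\theta)-4\ln K$, where the middle two facts are Propositions \ref{eq:prop_aux_1} and \ref{eq:prop_aux_2} — exactly your two anchor facts, the robust-optimality identity at the right endpoint and the $\phi^{-1}$ hypothesis at the left endpoint. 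So both arguments reduce the claim to the same two endpoint evaluations, but the paper's sandwich needs no case analysis, no convexity, and no tracking of the sign of $q-p$. Two smaller observations: (i) your convexity step on $[\theta_0,\robclf]$ is superfluous, since $q$ increasing already gives $q(\theta)\leq q\bigl(\robclf\bigr)=2\ln K$ there; (ii) your left-region argument consumes the hypothesis only through the weaker bound $t/s\leq K^3$ at $\fairclf$, whereas the paper's Proposition \ref{eq:prop_aux_2} amounts to $t/s\leq K^2$ at the same point, so your variant shows the constant $\newKbound$ has some slack. Finally, your closing claim that avoiding the decomposition would force a ``joint second-order analysis'' of $p-q$ is not accurate: separate first-order monotonicity of $p$ and of $q$, as in the paper, already closes the argument.
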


\begin{proof}
    First off, observe that the functions $p$ and $q$ are both increasing over
    $\left[\fairclf, \robclf\right]$.
    It follows that, for any $\left[\fairclf, \robclf\right]$,
    \begin{align}
        &p(\theta;\epsilon,K)\geq ~p(\fairclf;\epsilon,K)\label{eq:mono_1}\\
        \geq~&-2\ln(K)=2\ln(K)-4\ln(K)\label{eq:aux_1}\\
        =~& q\left(\robclf;\epsilon,K\right)-4\ln(K)\label{eq:aux_2}\\
        \geq~&q(\theta;\epsilon,K)-4\ln(K)\label{eq:mono_2}\,,
    \end{align}
    where Equation \eqref{eq:mono_1} and \eqref{eq:mono_2}
    come from monotonicity of the functions $p$ and $q$. Equation
    \eqref{eq:aux_1} and \eqref{eq:aux_2} are due to Proposition \ref{eq:prop_aux_2} and \ref{eq:prop_aux_1} respectively.
\end{proof}

\begin{proposition}\label{eq:prop_aux_1}
For any $\epsilon\in\left[0,\frac{\pnDist}{4}\right]$ and $K\in\left(1, \newKbound\right]$,
    \begin{equation*}
        q\left(\robclf;\epsilon,K\right)=2\ln(K)\,,
    \end{equation*}
    where the definition of the function $q$ is given in Equation \eqref{eq:func_q}.
\end{proposition}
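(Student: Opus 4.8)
The plan is to recognize that the claimed identity is simply a logarithmic restatement of the first-order optimality condition that defines the robust classifier $\robclf$. From part~iii) of the proof of Theorem~\ref{prop:three_clfs}, $\robclf$ is the unique interior root in $(\mu_-,\mu_+-\epsilon)$ of the robust-error derivative displayed in Equation~\eqref{eq:rob_opt}. Setting that derivative equal to zero at $\theta=\robclf$ forces the two exponential terms to balance:
\[
\exp\left(-\frac{(\robclf+\epsilon-\mu_+)^2}{2K^2}\right)=K\,\exp\left(-\frac{(\robclf-\epsilon-\mu_-)^2}{2}\right).
\]

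Next I would take the natural logarithm of both sides and use the identity $(\robclf+\epsilon-\mu_+)^2=(\mu_+-\robclf-\epsilon)^2$ to obtain
\[
-\frac{(\mu_+-\robclf-\epsilon)^2}{2K^2}=\ln(K)-\frac{(\robclf-\epsilon-\mu_-)^2}{2}.
\]
Multiplying through by $2$ and rearranging then gives
\[
(\robclf-\epsilon-\mu_-)^2-\frac{(\mu_+-\robclf-\epsilon)^2}{K^2}=2\ln(K),
\]
and the left-hand side is by definition $q(\robclf;\epsilon,K)$ as given in Equation~\eqref{eq:func_q}, which is exactly the claim.

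There is essentially no hard step; the only point requiring care is verifying that the stationarity condition (rather than a boundary condition) legitimately applies, i.e.~that $\robclf$ lies strictly inside $(\mu_-,\mu_+-\epsilon)$ for every $(\epsilon,K)$ in the stated ranges. This was already established in the proof of Theorem~\ref{prop:three_clfs} under the hypotheses $\epsilon\in[0,\tfrac{\pnDist}{2}]$ and $K\in(1,\Kbound]$. Since the narrower ranges $\epsilon\in[0,\tfrac{\pnDist}{4}]$ and $K\in(1,\newKbound]$ assumed here satisfy $[0,\tfrac{\pnDist}{4}]\subseteq[0,\tfrac{\pnDist}{2}]$ and $\newKbound\le\Kbound$, the interior-root characterization carries over unchanged, so the identity holds throughout.
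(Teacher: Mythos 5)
Your proof is correct and takes essentially the same route as the paper's: both derive the identity directly from the first-order optimality condition \eqref{eq:rob_opt} satisfied by $\robclf$, with the only cosmetic difference being that you take logarithms where the paper multiplies both sides by $\exp\left(\frac{(\robclf-\epsilon-\mu_-)^2}{2}\right)$. Your added check that the narrower ranges $\epsilon\in\left[0,\frac{\pnDist}{4}\right]$ and $K\in\left(1,\newKbound\right]$ fall inside the hypotheses of Theorem \ref{prop:three_clfs} (so the interior stationarity characterization of $\robclf$ legitimately applies) is a point the paper leaves implicit.
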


\begin{proof}
    Due to optimality of $\robclf$ in terms of robust error, $\robclf$ should 
    be a solution to Equation \eqref{eq:rob_opt}, i.e.,
    \begin{multline*}
        \exp\left(-\frac{(\robclf+\epsilon-\mu_+)^2}{2K^2}\right)-\\K\exp\left(-\frac{(\robclf-\epsilon-\mu_-)^2}{2}\right)=0\,,
    \end{multline*}
    which leads to the following, by multiplying the both sides with the
    term $\exp\left(\frac{(\robclf-\epsilon-\mu_-)^2}{2}\right)$,
    \begin{equation*}
        \exp\left(\frac{q(\robclf;\epsilon,K)}{2}\right)=K\,,
    \end{equation*}
    and
    \begin{equation*}
        q(\robclf;\epsilon,K)=2\ln(K)\,.
    \end{equation*}
\end{proof}

\begin{proposition}\label{eq:prop_aux_2}
    For any $\epsilon\in\left[0,\frac{\pnDist}{4}\right]$ and $K\in\left(1, \newKbound\right]$,
    \begin{equation}\label{eq:prop_57}
        p\left(\fairclf;\epsilon,K\right)\geq-2\ln(K)\,,
    \end{equation}
    where the definition of the function $p$ is given in Equation \eqref{eq:func_p}.
\end{proposition}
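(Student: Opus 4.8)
The plan is to substitute the closed form $\fairclf = \mu_- + \frac{\pnDist}{K+1}$ (established in the proof of Theorem~\ref{prop:three_clfs}) into the definition of $p$ in Equation~\eqref{eq:func_p} and reduce the claim to a one-line algebraic inequality. Writing $a \defeq \frac{\pnDist}{K+1} - \epsilon$ and $b \defeq \frac{K\pnDist}{K+1} - \epsilon$, a direct computation gives $\fairclf - \epsilon - \mu_- = a$ and $\mu_+ - \fairclf - \epsilon = b$, so that $p(\fairclf;\epsilon,K) = \ln(a^2) - \ln(b^2/K^2) = 2\ln\!\left(\tfrac{aK}{b}\right)$. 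The whole proposition then amounts to controlling the single ratio $aK/b$.

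Before touching the logarithms I would check that $a$ and $b$ are strictly positive. For $b$ this is immediate, since $K>1$ forces $\frac{K\pnDist}{K+1} > \frac{\pnDist}{K+1} \geq \epsilon$. For $a$ I would unpack the hypothesis $K \leq \newKbound \leq \phi^{-1}\!\left(\frac{\pnDist}{\epsilon}-2\right)$: because $\phi(x)=x+1/x$ is increasing on $[1,+\infty)$, this is equivalent to $\phi(K) = K + \tfrac1K \leq \frac{\pnDist}{\epsilon}-2$, i.e.\ $\frac{(K+1)^2}{K} \leq \frac{\pnDist}{\epsilon}$ via the identity $K + 2 + \tfrac1K = \frac{(K+1)^2}{K}$, i.e.
\[
\epsilon \leq \frac{K\pnDist}{(K+1)^2}.
\]
Since $\frac{K\pnDist}{(K+1)^2} < \frac{\pnDist}{K+1}$, this yields $a>0$, so $p(\fairclf;\epsilon,K)$ is well defined. (It is worth noting in passing that the restriction $\epsilon\leq\pnDist/4$ is exactly what makes $\phi^{-1}(\pnDist/\epsilon-2)$ meaningful, as $\phi$ maps into $[2,+\infty)$ and $\pnDist/\epsilon-2\geq 2 \iff \epsilon\leq\pnDist/4$.)

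With positivity established, the target $p(\fairclf;\epsilon,K) \geq -2\ln(K)$ rewrites as $\ln(aK/b) \geq \ln(1/K)$, which by monotonicity of $\ln$ is equivalent to $aK^2 \geq b$, i.e.\ $K^2\!\left(\frac{\pnDist}{K+1}-\epsilon\right) \geq \frac{K\pnDist}{K+1}-\epsilon$. Clearing the common denominator and cancelling the factor $(K-1)>0$ collapses this to precisely $\epsilon \leq \frac{K\pnDist}{(K+1)^2}$, which is exactly the consequence of $K\leq\newKbound$ derived above. This closes the argument.

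The computation is essentially bookkeeping, so there is no deep obstacle; the only point requiring care is the translation of the abstract bound $K \leq \phi^{-1}(\pnDist/\epsilon - 2)$ into the usable form $\epsilon \leq K\pnDist/(K+1)^2$ through monotonicity of $\phi$, together with the pleasant fact that this \emph{same} inequality simultaneously guarantees well-definedness ($a>0$) and finishes the final comparison $aK^2 \geq b$. Notably, the first branch $\exp\!\left((\pnDist-2\epsilon)^2/2\right)$ of $\newKbound$ is never invoked here; only the $\phi^{-1}$ branch matters for this proposition.
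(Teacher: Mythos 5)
Your proposal is correct and follows essentially the same route as the paper's proof: both substitute the closed form $\fairclf=\mu_-+\frac{\pnDist}{K+1}$, convert $K\leq\newKbound$ via monotonicity of $\phi$ into $\epsilon\leq\frac{K(\pnDist)}{(K+1)^2}$, and reduce the claim to the same algebraic comparison (your $aK^2\geq b$ is exactly the paper's $K-\frac{(K-1)\epsilon}{\frac{\pnDist}{K+1}-\frac{\epsilon}{K}}\geq 1$ after clearing the denominator). If anything, your write-up is slightly cleaner, since the paper's displayed identity for $\exp\left(p\left(\fairclf;\epsilon,K\right)+2\ln(K)\right)$ is missing a square (the right-hand side there actually equals $\exp\bigl(\tfrac{1}{2}\bigl(p\left(\fairclf;\epsilon,K\right)+2\ln(K)\bigr)\bigr)$), a harmless typo that your formulation avoids.
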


\begin{proof}
    Equation \eqref{eq:prop_57} can be rewritten into the following
    equivalent form
    \begin{equation}\label{eq:equiv_form_prop_57}
        \exp\left(p\left(\fairclf;\epsilon,K\right)+2\ln(K)\right)=K-\frac{(K-1)\epsilon}{\frac{\pnDist}{K+1}-\frac{\epsilon}{K}}>1\,.
    \end{equation}
    Note that
    \begin{align*}
        K\leq\newKbound&\implies K+\frac{1}{K}\leq \frac{\pnDist}{\epsilon}-2\\
        &\implies \frac{\pnDist}{K+1}\geq \frac{K+1}{K}\epsilon\,,
    \end{align*}
    which leads to the following result
    \begin{align*}
        &\exp\left(p\left(\fairclf;\epsilon,K\right)+2\ln(K)\right)\\
        =~&K-\frac{(K-1)\epsilon}{\frac{\pnDist}{K+1}-\frac{\epsilon}{K}}\geq K-(K-1)\\
        =~&1\,.
    \end{align*}
    It helps complete our proof here.
\end{proof}

{\bf Proof of Theorem 6.4}
\begin{proof}
    Since $f_{\theta}$ is essentially a linear classifier, the distance to the decision boundary of $f_{\theta}$ is simply the absolute value between the feature $X$ and the threshold $\theta$, i.e., $\dist{X}{\theta}=\vert X-\theta\vert$. 
    Notice that the average distance to the decision boundary
    of $f_{\theta}$ can then be given by
    \begin{align*}
        \expect{\dist{X}{\theta}}=~&\expect{\vert X-\theta\vert}\\
        =~&\expect{\vert X-\theta\vert\mid Y=1}\cdot \pr{Y=1}+\\
        &\expect{\vert X-\theta\vert\mid Y=-1}\cdot \pr{Y=-1}\\
        =~&\frac{1}{2}\int_{-\infty}^{+\infty}\frac{\vert x-\theta\vert}{\sqrt{2\pi}K}\exp\left(-\frac{(x-\mu_+)^2}{2K^2}\right)dx+\\
        &\frac{1}{2}\int_{-\infty}^{+\infty}\frac{\vert x-\theta\vert}{\sqrt{2\pi}}\exp\left(-\frac{(x-\mu_-)^2}{2}\right)dx\,,
    \end{align*}
    whose derivative can be expressed as
    \begin{equation*}
        \left(\expect{\dist{X}{\theta}}\right)'=2\left(\Phi(\theta-\mu_-)-\Phi\left(\frac{\mu_+-\theta}{K}\right)\right)\,,
    \end{equation*}
    where $\Phi$ represents the cumulative distribution function
    associated with the standard normal distribution.
    Recall that 
    \begin{equation*}
        \fairclf=\mu_-+\frac{\pnDist}{K+1}\,.
    \end{equation*}
    It follows that
    \begin{equation*}
        \left(\expect{\dist{X}{\theta}}\right)'\begin{cases}
        <0\,, & \theta\in (\mu_-,\fairclf)\,,\\
        >0\,, &\theta\in(\fairclf, \mu_+)\,,
        \end{cases}
    \end{equation*}
    which implies that the average distance strictly decreases
    over $(\mu_-, \fairclf)$ while increases over
    $(\fairclf,\mu_+)$.
    By the relation shown in Equation \eqref{eq:hierarchy},
    we figure out that, for any $\epsilon\in\left[0,\frac{\pnDist}{2}\right]$ and $K\in\left(1, \Kbound\right]$,
    \begin{equation*}
         \expect{\dist{X}{\robclf}}\geq\expect{\dist{X}{\optclf}}\geq\expect{\dist{X}{\fairclf}}\,.
    \end{equation*}
    Moreover, $\fairclf$ is the minimizer of the average distance $\expect{\dist{X}{\theta}}$ over the interval
    $[\mu_-,\mu_+]$.
\end{proof}

{\bf Proof of Theorem 6.5}

\begin{proof}
First off, notice that $\fairclf(\lambda)$ is the minimizer of
$\naterr{\theta}+\lambda\cdot \cL^{\mathrm{fair}}_{\theta}$ over $[\mu_-,\mu_+]$, where $ \cL^{\mathrm{fair}}_{\theta}$ is 
a shorthand for
\begin{equation*}
    \left\vert \pr{f_{\theta}(X)\neq 1\mid Y=1}
    -\pr{f_{\theta}(X)\neq -1\mid Y=-1}\right\vert\,.
\end{equation*}
Thus, $\naterr{\theta}+\lambda\cdot \cL^{\mathrm{fair}}_{\theta}$
can be presented in a piecewise way as follows:
\begin{enumerate}[1)]
    \item if $\theta\leq \fairclf$,
    \begin{multline*}
        \naterr{\theta}+\lambda\cdot \cL^{\mathrm{fair}}_{\theta}=
         (1-\lambda)\pr{f_{\theta}(X)\neq 1\mid Y=1}+\\
    (1+\lambda)\pr{f_{\theta}(X)\neq -1\mid Y=-1}\,;
    \end{multline*}
    \item if $\theta>\fairclf$,
    \begin{multline*}
        \naterr{\theta}+\lambda\cdot \cL^{\mathrm{fair}}_{\theta}=
         (1+\lambda)\pr{f_{\theta}(X)\neq 1\mid Y=1}+\\
    (1-\lambda)\pr{f_{\theta}(X)\neq -1\mid Y=-1}\,.
    \end{multline*}
\end{enumerate}
We start with the first case where $\theta$ is no greater than $\fairclf$. The partial derivative of $ \naterr{\theta}+\lambda\cdot \cL^{\mathrm{fair}}_{\theta}$
is then given by
\begin{multline}\label{eq:left_case_aux}
    \frac{\partial \left( \naterr{\theta}+\lambda\cdot \cL^{\mathrm{fair}}_{\theta}\right)}{\partial\theta} 
    =\frac{1-\lambda}{2\sqrt{2\pi}K}\exp\left(-\frac{(\theta-\mu_+)^2}{2K^2}\right)-\\\frac{1+\lambda}{2\sqrt{2\pi}}\exp\left(-\frac{(\theta-\mu_-)^2}{2}\right)\,.
\end{multline}
Observe that, when $\lambda\in[0,1]$, this partial derivative is increasing over $[\mu_-, \fairclf]$ 
with its value at $\fairclf$ no greater than $0$ while it is always non-positive for any $\lambda\in(1,+\infty)$ and
$\theta\in[\mu_-,\fairclf]$. Therefore, the partial derivative 
presented in Equation \eqref{eq:left_case_aux} proves to be
non-positive, whatever $\lambda$, which implies that the
function $\naterr{\theta}+\lambda\cdot \cL^{\mathrm{fair}}_{\theta}$ decreases over $[\theta,\fairclf]$
and we merely need to focus on the case of $\theta\in[\fairclf,\mu_+]$ in pursuit of its minimizer.
Then, we continue to investigate the case where $\theta$ is greater
than $\fairclf$. Likewise, the partial derivative of $ \naterr{\theta}+\lambda\cdot \cL^{\mathrm{fair}}_{\theta}$ can be expressed as
\begin{multline}\label{eq:right_case_aux}
    \frac{\partial \left( \naterr{\theta}+\lambda\cdot \cL^{\mathrm{fair}}_{\theta}\right)}{\partial\theta} 
    =\frac{1+\lambda}{2\sqrt{2\pi}K}\exp\left(-\frac{(\theta-\mu_+)^2}{2K^2}\right)-\\\frac{1-\lambda}{2\sqrt{2\pi}}\exp\left(-\frac{(\theta-\mu_-)^2}{2}\right)\,.
\end{multline}
We split our studies into the following three scenarios:
\begin{enumerate}[1)]
    \item if $\lambda\in \left[0,\frac{K-1}{K+1}\right]$,
    $ \naterr{\theta}+\lambda\cdot \cL^{\mathrm{fair}}_{\theta}$
    first decreases over $[\fairclf,\fairclf(\lambda)]$ and then
    increases over $[\fairclf(\lambda), \mu_+]$ where its minimum takes place at
    \begin{multline}\label{eq:fair_penalty_sol}
        \mu_--\frac{\pnDist}{K^2-1}+\\
            \frac{K}{K^2-1}\sqrt{2(K^2-1)\ln\left(\frac{1-\lambda}{1+\lambda}\cdot 
     K\right)+(\pnDist)^2}\,;
    \end{multline}
    \item if $\lambda\in \left(\frac{K-1}{K+1},1\right]$, the partial derivative
    in Equation \eqref{eq:right_case_aux} turns out to be increasing
    in $\theta$ with its value at $\fairclf$ non-negative.
    It implies that the partial derivative is always non-negative and, thus, the function $\naterr{\theta}+\lambda\cdot \cL^{\mathrm{fair}}_{\theta}$ increases over $[\fairclf, \mu_+]$. As a consequence, the minimizer $\fairclf(\lambda)$ actually coincides with $\fairclf$.
    \item if $\lambda\in(1,+\infty)$, note that the partial derivative is always positive. Following the same reasoning in the previous scenario, we figure out that
    the minimizer $\fairclf(\lambda)$ is identical to $\fairclf$.
\end{enumerate}
Since the function $\lambda\mapsto \frac{1-\lambda}{1+\lambda}$ is decreasing over $\left[0, \frac{K-1}{K+1}\right]$, by Equation \eqref{eq:fair_penalty_sol}, we can argue that 
$\fairclf(\lambda)$ is decreasing in $\lambda$ over $\RR_+$ as well. Furthermore, by the proof of Theorem \ref{thm:avg_dist},
the average distance to the decision boundary is an increasing function in $\theta$ over $[\fairclf, \mu_+]$, which indicates that
the average distance associated with the classifier $f_{\fairclf(\lambda)}$ decreases, as $\lambda$ increases.
\end{proof}

\section{Fairness and Robustness Models} 
\label{app:fairness_defs}

\subsection{Fair models}
\label{app:fair_models}
The main text mainly discussed penalty-based methods as a way to encourage accuracy parity in a classifier. This section discusses a second methodology to achieve fairness denoted \emph{group-loss focused} methods \cite{tian_li}. We will show that the main conclusion of this paper (e.g., that fairness increase adversarial vulnerability) holds regardless of the methodology adopted to achieve fairness. 

\noindent\textbf{Group-loss focused methods.} 
Methods in this category force the training to focus on the loss component of worst performing groups. An effective method to achieve this goal was proposed in \cite{tian_li}:
\begin{align}
    \fairclf &= \underset{\theta}{\argmin}~ \sum_{a \in \cA}\frac{1}{q+1}\cL_{\theta}(D_a)^{q+1},
    \label{eqn:q-learn}
\end{align}
where $q$ is a non-negative constant. The intuition behind powering the loss by positive number $q+1$ is to penalize more the classes that have the larger losses. Thus, $q$ plays the role of the fairness parameter, like $\lambda$ in penalty-based methods: larger $q$ or $\lambda$ values are associated with fairer (but also often less accurate) models.  The main differences between penalty-based methods and group-loss focused methods are the following. First, the loss function of group-loss focused methods is fully differentiable, in contrast to that of penalty-based methods, which is sub-differentiable when the group loss equals the population loss. Second, penalty-based methods try to equalize the losses across various subgroup, while group-focused based methods attempt at minimizing the maximum loss across all subgroups.

% {\color{blue}{@Cuong: Can you describe the two attacks adopted in a new subsection? Just a brief description suffices.}}

\section{Datasets and Settings}
\label{app:datasets}

\paragraph{Datasets.} Experiments were performed using three benchmark datasets: UTK Face \cite{zhifei2017cvpr}, CIFAR-10 \cite{krizhevsky2009learning}, and Fashion MNIST (FMNIST) \cite{xiao2017fashion}.

\begin{enumerate}
    \item UTK Face \cite{zhifei2017cvpr}. It consists of more than 20,000 facial images of 48x48 pixels resolution. The experiments consider two learning tasks: 
    (1) The first splits the data into five ethnicities:  White, Black, Asian, Indian, and Others.  
    (2) The second splits the data into nine age bins: under-ten years old, 10-14, 15-19, 20-24, 25-29, 30-39, 40-49, 50-59, and over 59 years old.
    The classes are not uniformly distributed per number of groups and do not contain the same number of images in each group. 
    An 80/20 train-test split is performed.

    \item CIFAR-10 \cite{krizhevsky2009learning}. It consists of 60,000 32x32 coloured images belonging to  10 classes, with 6000 images per class. The training set has 50,000 images while the test set has 10,000 images.
    
    \item Fashion MNIST(FMNIST) \cite{xiao2017fashion}. It consists of 60,000 28x28 gray-colored images belonging to 10 classes, with 6000 images per class. The training set has 50,000 images while the test set has 10,000 images.
\end{enumerate}

\paragraph{Network architectures.} The experiments consider three network architectures of increasing complexity:
\begin{enumerate}
    \item A CNN consisting of 3 convolutions layers followed by 3 fully connected layers. 
    \item A ResNet 50 network \cite{he2016deep} (over 23 million parameteres).
    \item A VGG-13 network \cite{simonyan2014very} (138 million parameters).
\end{enumerate}

We use CNN to provide some examples in the main text, but in this Appendix we will focus on the ResNet and VGG networks.  
\paragraph{Fair models and parameter settings.}
For penalty-based methods the experiments vary the range of fairness parameter $\lambda \in [0,2]$. We note that larger $\lambda$ values may have a detrimental effects to the models accuracy, which, we believe, limit the applicability of the fairness methods in real use cases, thus we focus on these more realistic scenarios. 
However, we also note that appropriate choices of hyperparameters $\lambda$ will necessarily depend on the task and architecture at hand and can be used to balance the trade-off between accuracy and fairness. 

For group-loss focused methods, the experiments consider $q \in [0,2]$ for similar reasons as those stated above.

The set of hyper-parameters, learning rate (lr), batch-size (bs), and number of training epochs (epochs) adopted, for each dataset and architecture is reported in Table \ref{tab:setting}. 

\begin{table}[h!]
\centering
\begin{tabular}{c| c c c} 
 \toprule
 Dataset & lr & bs & epochs \\ [0.5ex] 
 \midrule
 UTK Face& 1e-3 & 32 & 70 \\ 
 \hline
 CIFAR-10 & 1e-2 & 32 & 200 \\
 \hline
 FMNIST & 1e-2 & 32 & 50 \\
 \bottomrule
\end{tabular}
\caption{Hyperparameters settings for each datset.}
\label{tab:setting}
\end{table}

For each setting, the experiments report the average results of 10 runs, each initializing the models parameters using a different random seed.

\paragraph{Adversarial attacks.} The experiments also consider two classes of adversarial attacks to test the model robustness: (1) The $l_{\infty}$ RFGSM attacks \cite{tramer2017ensemble} and (2) The $l_2$ PGD attacks \cite{madry2017towards}. The experiments adopt the implementations reported in the  Python package \emph{torchattacks} \cite{kim2020torchattacks}.

\paragraph{Code and Preprocessing.} Follow standard setting, the range of the pixel values was normalized in $[0,1]$ for all datasets adopted.

All codes were written in  Python 3.7 and in Pytorch 1.5.0. %All fair models were implemented by ourselves. 
The library \emph{torchattacks} \cite{kim2020torchattacks} was adopted to generate different adversarial attacks. The repository contained the dataset and implementation will be released publicly upon paper acceptance.

\section{Additional experiments}
\label{app:experiments}

This section describes additional experiments to further support the claims reported in the main paper. In particular, the experiments report results for deeper networks (e.g., ResNet-50) and for additional fairness methods and adversarial attacks.

\subsection{Fairness impacts on the decision boundary}
\label{sec:app_exp_db}
Recall, from Theorem \ref{thm:6.5}, that fairness reduces the average distance of the testing samples to the decision boundary. As a consequence the fair classifiers are more vulnerable against the adversarial attacks than the natural (unfair) classifiers. This section provides additional evidence to support this claim on a high-dimensional model (ResNet 50) and using both penalty based methods additional and group-loss focused methods (see Section \ref{app:fair_models}) to derive a fair classifier.

\paragraph{Penalty-based methods.}
Figure \ref{fig:resnet_dist_bndry_reg} and Figure \ref{fig:vgg_dist_bndry_reg} summarize the results obtained by a  penalty-based fair model executed on different benchmark datasets. 
The experiments again report a consistent trend: as more fairness is enforced (increasing the values $\lambda$ ), 
the natural errors (left plots) generally increase while both the fairness violations (middle plots) and the average distance to the decision boundary (right plots) decrease. Recall that the latter is a proxy for measuring the model robustness: the closer are samples to the decision boundary the less robust is a model. Thus the previous plots show that robustness decreases as fairness increase.

\begin{figure}[tb]
%\captionsetup{justification=centering}
\centering
\begin{subfigure}[b]{0.45\textwidth}
\includegraphics[width = 1.0\linewidth]{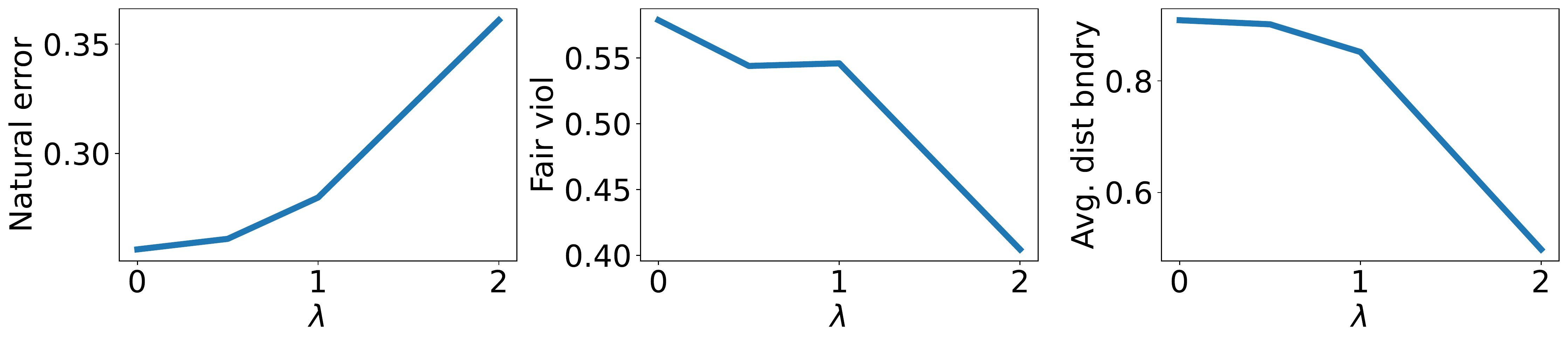}
\caption{UTKFace ethnicity dataset}
\end{subfigure}
\begin{subfigure}[b]{0.45\textwidth}
\includegraphics[width = 1.0\linewidth]{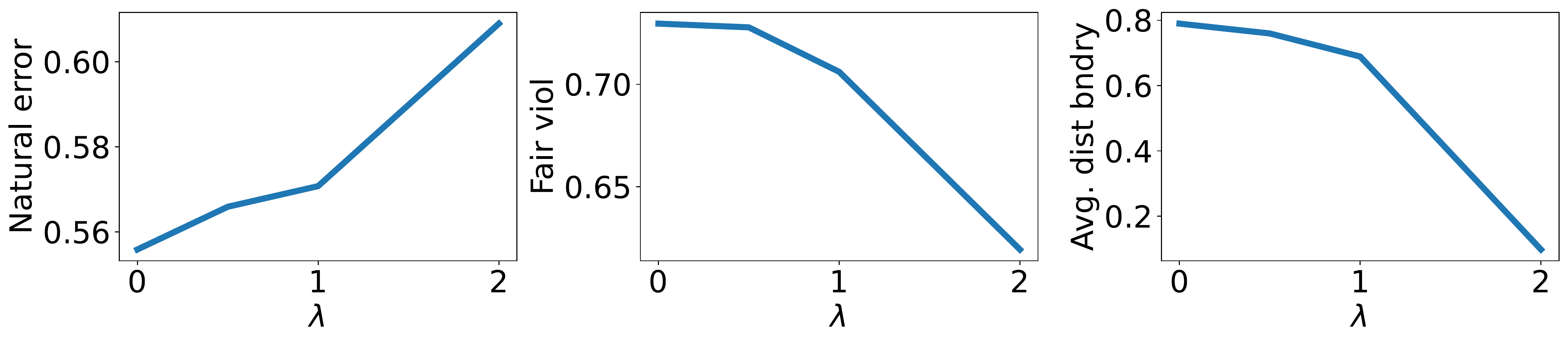}
\caption{UTKFace age bins dataset}
\end{subfigure}
%\begin{subfigure}[b]{0.45\textwidth}
% \includegraphics[width = 1.0\linewidth]{dist_bndry_fair_clf_CIFAR_ResNet50.pdf}
% \caption{CIFAR-10 dataset}
%\end{subfigure}
\begin{subfigure}[b]{0.45\textwidth}
\includegraphics[width = 1.0\linewidth]{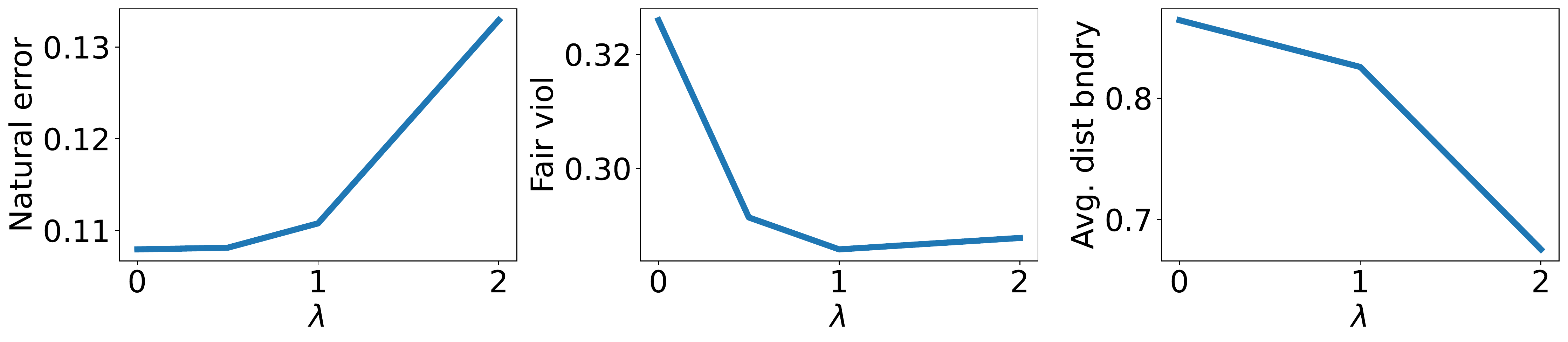}
\caption{FMNIST dataset}
\end{subfigure}
\caption{Natural errors (left), fairness violations (middle), and average distance to the decision boundary (right) for different datasets when varying the fairness parameter $\lambda$ of \textbf{penalty-based  methods} on ResNet-50 networks}
\label{fig:resnet_dist_bndry_reg}
\end{figure}

\begin{figure}[tb]
%\captionsetup{justification=centering}
\centering
\begin{subfigure}[b]{0.45\textwidth}
\includegraphics[width = 1.0\linewidth]{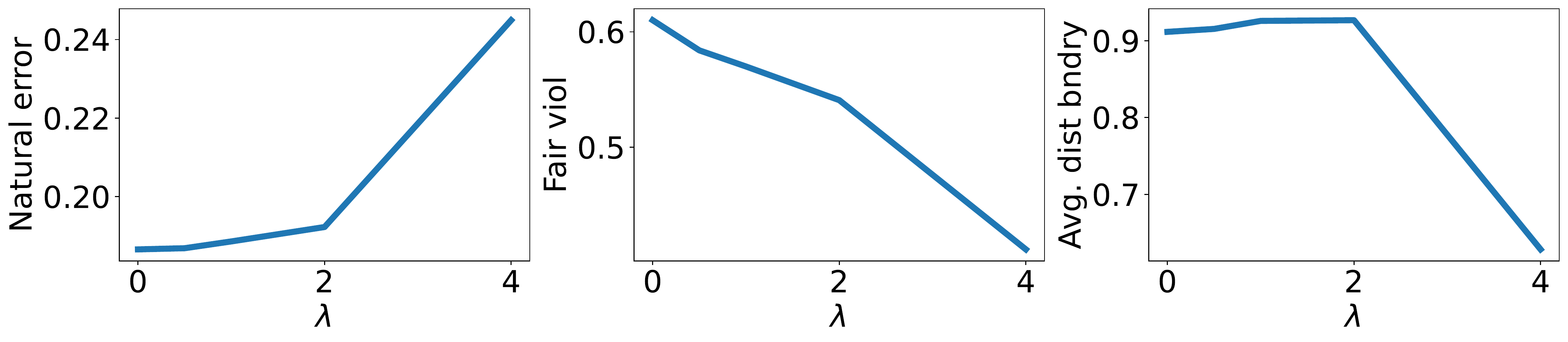}
\caption{UTKFace ethnicity dataset}
\end{subfigure}
\begin{subfigure}[b]{0.45\textwidth}
\includegraphics[width = 1.0\linewidth]{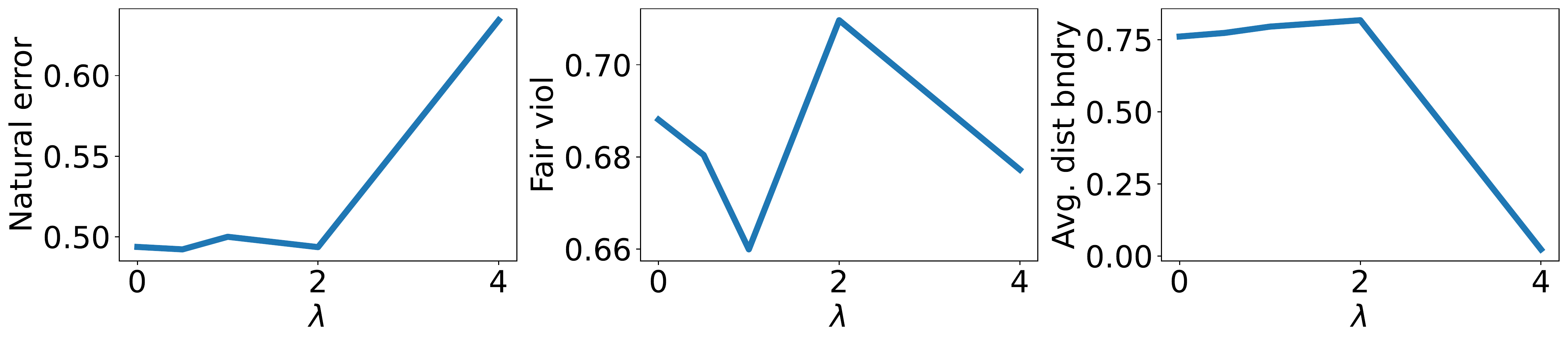}
\caption{UTKFace age bins dataset}
\end{subfigure}
%\begin{subfigure}[b]{0.45\textwidth}
% \includegraphics[width = 1.0\linewidth]{dist_bndry_fair_clf_CIFAR_ResNet50.pdf}
% \caption{CIFAR-10 dataset}
%\end{subfigure}
\caption{Natural errors (left), fairness violations (middle), and average distance to the decision boundary (right) for different datasets when varying the fairness parameter $\lambda$ of \textbf{penalty-based  methods} on VGG 13 networks}
\label{fig:vgg_dist_bndry_reg}
\end{figure}

\paragraph{Group-loss focused methods}
A similar setting is reported for a model satisfying fairness using the group-loss focused method described in section \ref{app:fair_models}. 
This method maximizes the worst group accuracy, which, in turn, attempts at equalizing the accuracy across groups. 
Figure \ref{fig:q_learn_dist_bndry} reports the results obtained using VGG-13 and the UTK-Face dataset. The results again illustrate similar trends: as the fairness parameter $q$ increases, the natural errors (right) tend to increase while both the fairness violations (middle) and the average distance to the decision boundary (left) decrease. 
Notice that small enough $q$-values may also act as a regularizer and have a beneficial effect toward the natural error, as observed for the UTK-age bin task (bottom-left plot). 

% Figure \ref{fig:resnet_dist_bndry_reg} summarizes the results obtained by running penalty-based fair models on different benchmark datasets. We again see a consistently clear trend that : as more fairness is enforced (by using large $\lambda$), the natural errors (left plots) while can increase, the fairness violation (middle plots) decrease, and the average distance to the decision boundary (right plots) reduces accordingly.

\begin{figure}[t]
 \centering
\includegraphics[width=\linewidth]{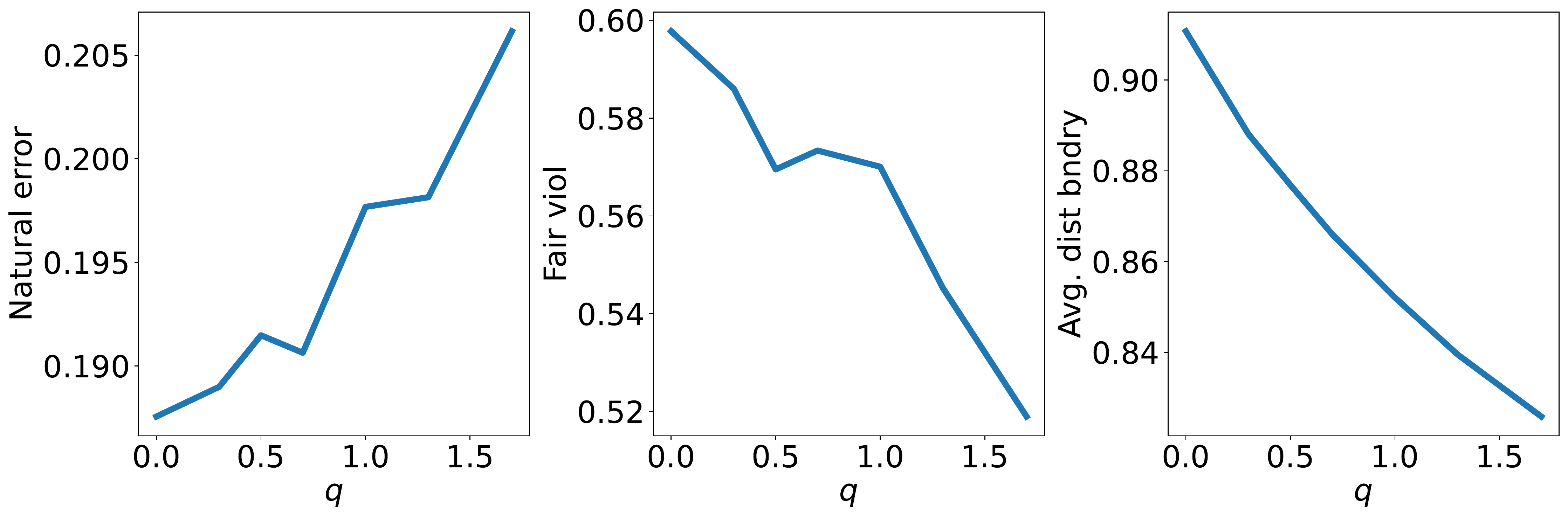}
\includegraphics[width=\linewidth]{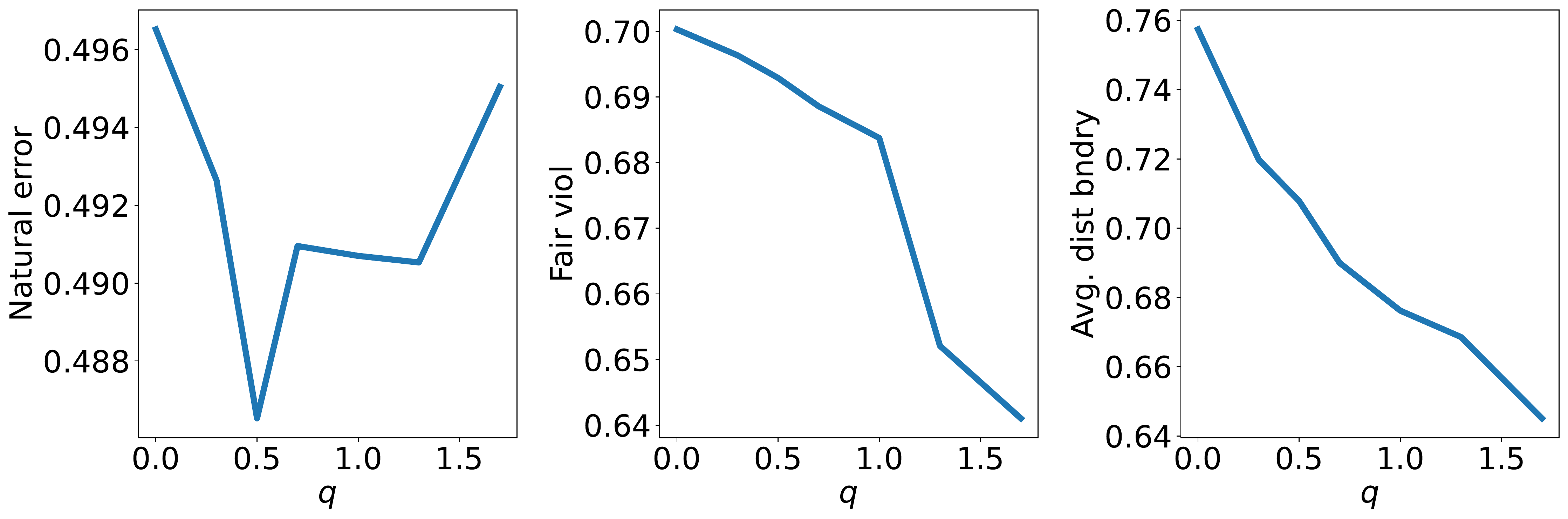}
\caption{Natural errors, fairness violations, and average distance to the decision boundary for the UTK-Face {\sl ethnicity} (top), 
UTK-Face {\sl age bins} (bottom) datasets when varying the fairness parameter $q$ of \textbf{group-loss focused methods} on VGG-13 networks. 
}
\label{fig:q_learn_dist_bndry}
\end{figure}

\paragraph{}

\subsection{Boundary errors increase as fairness decreases}

% \textcolor{red}{we only have group-focused method on this section since there is no mitigation solution for this fair solver.}
This section provides additional experiments to illustrating that the result of the paper ( \emph{fairness increases the adversarial vulnerability}) is invarant across different fair classifier implementations. The experiments adopt  the group-loss focused methods for different values of the fairness parameter $q$. Notice that a natural classifier is obtained when $q=0$. 
Figure \ref{fig:q_learn_rfgsm} displays the natural (left) and boundary (center) errors attained under different level of RFGSM attacks (regulated by parameter $\epsilon$) and the fairness violation (right) on CIFAR (top) FMNIST (middle) and UTK (bottom) datasets. Notice how increasing $q$ to large enough values typically decreases the fairness violations. However, this comes at the cost of increasing the natural error and the boundary errors, which, in turn, exacerbate the robust errors. 

% {\color{blue}{@Cuong: please complete this section}}

\begin{figure}[tb]
%\captionsetup{justification=centering}
\centering
\begin{subfigure}[b]{0.45\textwidth}
\includegraphics[width = 1.0\linewidth]{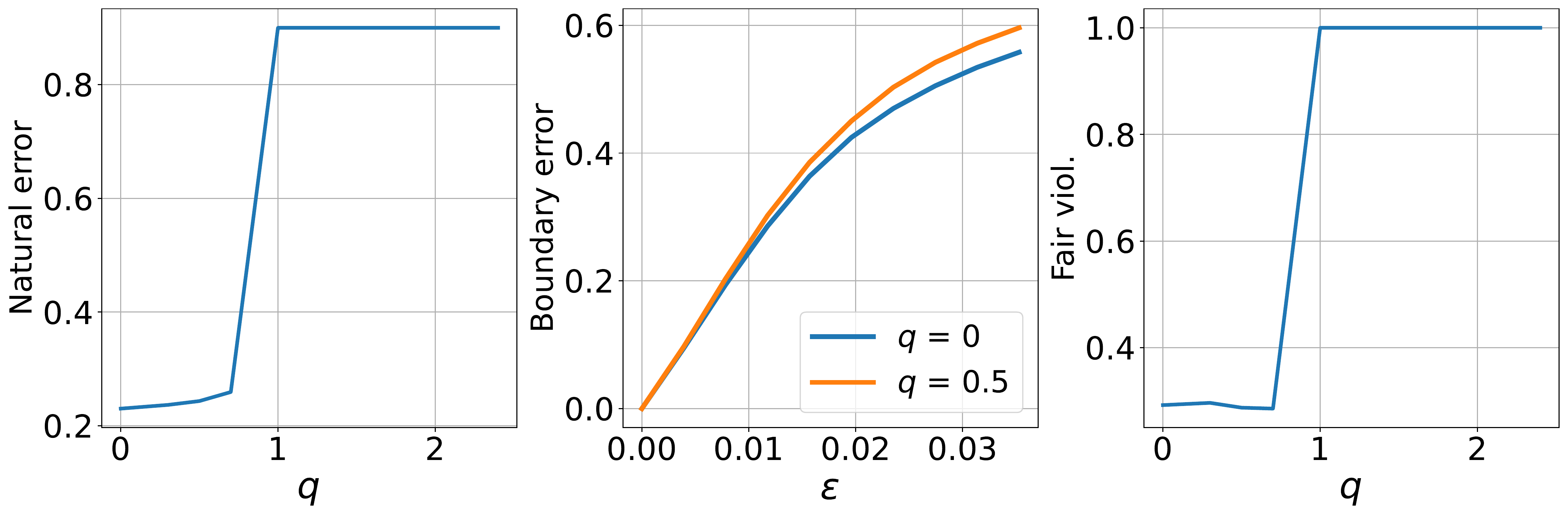}
\caption{CIFAR dataset}
\end{subfigure}
\begin{subfigure}[b]{0.45\textwidth}
\includegraphics[width = 1.0\linewidth]{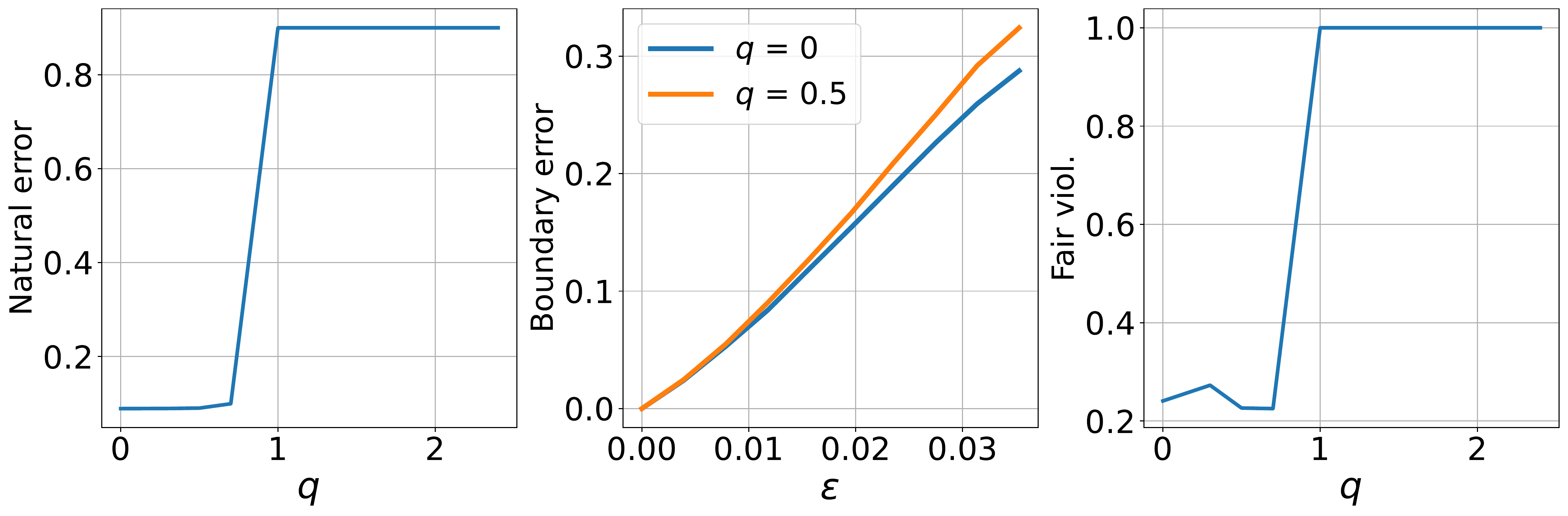}
\caption{FMNIST dataset}
\end{subfigure}
\begin{subfigure}[b]{0.45\textwidth}
\includegraphics[width = 1.0\linewidth]{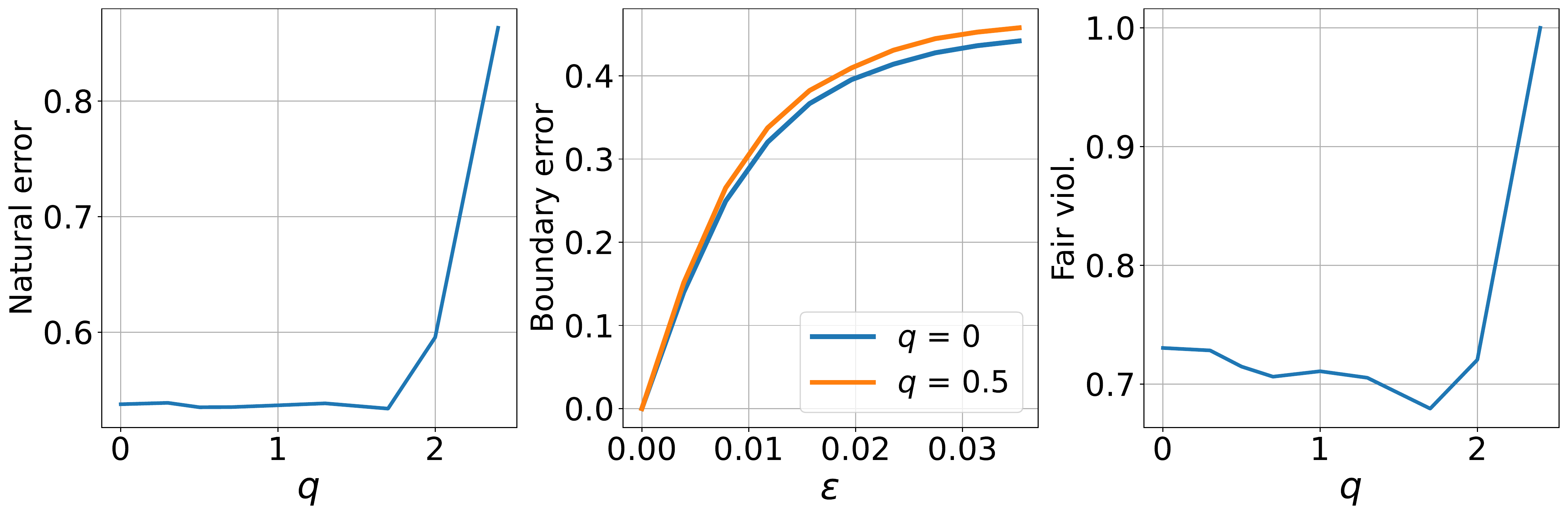}
\caption{UTK age bins dataset}
\end{subfigure}
\caption{Natural errors (left), boundary error  under different RFGSM attacks (middle), and fairness violation (right) of \textbf{group-focused   methods} on ResNet-50 networks}
\label{fig:q_learn_rfgsm}
\end{figure}

\subsection{A Mitigating Solution with Bounded Losses}

\paragraph{More intuition why bounded loss works.}
This section first provides more intuition about why bounded loss functions, such as the ramp loss adopted in the main text, can help reducing the impacts of fairness towards robustness. 
To guide the intuitions, we will refer to Figure \ref{fig:ill_loss}, which plots the graph functions of the following loss function for binary classification tasks:
\begin{itemize}
    \item Ramp loss \cite{goh2016satisfying,collobert2006trading}, defined by: 
    \[\ell(f_{\theta}(X), Y) = \min( 1, \max(0, 1- Y f_{\theta}(X))).\] 
    
    \item Log-loss \cite{hastie2009elements}, defined by: 
    \[\ell(f_{\theta}(X), Y) = \log(1+ \exp(-Y f_{\theta}(X))).\]
    
    \item Exponential loss \cite{hastie2009elements}, defined by: 
    \[\ell(f_{\theta}(X), Y) =  \exp(-Y f_{\theta}(X))).\]
\end{itemize}

\begin{figure}[t]
 \centering
\includegraphics[width=0.7\linewidth]{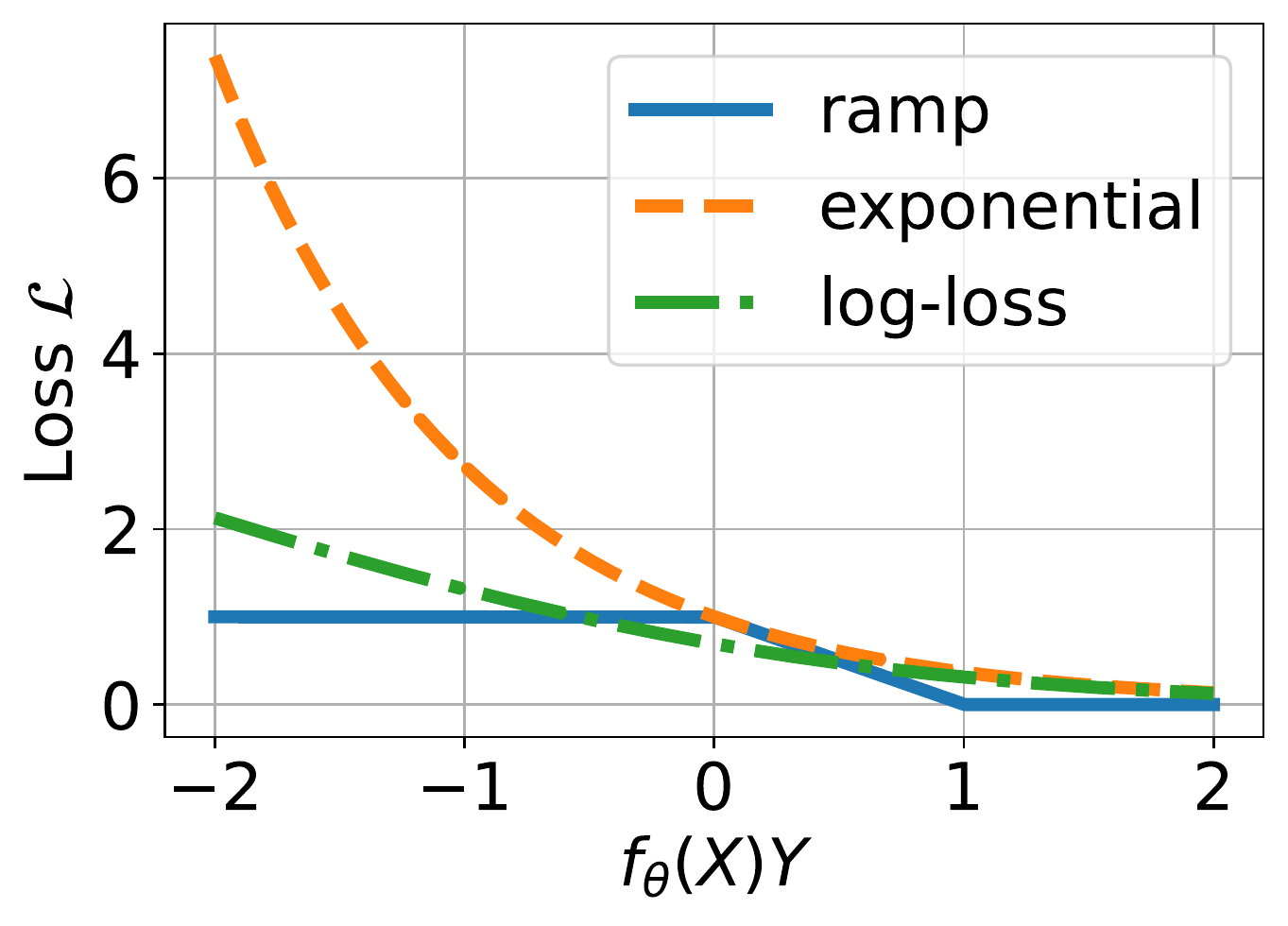}
\caption{Illustration of different loss functions}.
\label{fig:ill_loss}
\end{figure}

Notice that, as illustrated in Figure \ref{fig:ill_loss}, unbounded losses (such as log-loss or exponential loss) amplify the classification errors of misclassified samples $X$, in a way proportionately to the distance of $X$ from the decision boundary. The misclassification of a sample is captured by the expression  $f_{\theta}(X)Y<0$, while the distance to the decision boundary by expression $|f_{\theta}(X) Y|$ (x-axis). 
Notably these losses are unbounded. 

Now notice that a consequence of training a fair classifier is to push its decision boundary to dense region of the advantaged group. This is because such classifier attempts at aligning the groups classification losses, i.e., $\mathbb{E}[ \ell(f_{\theta}(X),Y) |Y=-1] = \mathbb{E}[ \ell(f_{\theta}(X),Y) |Y=1]$.
When the decision boundary is moved closer to the input samples, the classifier will inevitably become less robust to small perturbations of adversarial noise. 

On the contrary, using a bounded loss function, such as ramp loss, during fair learning, can greatly reduce the impact produced by such (outlier) samples. 

\paragraph{Effectiveness of the mitigation solution.} 
Next, this section provides additional experiments to demonstrate the effectiveness of the proposed solution to find a good tradeoff between fairness and robustness. The generality of the proposed solution is demonstrated across several architectures  (VGG-13 and  ResNet 50) and adversarial attacks ($\ell_{\infty}$ RFGSM and $\ell_2$ PGD attacks under different level of attacks $\epsilon$).

In summary, the proposed mitigation solution---which uses a bounded loss--- result in classifiers that, in the vast majority of the cases,  are fairer and more robust that those produced by models using a standard (cross entropy) loss.

\paragraph{VGG-13 and $\ell_\infty$ RFGSM attacks.}
Figures \ref{fig:bnd_err_cifar_vgg} and \ref{fig:bnd_err_utk_vgg} 
report the boundary errors attained using RFGSM attacks on fair ($\lambda>0$) and regular ($\lambda=0$) classifiers on CIFAR and UTK datasets, respectively and using a VGG 13. 
The plots compare models obtained using a cross entropy loss (top plots) and those using a Ramp loss (bottom plots). 

\begin{figure*}[t]
\centering
\includegraphics[width=0.9\linewidth]{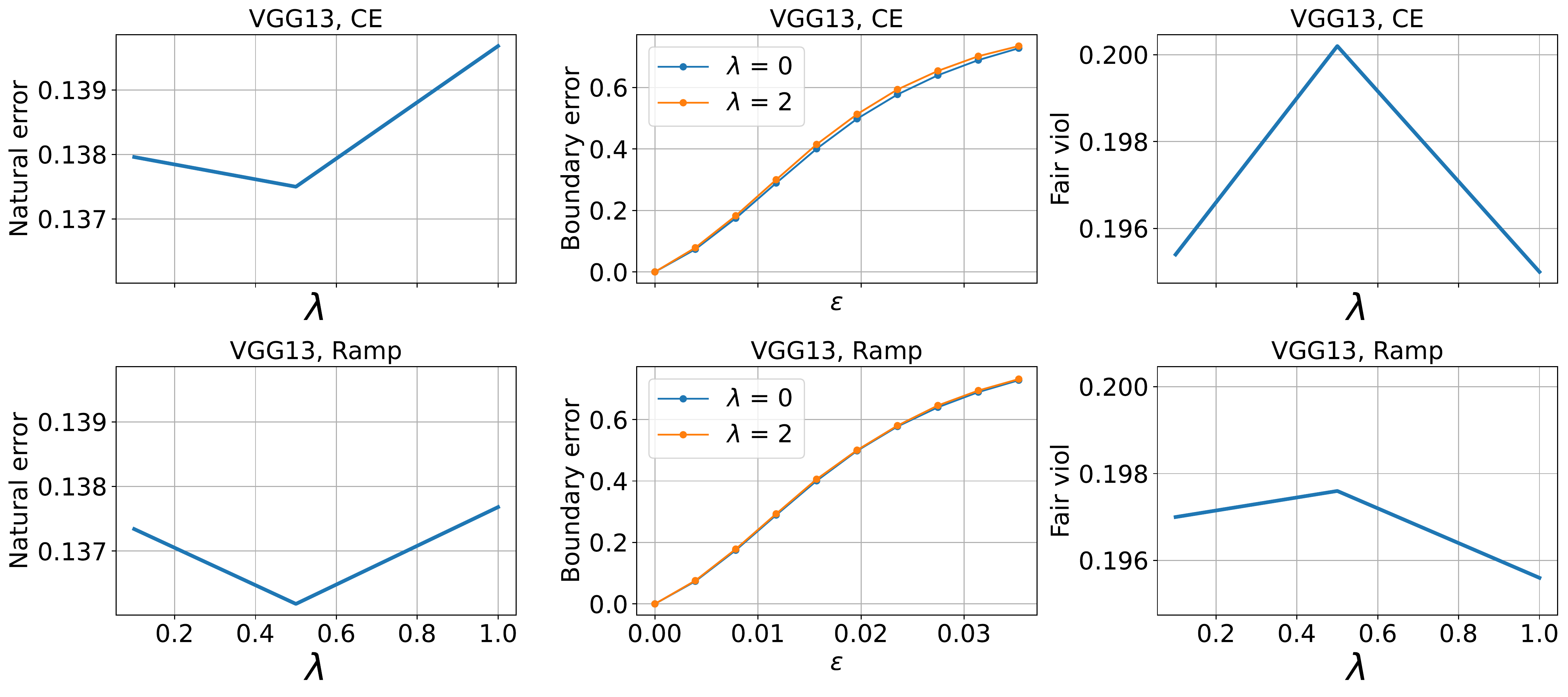}
\caption{{\bf Top}: Natural errors (left) and fairness violations (right) on the CIFAR-10 {\sl ethnicity} task at varying of the fairness parameters $\lambda$. The middle plots compares the robustness of fair ($\lambda > 0)$ vs.~natural ($\lambda=0$) classifiers to different RFGSM attack levels. 
{\bf Bottom}: Mitigating solution using the bounded Ramp loss. The base classifiers are VGG-13.}
\label{fig:bnd_err_cifar_vgg}
\end{figure*}

\begin{figure*}[t]
\centering
\includegraphics[width=0.9\linewidth]{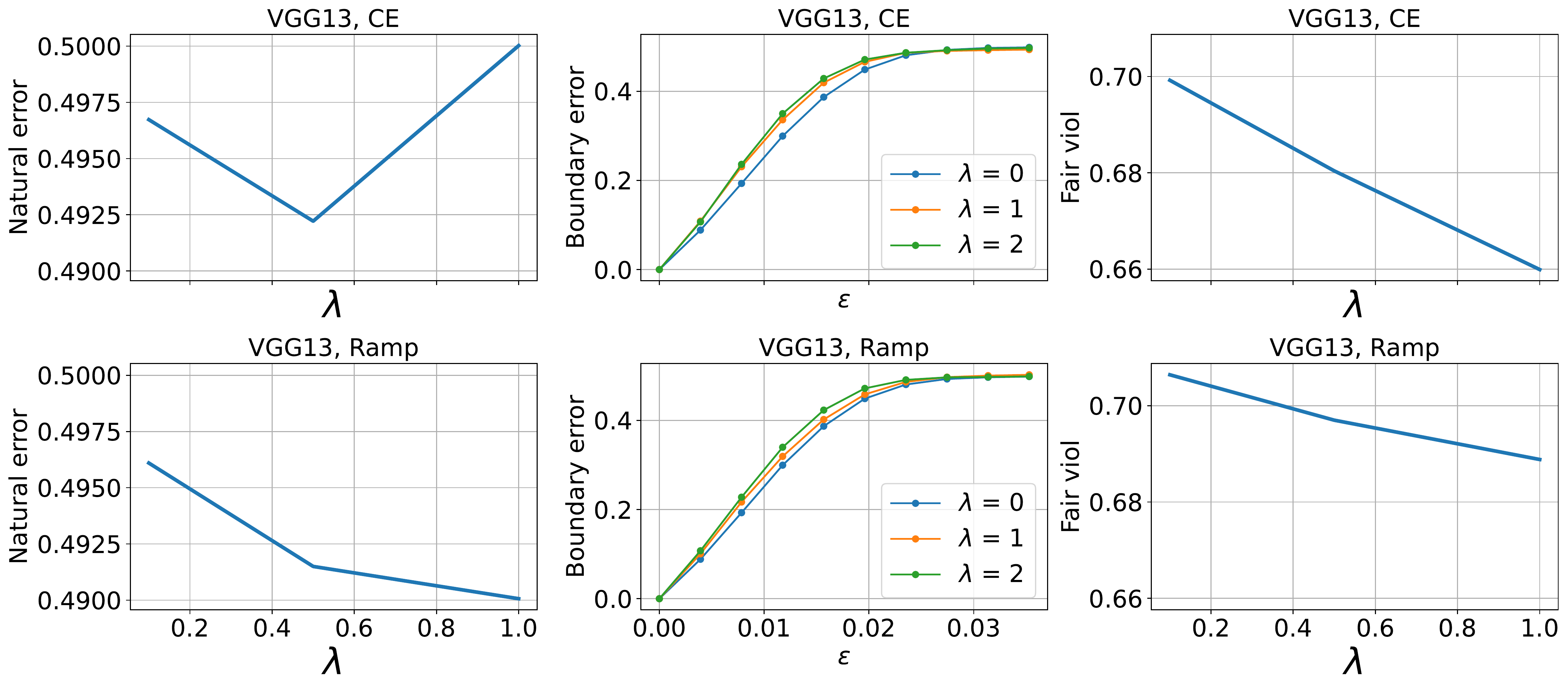}
\caption{{\bf Top}: Natural errors (left) and fairness violations (right) on the UTKFace  {\sl age bins} task at varying of the fairness parameters $\lambda$. The middle plots compares the robustness of fair ($\lambda > 0)$ vs.~natural ($\lambda=0$) classifiers to different RFGSM attack levels. 
{\bf Bottom}: Mitigating solution using the bounded Ramp loss. The base classifiers are VGG-13.}
\label{fig:bnd_err_utk_vgg}
\end{figure*}

\paragraph{VGG-13 and $\ell_2$ PGD attacks.}
Figures \ref{fig:bnd_err_utk_race_vgg_pgd} and \ref{fig:bnd_err_utk_age_vgg_pgd}.
report the boundary errors attained using a PGD attacks on fair ($\lambda>0$) and regular ($\lambda=0$) classifiers on UTK datasets and using a VGG 13. Once again, the plots compare models obtained using a cross entropy loss (top plots) and those using a Ramp loss (bottom plots).

\begin{figure*}[t]
\centering
\includegraphics[width=0.9\linewidth]{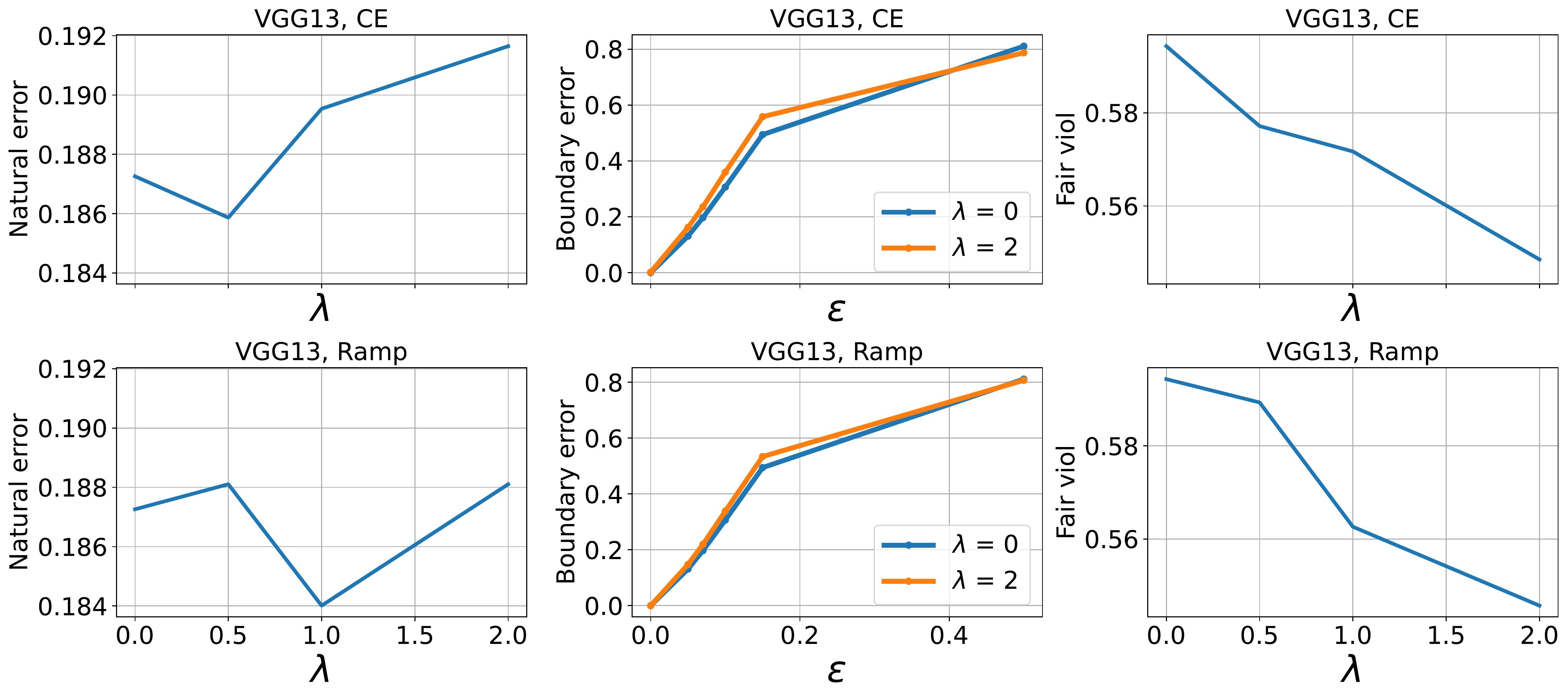}
\caption{{\bf Top}: Natural errors (left) and fairness violations (right) on the UTKFace  {\sl ethnicity} task at varying of the fairness parameters $\lambda$. The middle plots compares the robustness of fair ($\lambda > 0)$ vs.~natural ($\lambda=0$) classifiers to different $l_2$ PGD attack levels. 
{\bf Bottom}: Mitigating solution using the bounded Ramp loss. The base classifier are VGG-13.}
\label{fig:bnd_err_utk_race_vgg_pgd}
\end{figure*}

\begin{figure*}[t]
\centering
\includegraphics[width=0.9\linewidth]{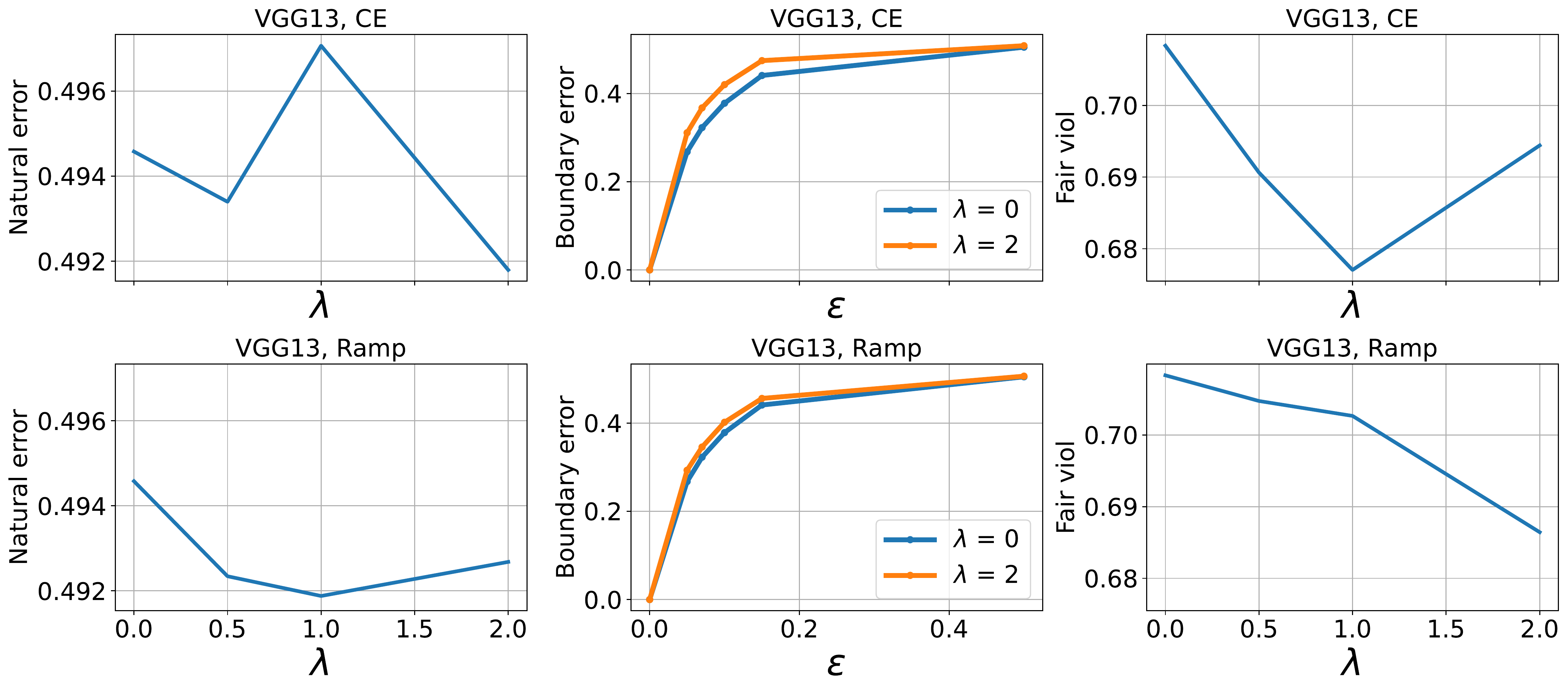}
\caption{{\bf Top}: Natural errors (left) and fairness violations (right) on the UTKFace  {\sl age bins} task at varying of the fairness parameters $\lambda$. The middle plots compares the robustness of fair ($\lambda > 0)$ vs.~natural ($\lambda=0$) classifiers to different $l_2$ PGD attack levels. 
{\bf Bottom}: Mitigating solution using the bounded Ramp loss. The base classifiers are VGG-13.}
\label{fig:bnd_err_utk_age_vgg_pgd}
\end{figure*}

\paragraph{ResNet 50 and $\ell_\infty$ RFGSM attacks}
Figures \ref{fig:bnd_err_cifar_resnet_rfgsm} and  \ref{fig:bnd_err_fmnist_resnet_rfgsm}.
report the boundary errors attained using a RFGSM attacks on fair ($\lambda>0$) and regular ($\lambda=0$) classifiers on UTK Face and FMNIST datasets, respectively and using a ResNet50. Once again, the plots compare models obtained using a cross entropy loss (top plots) and those using a Ramp loss (bottom plots).

\begin{figure*}[t]
\centering
\includegraphics[width=0.9\linewidth]{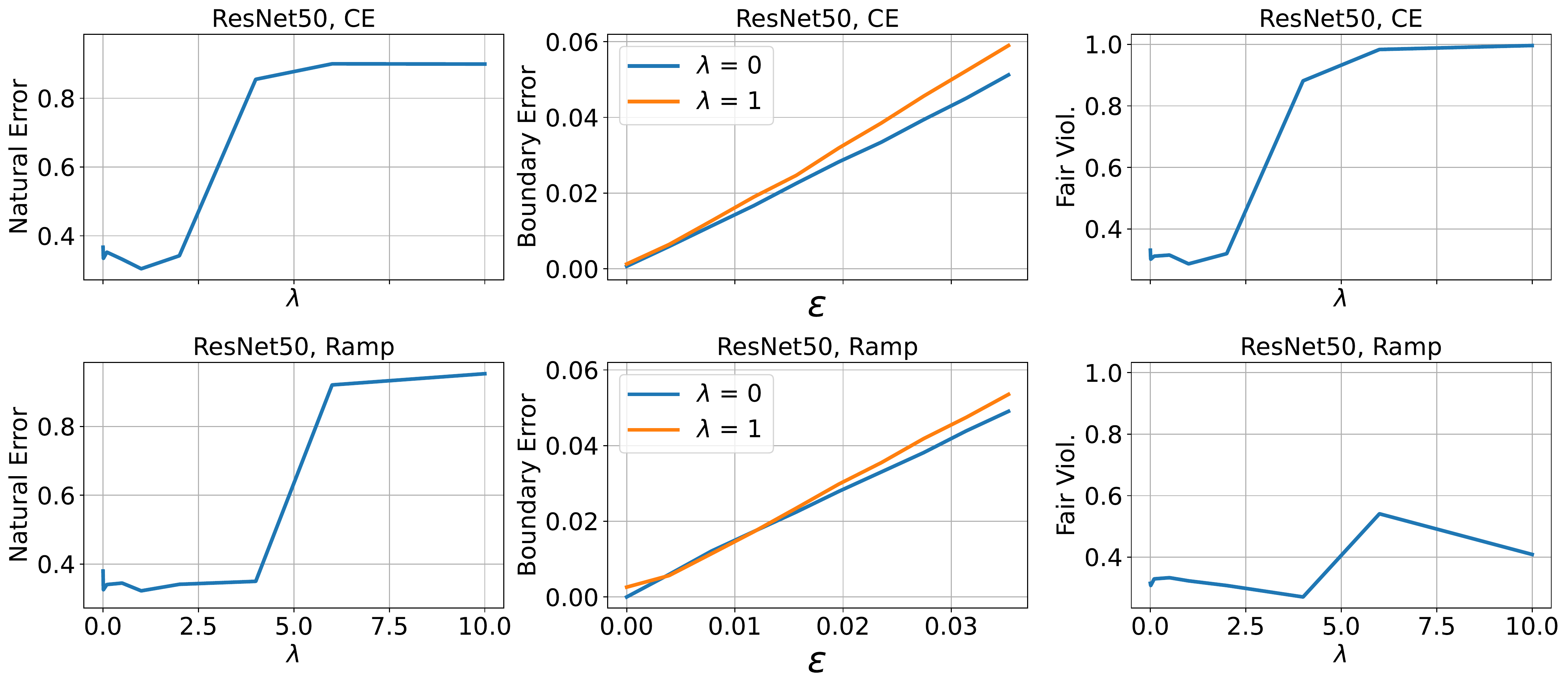}
\caption{{\bf Top}: Natural errors (left) and fairness violations (right) on the UTKFace  {\sl ethnicity} task at varying of the fairness parameters $\lambda$. The middle plots compares the robustness of fair ($\lambda > 0)$ vs.~natural ($\lambda=0$) classifiers to different $l_{\infty}$ RFGSM attack levels. 
{\bf Bottom}: Mitigating solution using the bounded Ramp loss. The base classifier are  Res Net 50.}
\label{fig:bnd_err_cifar_resnet_rfgsm}
\end{figure*}

\begin{figure*}[t]
\centering
\includegraphics[width=0.9\linewidth]{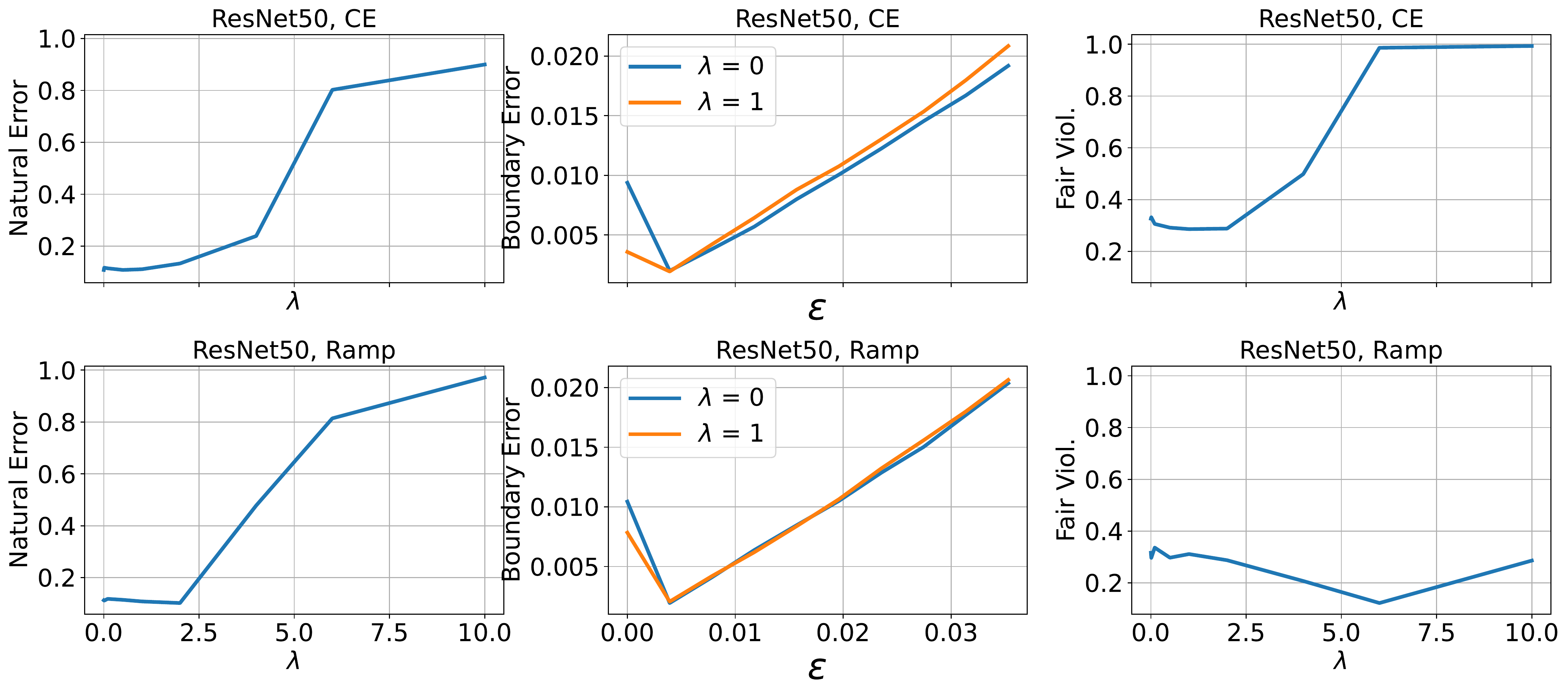}
\caption{{\bf Top}: Natural errors (left) and fairness violations (right) on the FMNIST task at varying of the fairness parameters $\lambda$. The middle plots compares the robustness of fair ($\lambda > 0)$ vs.~natural ($\lambda=0$) classifiers to different $l_{\infty}$ RFGSM attack levels. 
{\bf Bottom}: Mitigating solution using the bounded Ramp loss. The base classifier are Res Net 50.}
\label{fig:bnd_err_fmnist_resnet_rfgsm}
\end{figure*}

\paragraph{ResNet 50 and $\ell_2$  PGD attacks}
Figures \ref{fig:bnd_err_cifar_resnet_pgd} and \ref{fig:bnd_err_fmnist_resnet_pgd} 
report the boundary errors attained using a PGD attack on fair ($\lambda>0$) and regular ($\lambda=0$) classifiers on CIFAR10 and FMNIST datasets, respectively, and using a ResNet50. Once again, the plots compare models obtained using a cross entropy loss (top plots) and those using a Ramp loss (bottom plots).

% modify here 
\begin{figure*}[t]
\centering
\includegraphics[width=0.9\linewidth]{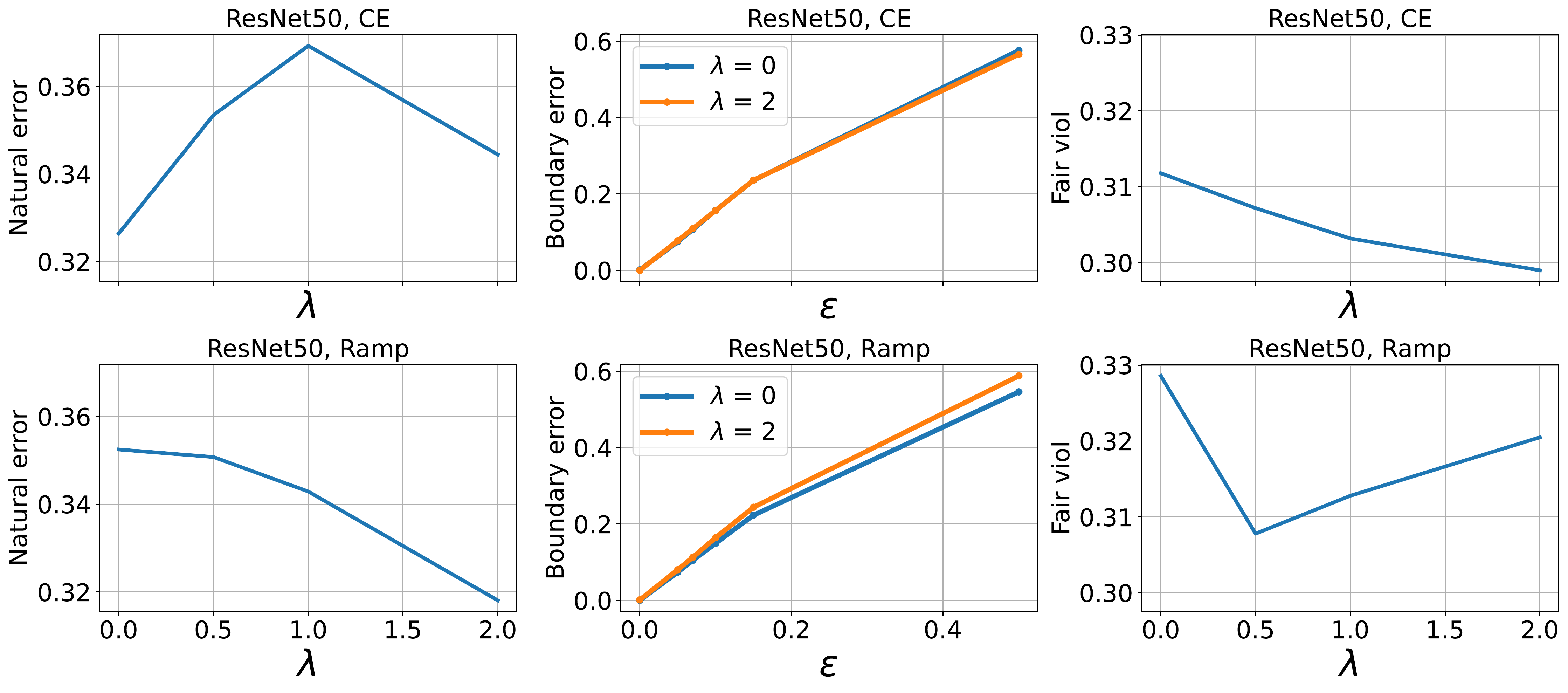}
\caption{{\bf Top}: Natural errors (left) and fairness violations (right) on the CIFAR 10 task at varying of the fairness parameters $\lambda$. The middle plots compares the robustness of fair ($\lambda > 0)$ vs.~natural ($\lambda=0$) classifiers to different $l_2$ PGD attack levels. 
{\bf Bottom}: Mitigating solution using the bounded Ramp loss. The base classifiers are ResNet 50.}
\label{fig:bnd_err_cifar_resnet_pgd}
\end{figure*}

\begin{figure*}[t]
\centering
\includegraphics[width=0.9\linewidth]{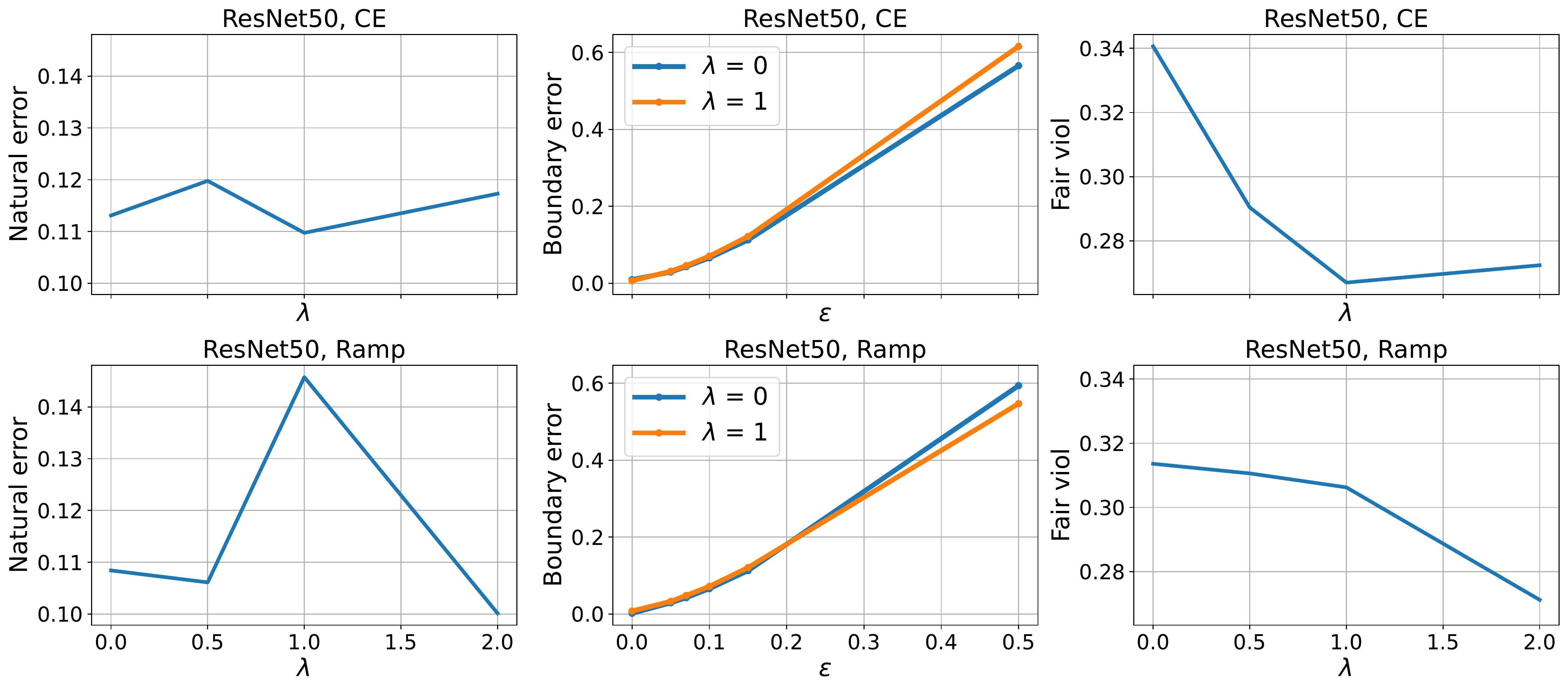}
\caption{{\bf Top}: Natural errors (left) and fairness violations (right) on the FMNIST  task at varying of the fairness parameters $\lambda$. The middle plots compares the robustness of fair ($\lambda > 0)$ vs.~natural ($\lambda=0$) classifiers to different $l_2$ PGD attack levels. 
{\bf Bottom}: Mitigating solution using the bounded Ramp loss. The base classifiers are ResNet 50.}
\label{fig:bnd_err_fmnist_resnet_pgd}
\end{figure*}

\end{document}